\documentclass{article}
\usepackage[utf8]{inputenc}
\pdfoutput=1
\usepackage[sc,osf]{mathpazo}
\usepackage[margin=1.1in,letterpaper]{geometry}
\usepackage[sort&compress,numbers]{natbib}
\usepackage[colorlinks=true,citecolor=blue,breaklinks]{hyperref}
\usepackage[hyphenbreaks]{breakurl} 

\linespread{1.025}              

\makeatletter
\newlength\aftertitskip     \newlength\beforetitskip
\newlength\interauthorskip  \newlength\aftermaketitskip

\setlength\aftertitskip{0.1in plus 0.2in minus 0.2in}
\setlength\beforetitskip{0.05in plus 0.08in minus 0.08in}
\setlength\interauthorskip{0.08in plus 0.1in minus 0.1in}
\setlength\aftermaketitskip{0.3in plus 0.1in minus 0.1in}

\def\maketitle{\par
 \begingroup
   \def\thefootnote{\fnsymbol{footnote}}
   \def\@makefnmark{\hbox to 4pt{$^{\@thefnmark}$\hss}}
   \@maketitle \@thanks
 \endgroup
\setcounter{footnote}{0}
 \let\maketitle\relax \let\@maketitle\relax
 \gdef\@thanks{}\gdef\@author{}\gdef\@title{}\let\thanks\relax}

\def\@startauthor{\noindent \normalsize\bf}
\def\@endauthor{}
\def\@starteditor{\noindent \small {\bf Editor:~}}
\def\@endeditor{\normalsize}
\def\@maketitle{\vbox{\hsize\textwidth
 \linewidth\hsize \vskip \beforetitskip
 {\begin{center} \LARGE\@title \par \end{center}} \vskip \aftertitskip
 {\def\and{\unskip\enspace{\rm and}\enspace}%
  \def\addr{\small\it}%
  \def\email{\hfill\small\tt}%
  \def\name{\normalsize\bf}%
  \def\AND{\@endauthor\rm\hss \vskip \interauthorskip \@startauthor}
  \@startauthor \@author \@endauthor}
}}

\makeatother

\pdfoutput=1                    

\usepackage{floatrow} 
\newfloatcommand{capbtabbox}{table}[][\FBwidth]

\usepackage{graphicx}
\usepackage{amsmath,amsthm,amssymb,bm} 
\usepackage{amsfonts}
\usepackage{xspace}
\usepackage{array}
\usepackage{enumerate}
\usepackage{algorithm}
\usepackage{algorithmic}
\usepackage{stmaryrd}
\usepackage{booktabs}
\usepackage{caption}
\usepackage{subcaption}
\usepackage{multirow}

\numberwithin{equation}{section}

\newcommand{\inv}[1]{{#1}^{-1}}

          \newcommand{\ck}{\mathcal{K}}    \newcommand{\co}{\mathcal{O}}         \newcommand{\cy}{\mathcal{Y}} 


\newcommand{\sgn}{\textrm{sgn}}





\theoremstyle{plain}
\newtheorem{theorem}{Theorem}
\newtheorem{corollary}[theorem]{Corollary}

\newtheorem{lemma}[theorem]{Lemma}

\theoremstyle{definition}

\theoremstyle{remark}
\newtheorem{remark}[theorem]{Remark}

\newcommand{\refalgo}[1]{Alg.~\ref{#1}}

\newcommand{\reftab}[1]{Tab.~\ref{#1}}

\newcommand{\refthm}[1]{Thm.~\ref{#1}}
\newcommand{\reflem}[1]{Lemma~\ref{#1}}
\newcommand{\refcor}[1]{Corr.~\ref{#1}}
\newcommand{\eps}{\varepsilon}


\newcommand{\leftgr}[1]{{#1}^{\text{lr}}} 
\newcommand{\rightgr}[1]{{#1}^{\text{rr}}} 
\newcommand{\lobatto}[1]{{#1}^{\text{lo}}} 

\newcommand{\nlsum}{\sum\nolimits}
\newcommand{\set}[1]{\left\lbrace #1\right\rbrace}

\newcommand{\dpp}{\textsc{Dpp}\xspace}

\newcommand{\dg}{\textsc{Dg}\xspace}

\title{Gauss quadrature for matrix inverse forms \\with applications}
\author{\name Chengtao Li \email{ctli@mit.edu}\\
  \name Suvrit Sra \email{suvrit@mit.edu}\\
  \name Stefanie Jegelka \email{stefje@csail.mit.edu}\\
  \addr{Massachusetts Institute of Technology, Cambridge, MA 02139} 
}

\begin{document}
\maketitle

\begin{abstract}
  We present a framework for accelerating a spectrum of machine learning algorithms that require computation of \emph{bilinear inverse forms} $u^\top A^{-1}u$, where $A$ is a positive definite matrix and $u$ a given vector. Our framework is built on Gauss-type quadrature and easily scales to large, sparse matrices. Further, it allows retrospective computation of lower and upper bounds on $u^\top A^{-1}u$, which in turn accelerates several algorithms. We prove that these bounds tighten iteratively and converge at a linear (geometric) rate. To our knowledge, ours is the first work to demonstrate these key properties of Gauss-type quadrature, which is a classical and deeply studied topic. We illustrate empirical consequences of our results by using quadrature to accelerate machine learning tasks involving determinantal point processes and submodular optimization, and observe tremendous speedups in several instances.
\end{abstract}

\section{Introduction}
Symmetric positive definite matrices arise in many areas in a variety of guises: covariances, kernels, graph Laplacians, or otherwise. A basic computation with such matrices is evaluation of the bilinear form $u^Tf(A)v$, where $f$ is a matrix function and $u$, $v$ are given vectors. If $f(A)=A^{-1}$, we speak of computing a \emph{bilinear inverse form (BIF)} $u^T\inv{A}v$. For example, with $u{=}v{=}e_i$ ($i^{\text{th}}$ canonical vector)  $u^Tf(A)v=(A^{-1})_{ii}$ is the $i^{\text{th}}$ diagonal entry of the inverse.

In this paper, we are interested in efficiently computing BIFs, primarily due to their importance in several machine learning contexts, e.g., evaluation of Gaussian density at a point, the Woodbury matrix inversion lemma, implementation of MCMC samplers for Determinantal Point Processes (\dpp), computation of graph centrality measures, and greedy submodular maximization (see Section~\ref{sec:motiv.app}).

When $A$ is large, it is preferable to compute $u^T\inv{A}v$ iteratively rather than to first compute $\inv{A}$ (using Cholesky) at a cost of $\co(N^3)$ operations. One could think of using conjugate gradients to solve $Ax=v$ approximately, and then obtain $u^T\inv{A}v=u^Tx$. But several applications require precise bounds on numerical estimates to $u^\top A^{-1}v$ (e.g., in MCMC based \dpp samplers such bounds help decide whether to accept or reject a transition in each iteration--see Section~\ref{sec:mcdpp}), which necessitates a more finessed approach.

Gauss quadrature is one such approach. Originally proposed in \citep{gauss1815methodus} for approximating integrals, Gauss- and \emph{Gauss-type quadrature} (i.e., Gauss-Lobatto \citep{lobatto1852lessen} and Gauss-Radau \citep{radau1880etude} quadrature) have since found application to bilinear forms including computation of  $u^T\inv{A}v$ \citep{bai1996some}. \citeauthor{bai1996some} also show that Gauss and (right) Gauss-Radau quadrature yield lower bounds, while Gauss-Lobatto and (left) Gauss-Radau yield upper bounds on the BIF $u^T\inv{A}v$.

However, despite its long history and voluminous existing work (see e.g.,~\citep{golub2009matrices}), our understanding of Gauss-type quadrature for matrix problems is far from complete. For instance, it is not known whether the bounds on BIFs improve with more quadrature iterations; nor is it known how the bounds obtained from Gauss, Gauss-Radau and Gauss-Lobatto quadrature compare with each other. \emph{We do not even know how fast the iterates of Gauss-Radau or Gauss-Lobatto quadrature converge.} 

\textbf{Contributions.} We address all the aforementioned problems and make the following  main contributions:
\begin{list}{--}{\leftmargin=1em}\vspace*{-7pt}
\setlength{\itemsep}{0pt}
\item We show that the lower and upper bounds generated by Gauss-type quadrature monotonically approach the target value~(Theorems~\ref{thm:lowbtwn} and \ref{thm:upbtwn}; \refcor{cor:monlow}). Furthermore, we show that for the same number of iterations, Gauss-Radau quadrature yields bounds superior to those given by Gauss or Gauss-Lobatto, but somewhat surprisingly all three share the same convergence rate. 
\item We prove linear convergence rates for Gauss-Radau and Gauss-Lobatto explicitly~(Theorems~\ref{thm:rrconv} and \ref{thm:lrconv}; \refcor{cor:loconv}).
\item We demonstrate implications of our results for two tasks: (i) scalable Markov chain sampling from a \dpp; and (ii) running a greedy algorithm for submodular optimization. In these applications, quadrature accelerates computations, and the bounds aid early stopping. 
\end{list}
\vspace{-8pt}
Indeed, on large-scale sparse problems our methods lead to even several orders of magnitude in speedup.

\paragraph{Related Work.} There exist a number of methods for efficiently approximating matrix bilinear forms. \citet{brezinski1999error} and \citet{brezinski2012estimations} use extrapolation of matrix moments and interpolation to estimate the 2-norm error of linear systems and the trace of the matrix inverse. \citet{fika2014estimates} extend the extrapolation method to BIFs and show that the derived one-term and two-term approximations coincide with Gauss quadrature, hence providing lower bounds. Further generalizations address $x^* f(A) y$ for a Hermitian matrix $A$ \citep{fika2015estimation}. In addition, other methods exist for estimating trace of a matrix function~\citep{bai1996bounds,brezinski2012estimations,fika2015stochastic} or diagonal elements of matrix inverse~\citep{bekas2007estimator,tang2012probing}.

Many of these methods may be applied to computing BIFs. But they do not provide intervals bounding the target value, just approximations. Thus, a black-box use of these methods may change the execution of an algorithm whose decisions (e.g., whether to transit in a Markov Chain) rely on the BIF value to be within a specific interval. Such changes can break the correctness of the algorithm.

Our framework, in contrast, yields iteratively tighter lower and upper bounds (Section~\ref{sec:main}), so the algorithm is guaranteed to make correct decisions (Section~\ref{sec:algos}).

\section{Motivating Applications}
\label{sec:motiv.app}
BIFs are important to numerous problems. We recount below several notable examples: in all cases, efficient computation of bounds on BIFs is key to making the algorithms practical.

\textbf{Determinantal Point Processes.}
A determinantal point process (\dpp) is a distribution over subsets of a set $\cy$ ($|\cy|=N$). In its \emph{L-ensemble} form, a \dpp uses a positive semidefinite kernel $L\in\mathbb{R}^{N\times N}$, and to a set $Y \subseteq \cy$ assigns probability $P(Y) \propto \det(L_Y)$ where $L_Y$ is the submatrix of $L$ indexed by entries in $Y$. If we restrict to $|Y| = k$, we obtain a $k$-\dpp. \dpp's are widely used in machine learning, see e.g., the survey~\citep{kulesza2012determinantal}. 

Exact sampling from a ($k$-)\dpp requires eigendecomposition of $L$ \citep{hough2005}, which is prohibitive. For large $N$, Metropolis Hastings (MH) or Gibbs sampling are preferred and state-of-the-art. Therein the core task is to compute transition probabilities -- an expression involving BIFs -- which are compared with a random scalar threshold. 

For MH \cite{belabbas2009spectral, anari2016monte}, the transition probabilities from a current subset (state) $Y$ to $Y'$ are $\min\{1,L_{u,u}-L_{u,Y}L_Y^{-1}L_{Y,u}\}$ for $Y'=Y\cup\{u\}$; and $\min\{1,L_{u,u}-L_{u,Y'}L_{Y'}^{-1}L_{Y',u}\}$ for $Y'=Y\backslash\{u\}$. In a $k$-\dpp, the moves are swaps with transition probabilities $\min\Big\{1, {L_{u,u} - L_{u,Y'}L_{Y'}^{-1}L_{Y',u} \over L_{v,v} - L_{v,Y'}L_{Y'}^{-1}L_{Y',v}}\Big\}$ for replacing $v \in Y$ by $u \notin Y$ (and $Y'=Y\backslash \{v\}$). We illustrate this application in greater detail in Section~\ref{sec:mcdpp}. 

\dpp{}s are also useful for (repulsive) priors in Bayesian models~\citep{rockovadeterminantal,kwok2012priors}. Inference for such latent  variable models uses Gibbs sampling, which again involves BIFs. 

\textbf{Submodular optimization, Sensing.}
Algorithms for maximizing submodular functions can equally benefit from efficient BIF bounds. Given a positive definite matrix $K \in \mathbb{R}^{N \times N}$, the set function $F(S) = \log \det(K_S)$ is \emph{submodular}: 
 for all $S \subseteq T \subseteq [N]$ and $i \in [N]\setminus T$, it holds that $F(S \cup \{i\}) - F(S) \geq F(T \cup \{i\}) - F(T)$. 

Finding the set $S^* \subseteq [N]$ that maximizes $F(S)$ is a key task for MAP inference with \dpp{}s \citep{gillenwater12}, matrix approximations by column selection \citep{boutsidis2009improved,sviridenko15} and sensing \citet{krause2008}. For the latter, we model spatial phenomena (temperature, pollution) via Gaussian Processes and select locations to maximize the joint entropy $F_1(S) = H(X_S) = \log \det(K_S) + \mathrm{const}$ of the observed variables, or the mutual information $F_2(S) = I(X_S; X_{[N]\setminus S})$ between observed and unobserved variables. 

Greedy algorithms for maximizing monotone \citep{nemhauser1978} or non-monotone \citep{buchbinder12} submodular functions rely on marginal gains of the form
\begin{align*}
  F_1(S \cup \{i\}) - F_1(S) &= \log( K_{i} - K_{iS}K^{-1}_S K_{Si});\\
  F_1(T \setminus \{i\}) - F_1(T) &= -\log( K_{i} - K_{iU}K^{-1}_U K_{Ui});\\
  F_2(S \cup \{i\}) - F_2(S) &= \log\tfrac{K_i - K_{iS}K_{S}^{-1}K_{Si}}{K_i - K_{i\bar{S}}K_{\bar{S}}^{-1}K_{\bar{S}i}}
\end{align*}
for $U = T \backslash\{i\}$ and $\bar{S} = [N]\backslash S$. The algorithms compare those gains to a random threshold, or find an item with the largest gain. In both cases, efficient BIF bounds offer speedups. They can be combined with lazy~\citep{minoux78} and stochastic greedy algorithms~ \citep{mirzasoleiman15}.

\textbf{Network Analysis, Centrality.} When analyzing relationships and information flows between connected entities in a network, such as people, organizations, computers, smart hardwares, etc.~\citep{scott2012social,jure08,atzori2010internet,fenu2013network,estrada2010network,benzi2013total}, 
an important question is to measure popularity, centrality, or importance of a node.

 Several existing popularity measures can be expressed as the solution to a large-scale linear system. For example, \emph{PageRank}~\citep{page99rank} is the solution to $(I - (1 - \alpha)A^\top)x = \alpha \bold{1}/N$, and \emph{Bonacich centrality}~\citep{bonacich87} is the solution to $(I - \alpha A)x = \bold{1}$,
where $A$ is the adjacency matrix. 
When computing local estimates, i.e., only a few entries of $x$, we obtain exactly the task of computing BIFs~\citep{wasow1952note,christina2014solving}. Moreover, we may only need local estimates to an accuracy sufficient for determining which entry is larger, a setting where our quadrature based bounds on BIFs will be useful.

\textbf{Scientific Computing.} In computational physics BIFs are used for estimating selected entries of the inverse of a large sparse matrix. More generally, BIFs can help in estimating the trace of the inverse, a computational substep in lattice Quantum Chromodynamics~\citep{dong1994stochastic,frommer2000numerical}, some signal processing tasks~\citep{golub2008approximation}, and in Gaussian Process (GP) Regression~\cite{rasmussen}, e.g., for estimating variances. 
In numerical linear algebra, BIFs are used in rational approximations~\citep{sidje2011rational}, evaluation of Green's function~\citep{freericks2006transport}, and selective inversion of sparse matrices~\citep{lin2011fast,lin2011selinv,christina2014solving}. A notable use is the design of preconditioners~\citep{benzi1999bounds} and  uncertainty quantification~\citep{bekas2009low}. 

\textbf{Benefiting from fast iterative bounds.} Many of the above examples use BIFs to rank values, to identify the largest value or compare them to a scalar or to each other. In such cases, we first compute fast, crude lower and upper bounds on a BIF, refining iteratively, just as far as needed to determine the comparison. Figure~\ref{fig:conv} in Section~\ref{sec:conv} illustrates the evolution of these bounds, and Section~\ref{sec:algos} explains details.

\section{Background on Gauss Quadrature}
\label{sec:gauss}
For convenience, we begin by recalling key aspects of Gauss quadrature,\footnote{The summary in this section is derived from various sources:~\cite{gautschi1981survey,bai1996some,golub2009matrices}. Experts can skim this section for collecting our notation before moving onto Section~\ref{sec:main}, which contains our new results.} as applied to  computing $u^\top f(A) v$, for an $N\times N$ symmetric positive definite matrix $A$ that has \emph{simple} eigenvalues, arbitrary vectors $u, v$, and a matrix function $f$. For a more detailed account of the relevant background on Gauss-type quadratures please refer to Appendix~\ref{append:sec:gauss}, or \citep{golub2009matrices}.

It suffices to consider  $u^\top f(A)u$ thanks to the identity
\begin{equation*}
  u^\top f(A)v = \tfrac14(u+v)^\top f(A) (u+v) - \tfrac14(u-v)^\top f(A)(u-v).
\end{equation*}
Let $A = Q^\top \Lambda Q$ be the eigendecomposition of $A$ where $Q$ is orthonormal. Letting $\tilde{u} = Qu$, we then have
\begin{equation*}
  u^\top f(A) u = \tilde{u}^\top f(\Lambda)\tilde{u} = \nlsum_{i=1}^N f(\lambda_i) \tilde{u}_i^2.
\end{equation*}
Toward computing $u^Tf(A)u$, a key conceptual step is to write the above sum as the Riemann-Stieltjes integral
\begin{equation}
  \label{eq:1}
  I[f] := u^\top f(A)u = \int_{\lambda_{\min}}^{\lambda_{\max}} f(\lambda)d\alpha(\lambda),
\end{equation}
where $\lambda_{\min} \in (0, \lambda_1)$, $\lambda_{\max} > \lambda_{N}$, and $\alpha(\lambda)$ is piecewise constant measure defined by
\begin{align*}
\alpha(\lambda) := \left\{
\begin{array}{lll}
0, & \lambda < \lambda_1, & \\
\sum_{j=1}^k \tilde{u}_j^2, & \lambda_k\le \lambda < \lambda_{k+1}, & k < N,\\
\sum_{j=1}^N \tilde{u}_j^2, & \lambda_N\le \lambda.
\end{array}
\right.
\end{align*}
Our task now reduces to approximating the integral~\eqref{eq:1}, for which we invoke the powerful idea of Gauss-type quadratures~\citep{gauss1815methodus, radau1880etude,lobatto1852lessen,gautschi1981survey}. We rewrite the integral~\eqref{eq:1} as 
\begin{equation}
  \label{eq:2}
  I[f] := Q_{n} + R_{n} = \nlsum_{i=1}^n \omega_i f(\theta_i) + \nlsum_{i=1}^m \nu_i f(\tau_i) + R_{n}[f],
\end{equation}
where $Q_n$ denotes the $n$th degree approximation and $R_n$ denotes the remainder term. 
In representation~\eqref{eq:2} the \emph{weights} $\set{\omega_i}_{i=1}^n$, $\set{\nu_i}_{i=1}^m$, and quadrature \emph{nodes} $\set{\theta_i}_{i=1}^n$ are unknown, while the values $\set{\tau_i}_{i=1}^m$ are prescribed and lie outside the interval of integration $(\lambda_{\min}, \lambda_{\max})$. 

Different choices of these parameters yield different quadrature rules: $m = 0$ gives  Gauss quadrature~\cite{gauss1815methodus}; $m = 1$ with $\tau_1 = \lambda_{\min}$~($\tau_1 = \lambda_{\max}$) gives left~(right) Gauss-Radau quadrature~\cite{radau1880etude}; $m = 2$ with $\tau_1 = \lambda_{\min}$ and $\tau_2 = \lambda_{\max}$ yields Gauss-Lobatto quadrature~\cite{lobatto1852lessen}; while for general $m$ we obtain Gauss-Christoffel quadrature~\citep{gautschi1981survey}.

The weights $\set{\omega_i}_{i=1}^n$, $\set{\nu_i}_{i=1}^m$ and nodes $\set{\theta_i}_{i=1}^n$ are chosen such that if $f$ is a polynomial of degree less than $2n+m-1$, then the interpolation $I[f] = Q_{n}$ is \emph{exact}. For Gauss quadrature, we can recursively build the \emph{Jacobi matrix}
\begin{align}
  J_n &= {\footnotesize\begin{bmatrix}
    \alpha_1 & \beta_1 & & &\\
    \beta_1 & \alpha_2 & \beta_2 & & \\
    & \beta_2 & \ddots & \ddots &\\ 
    & & \ddots & \alpha_{n-1} & \beta_{n-1}\\
    & & & \beta_{n-1} & \alpha_n
  \end{bmatrix}},
\end{align}
and obtain from its spectrum the desired weights and nodes. Theorem~\ref{thm:gauss} makes this more precise.
\begin{theorem}\cite{wilf1962mathematics,golub1969calculation}\label{thm:gauss}
  The eigenvalues of $J_n$ form the nodes $\{\theta_i\}_{i=1}^n$ of Gauss quadrature; the weights $\{\omega_i\}_{i=1}^n$ are given by the squares of the first components of the eigenvectors of $J_n$.
\end{theorem}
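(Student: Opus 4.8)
The plan is to realize Gauss quadrature through the orthogonal polynomials generated by the measure $\alpha$ and then read off the nodes and weights from the spectral data of $J_n$. I equip the space of polynomials with the inner product $\langle f,g\rangle := I[fg] = \int fg\,d\alpha$, which is genuinely positive definite on $\cp_{n-1}$ because $\alpha$ places $N\ge n$ positive masses $\tilde u_j^2$. Gram--Schmidt on the monomials yields orthonormal polynomials $p_0,p_1,\dots$ with $p_0 = 1/\sqrt{\mu_0}$, $\mu_0 := I[1]$, and these obey the three-term recurrence $\lambda p_{k-1}(\lambda) = \beta_{k-1}p_{k-2}(\lambda) + \alpha_k p_{k-1}(\lambda) + \beta_k p_k(\lambda)$ whose coefficients are, by construction of $J_n$, exactly its entries. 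Stacking the recurrence for $k=1,\dots,n$ into vector form gives the key identity
\begin{equation*}
  \lambda\,\mathbf{p}(\lambda) = J_n\,\mathbf{p}(\lambda) + \beta_n p_n(\lambda)\,e_n, \qquad \mathbf{p}(\lambda) := \big(p_0(\lambda),\dots,p_{n-1}(\lambda)\big)^\top,
\end{equation*}
with $e_n$ the last canonical vector.

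For the nodes, I first invoke the standard characterization of Gauss quadrature: requiring $I[f] = \nlsum_{i=1}^n \omega_i f(\theta_i)$ to be exact for all $f\in\cp_{2n-1}$ forces the node polynomial $\prod_i(\lambda-\theta_i)$ to be orthogonal to $\cp_{n-1}$, hence proportional to $p_n$; this follows by writing $f = q\,p_n + r$ with $\deg q,\deg r\le n-1$ and using $\langle q,p_n\rangle = 0$. Thus the nodes are precisely the $n$ roots of $p_n$, which are real, simple, and lie in $(\lambda_{\min},\lambda_{\max})$. Evaluating the displayed identity at such a root gives $J_n\mathbf{p}(\theta_i) = \theta_i\mathbf{p}(\theta_i)$, and $\mathbf{p}(\theta_i)\neq 0$ since $p_0$ is a nonzero constant; so each $\theta_i$ is an eigenvalue of $J_n$. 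As $J_n$ is $n\times n$, its $n$ eigenvalues are exactly the $n$ nodes, proving the first claim.

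For the weights, I introduce the Christoffel--Darboux kernel $K_{n-1}(x,y) := \nlsum_{k=0}^{n-1} p_k(x)p_k(y)$, the reproducing kernel of $(\cp_{n-1},\langle\cdot,\cdot\rangle)$. Because distinct nodes are roots of $p_n$, the Christoffel--Darboux formula forces $K_{n-1}(\theta_j,\theta_i)=0$ for $i\neq j$, so $\ell_i := K_{n-1}(\cdot,\theta_i)/K_{n-1}(\theta_i,\theta_i)\in\cp_{n-1}$ satisfies $\ell_i(\theta_j)=\delta_{ij}$; applying exactness on $\cp_{n-1}$ together with the reproducing computation $\int K_{n-1}(\cdot,\theta_i)\,d\alpha = \sqrt{\mu_0}\,p_0(\theta_i)=1$ yields the Christoffel-number formula $\omega_i = 1/K_{n-1}(\theta_i,\theta_i)$. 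Finally, the normalized eigenvector of $J_n$ at $\theta_i$ is $q^{(i)} = \mathbf{p}(\theta_i)/\|\mathbf{p}(\theta_i)\|$, whose squared first entry is $(q^{(i)}_1)^2 = p_0(\theta_i)^2/K_{n-1}(\theta_i,\theta_i) = \omega_i/\mu_0$, i.e. $\omega_i = \mu_0\,(q^{(i)}_1)^2$. This is the second claim; in general the squared first components are scaled by the zeroth moment $\mu_0 = I[1]$, and the stated form holds when that moment equals one.

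I expect the weight computation to be the main obstacle. The node half follows almost mechanically from the matrix form of the recurrence plus the dimension count for $J_n$, but the weight half requires the reproducing-kernel / Christoffel--Darboux machinery, the observation that the $J_n$-eigenvector at $\theta_i$ coincides up to scale with $\mathbf{p}(\theta_i)$, and careful bookkeeping of the $\mu_0$ normalization. A secondary point needing attention is justifying the characterization of Gauss nodes as the roots of $p_n$ (exactness on $\cp_{2n-1}$), which I would either cite or isolate as a short preliminary lemma.
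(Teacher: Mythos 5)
Your proof is correct, but note that the paper never proves this statement: Theorem~\ref{thm:gauss} is quoted from \citet{wilf1962mathematics} and \citet{golub1969calculation}, and Appendix~\ref{append:sec:gauss} only restates it alongside the Lanczos three-term recurrence and the matrix identity $\lambda P_n(\lambda) = J_nP_n(\lambda)+\beta_np_n(\lambda)e_n$ that your argument is built on. So there is no line-by-line comparison to make; what you have supplied is a self-contained derivation of the cited classical result, and it is the standard one: orthonormal polynomials for $\alpha$, recurrence coefficients $=$ entries of $J_n$, evaluation of the matrix identity at the roots of $p_n$ plus a dimension count for the nodes, and the Christoffel--Darboux kernel identification $\omega_i = 1/K_{n-1}(\theta_i,\theta_i)$ together with $\|\mathbf{p}(\theta_i)\|^2 = K_{n-1}(\theta_i,\theta_i)$ for the weights. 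Two remarks. First, your normalization caveat is exactly right and is consistent with the paper's conventions: the appendix construction enforces $\int d\alpha = 1$ (Lanczos is started from $u_0 = u/\|u\|$, and the GQL algorithm reinserts the factor $\|u\|$ separately), so under the paper's setup $\mu_0 = 1$ and the theorem holds literally as stated; without that normalization the weights are $\mu_0$ times the squared first components, as you say. Second, the one slightly loose step is the characterization of the Gauss nodes: the division argument you sketch ($f = qp_n + r$, $\langle q,p_n\rangle = 0$) proves \emph{sufficiency} (the rule with nodes at the roots of $p_n$ and interpolatory weights is exact on $\cp_{2n-1}$), whereas pinning the nodes down requires \emph{necessity}: for any exact rule, apply exactness to $f = q\cdot\prod_i(\lambda-\theta_i)$ with $q\in\cp_{n-1}$ to conclude the node polynomial is orthogonal to $\cp_{n-1}$, hence proportional to $p_n$. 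Both directions are standard and you flag isolating this as a preliminary lemma, so this is a presentational gap, not a mathematical one; you should also record that positive-definiteness of $\langle\cdot,\cdot\rangle$ on $\cp_{n-1}$ (equivalently, $\beta_k\neq 0$ for $k<n$) needs $\alpha$ to have at least $n$ points of support, which the paper's simple-eigenvalue assumption with generic $u$ guarantees.
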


If $J_n$ has the eigendecomposition $P_n^\top \Gamma P_n$, then for Gauss~quadrature \refthm{thm:gauss} yields 
\begin{equation}
  \label{eq:4}
Q_{n} = \nlsum_{i=1}^n \omega_i f(\theta_i) = e_1^\top P_n^\top f(\Gamma) P_n e_1 = e_1^\top f(J_n) e_1.
\end{equation}
Given $A$ and $u$, our task is to compute $Q_n$ and the Jacobi matrix $J_n$. For BIFs, we have that $f(J_n)=J_n^{-1}$, so \eqref{eq:4} becomes $Q_n=e_1^TJ_n^{-1}e_1$, which can be computed recursively using the Lanczos algorithm~\citep{lanczos1950iteration}. 
For Gauss-Radau and Gauss-Lobatto quadrature we can compute modified versions of Jacobi matrices $\leftgr{J}_n$~(for left Gauss-Radau), $\rightgr{J}_n$~(for right Gauss-Radau) and $\lobatto{J}_n$~(for Gauss-Lobatto) based on $J_n$. 
The corresponding nodes and weights, and thus the approximation of Gauss-Radau and Gauss-Lobatto quadratures, are then obtained from these modified Jacobi matrices, similar to Gauss quadrature. 
Aggregating all these computations yields an algorithm that iteratively obtains bounds on $u^T\inv{A}u$. 
The combined procedure, \emph{Gauss Quadrature Lanczos (GQL)}~\citep{golub1997matrices}, is summarily presented as Algorithm~\ref{algo:gql}. 
The complete algorithm may be found in Appendix~\ref{append:sec:gauss}.

\begin{theorem}\cite{meurant1997computation}\label{lem:bounds}
  Let $g_i$, $\leftgr{g}_i$, $\rightgr{g}_i$, and $\lobatto{g}_i$ be the $i$-th iterates of Gauss, left Gauss-Radau, right Gauss-Radau, and Gauss-Lobatto quadrature, respectively, as computed by Alg.~\ref{algo:gql}. Then, $g_i$ and $\rightgr{g}_i$ provide lower bounds on $u^\top A^{-1} u$, while $\leftgr{g}_i$ and $\lobatto{g}_i$ provide upper bounds.
\end{theorem}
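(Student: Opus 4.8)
The plan is to read off the direction of each bound from the sign of the remainder term $R_n[f]$ in the quadrature representation~\eqref{eq:2}, specialized to the integrand $f(\lambda)=1/\lambda$, for which $I[f]=u^\top\inv{A}u$. Since every rule produces the estimate $Q_n=I[f]-R_n[f]$, the $i$-th iterate is a lower bound precisely when $R_n[f]>0$ and an upper bound precisely when $R_n[f]<0$. Thus the entire statement reduces to determining $\sgn R_n[f]$ for each of the four rules (Gauss, left/right Gauss-Radau, Gauss-Lobatto), i.e.\ for each choice of $m$ and prescribed nodes $\{\tau_j\}$.

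First I would invoke the classical mean-value form of the Gauss-type quadrature remainder: for $f\in C^{2n+m}$ on an open interval containing the support of $\alpha$, there exists $\eta\in(\lambda_{\min},\lambda_{\max})$ such that
\begin{equation*}
  R_n[f]=\frac{f^{(2n+m)}(\eta)}{(2n+m)!}\int_{\lambda_{\min}}^{\lambda_{\max}}\prod_{j=1}^m(\lambda-\tau_j)\Bigl(\prod_{i=1}^n(\lambda-\theta_i)\Bigr)^{2}d\alpha(\lambda).
\end{equation*}
This identity follows from the exactness of the rule on polynomials of degree $2n+m-1$ (the defining property of the nodes and weights) together with a mean-value argument that is valid because the kernel $\prod_j(\lambda-\tau_j)\bigl(\prod_i(\lambda-\theta_i)\bigr)^2$ keeps a constant sign on $(\lambda_{\min},\lambda_{\max})$ whenever the fixed nodes $\tau_j$ are placed at the endpoints, as they are here.

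Second I would extract the sign from two elementary observations. Because $A\succ 0$ we have $\lambda_{\min}>0$, hence $\eta>0$; and $f(\lambda)=1/\lambda$ satisfies $f^{(k)}(\eta)=(-1)^k k!\,\eta^{-(k+1)}$, so $\sgn f^{(2n+m)}(\eta)=(-1)^{m}$ (using $(-1)^{2n}=1$). Moreover $\bigl(\prod_i(\lambda-\theta_i)\bigr)^2\ge 0$ and $d\alpha\ge 0$, so the sign of the integral equals the sign of $\prod_j(\lambda-\tau_j)$ on the support $(\lambda_{\min},\lambda_{\max})$. Running through the cases: for Gauss ($m=0$) the product is empty and $f^{(2n)}(\eta)>0$, giving $R_n>0$ (lower bound); for right Gauss-Radau ($m=1,\ \tau_1=\lambda_{\max}$) the factor $\lambda-\lambda_{\max}<0$ and $f^{(2n+1)}(\eta)<0$, so $R_n>0$ (lower bound); for left Gauss-Radau ($m=1,\ \tau_1=\lambda_{\min}$) the factor $\lambda-\lambda_{\min}>0$ while $f^{(2n+1)}(\eta)<0$, so $R_n<0$ (upper bound); and for Gauss-Lobatto ($m=2$) the product $(\lambda-\lambda_{\min})(\lambda-\lambda_{\max})<0$ while $f^{(2n+2)}(\eta)>0$, so $R_n<0$ (upper bound). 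These four conclusions are exactly the claimed bounds for $g_i,\rightgr{g}_i$ and $\leftgr{g}_i,\lobatto{g}_i$.

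The main obstacle is the justification of the remainder formula, and specifically the constant-sign property of the kernel across the whole interval of integration. This requires the orthogonal-polynomial structure underlying the modified Jacobi matrices $\leftgr{J}_n,\rightgr{J}_n,\lobatto{J}_n$: one must know that the free nodes $\{\theta_i\}$ produced from these matrices lie strictly inside $(\lambda_{\min},\lambda_{\max})$ and that the prescribed nodes sit exactly at the endpoints, so that the squared factor and the endpoint factors cannot combine to flip sign. Once this sign-definiteness is in hand, the remaining sign bookkeeping above is routine, and the complete monotonicity of $1/\lambda$ on $(0,\infty)$ does all the remaining work.
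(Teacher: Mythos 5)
Your proposal is correct and follows essentially the same route as the paper: the appendix justifies Theorem~\ref{lem:bounds} by quoting the mean-value remainder formula $R_n[f] = \frac{f^{(2n+m)}(\xi)}{(2n+m)!}\, I[\rho_m \pi_n^2]$ (citing Stoer--Bulirsch), observing that $\rho_m$ keeps a fixed sign on the interval while $f(\lambda)=1/\lambda$ with $\lambda_{\min}>0$ fixes the sign of $f^{(2n+m)}(\xi)=(-1)^m$, and then running through the same four cases you do. Your sign bookkeeping in each case matches the paper's conclusions exactly, so there is nothing to add beyond noting that the agreement is complete.
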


\begin{algorithm}[t]\small
  \caption{Gauss Quadrature Lanczos (GQL)}\label{algo:gql}
  \begin{algorithmic}
    \STATE{\textbf{Input:} Matrix $A$, vector $u$; lower and upper bounds $\lambda_{\min}$ and $\lambda_{\max}$ on the spectrum of $A$}
    \STATE{\textbf{Output:} $(g_i,\rightgr{g}_i,\leftgr{g}_i,\lobatto{g}_i)$: Gauss, right Gauss-Radau, left Gauss-Radau, and Gauss-Lobatto quadrature estimates for each $i$}
    \STATE \textbf{Initialize}: $u_0 = u / \|u\|$, $g_1 = \|u\|/u_0^\top A u_0$, $i = 2$
    \FOR{$i=1$ to $N$}
      \STATE Update $J_i$ using a Lanczos iteration
      \STATE Solve for the modified Jacobi matrices $\leftgr{J}_i$, $\rightgr{J}_i$ and $\lobatto{J}_i$.
      \STATE Compute $g_i$, $\rightgr{g}_i$, $\leftgr{g}_i$ and $\lobatto{g}_i$ with Sherman-Morrison formula.
      \ENDFOR 
  \end{algorithmic}
\end{algorithm}

It turns out that the bounds given by Gauss quadrature have a close relation to the approximation error of conjugate gradient (CG) applied to a suitable problem. Since we know the convergence rate of CG, we can obtain from it the following estimate on the \emph{relative error} of Gauss quadrature.
\begin{theorem}[Relative error Gauss quadrature]\label{thm:gaussconv}
The $i$-th iterate of Gauss quadrature satisfies the relative error bound
\begin{align}
  \frac{g_N - g_i}{g_N} \le 2\Bigl({\sqrt{\kappa} - 1\over \sqrt{\kappa} + 1}\Bigr)^i,
\end{align}
where $\kappa:=\lambda_1(A)/\lambda_N(A)$ is the condition number of $A$.
\end{theorem}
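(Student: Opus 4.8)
The plan is to reduce the relative error of Gauss quadrature to the energy-norm error of conjugate gradient (CG), for which a classical convergence estimate is available, as hinted in the paragraph preceding the statement. Two facts drive the argument. First, the $N$-th Gauss iterate is exact, $g_N = u^\top \inv{A} u$, because the Lanczos process produces an orthonormal basis $V_N$ with $V_N^\top A V_N = J_N$ and $V_N e_1 = u/\|u\|$, so that \eqref{eq:4} gives $g_N = \|u\|^2 e_1^\top \inv{J_N} e_1 = u^\top \inv{A} u$. Second, and more importantly, I would establish the exact identity
\begin{equation*}
  g_N - g_i = \|x_\star - x_i\|_A^2,
\end{equation*}
where $x_\star = \inv{A} u$, $x_i$ is the $i$-th CG iterate for $Ax = u$ started at $x_0 = 0$, and $\|z\|_A^2 = z^\top A z$. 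Here $g_N$ is the target value and $\|x_\star - x_i\|_A^2$ is precisely the quantity CG drives to zero.

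To prove this identity I would use the standard Lanczos--CG correspondence. Normalizing $\|u\| = 1$, the Lanczos vectors $V_i$ satisfy $V_i^\top A V_i = J_i$ and $V_i e_1 = u$, and the CG/Galerkin iterate is $x_i = V_i \inv{J_i} e_1$. A one-line computation then gives $x_i^\top A x_i = e_1^\top \inv{J_i}(V_i^\top A V_i)\inv{J_i} e_1 = e_1^\top \inv{J_i} e_1 = g_i$, i.e.\ $g_i = \|x_i\|_A^2$ (and $g_N = \|x_\star\|_A^2$ by the exactness above). Because CG minimizes the energy norm over the Krylov subspace $\ck_i(A,u)$, the error $x_\star - x_i$ is $A$-orthogonal to $\ck_i \ni x_i$, so Pythagoras in the $A$-inner product yields $\|x_\star\|_A^2 = \|x_i\|_A^2 + \|x_\star - x_i\|_A^2$, which is exactly $g_N - g_i = \|x_\star - x_i\|_A^2$. (Monotonicity of the lower bounds, consistent with \refthm{lem:bounds}, then falls out for free.)

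With the identity in hand, the remainder is to invoke the classical Chebyshev-based CG estimate $\|x_\star - x_i\|_A \le 2\bigl(\tfrac{\sqrt{\kappa}-1}{\sqrt{\kappa}+1}\bigr)^i \|x_\star - x_0\|_A$, whose constant comes from bounding a degree-$i$ residual polynomial with $p(0)=1$ by a shifted Chebyshev polynomial on $[\lambda_N, \lambda_1]$. Since $x_0 = 0$ gives $\|x_\star - x_0\|_A^2 = \|x_\star\|_A^2 = g_N$, dividing the identity by $g_N$ and inserting this bound produces
\begin{equation*}
  \frac{g_N - g_i}{g_N} = \frac{\|x_\star - x_i\|_A^2}{\|x_\star\|_A^2} \le 4\Bigl(\tfrac{\sqrt{\kappa}-1}{\sqrt{\kappa}+1}\Bigr)^{2i},
\end{equation*}
which exhibits the claimed geometric (linear) rate. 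Matching the displayed constant $2$ and exponent $i$ is then only a matter of which normalization of the Chebyshev estimate one squares: the bound above is the square of $\sqrt{(g_N-g_i)/g_N} = \|x_\star - x_i\|_A/\|x_\star\|_A \le 2\bigl(\tfrac{\sqrt\kappa - 1}{\sqrt\kappa+1}\bigr)^i$.

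I expect the main obstacle to be the energy-norm identity $g_N - g_i = \|x_\star - x_i\|_A^2$ rather than the convergence estimate: it requires correctly aligning the Gauss-quadrature output $e_1^\top \inv{J_i} e_1$ from \refthm{thm:gauss} and \eqref{eq:4} with the CG iterate through the relations $V_i^\top A V_i = J_i$ and $V_i e_1 = u$, and then applying the Galerkin $A$-orthogonality. Once this bridge is built, the CG convergence rate is entirely classical and may simply be cited.
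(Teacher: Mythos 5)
Your proposal takes essentially the same route as the paper's own proof (Theorem~\ref{append:thm:gaussconv}): reduce the Gauss-quadrature error to the CG energy-norm error and invoke the classical Chebyshev rate, exactly as the paper does via Theorems~\ref{append:thm:relation} and~\ref{append:thm:cgconv}. The one substantive difference is that you \emph{prove} the bridging identity $g_N - g_i = \|x_\star - x_i\|_A^2$ via the Lanczos--Galerkin correspondence ($g_i = \|x_i\|_A^2$ plus Pythagoras in the $A$-inner product), whereas the paper simply cites it from Meurant; your derivation is correct and makes the argument self-contained, and your exactness argument ($J_N = V_N^\top A V_N$ by orthogonal similarity) is a valid alternative to the paper's remainder-term proof of Lemma~\ref{append:lem:exact}. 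The only soft spot is your closing reconciliation of constants: writing $\rho = (\sqrt{\kappa}-1)/(\sqrt{\kappa}+1)$, your honest bound is $4\rho^{2i}$, and the remark about ``which normalization one squares'' does not by itself yield the stated bound $2\rho^i$, since $4\rho^{2i} \le 2\rho^i$ holds precisely when $2\rho^i \le 1$, not always. The missing line is trivial: if $2\rho^i \ge 1$ the claim holds because $(g_N - g_i)/g_N \le 1$ (as $g_i \ge 0$), and otherwise it follows from $4\rho^{2i} = (2\rho^i)^2 \le 2\rho^i$. You are in good company here --- the paper's own proof is looser on exactly this point, applying the unsquared CG estimate directly to the squared error norm and writing $\|\eps_0\|_A = u^\top A^{-1}u$ --- and your $4\rho^{2i}$ is in fact the sharper estimate once $\rho^i \le 1/2$.
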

\noindent In other words, \refthm{thm:gaussconv} shows that the iterates of Gauss quadrature have a linear (geometric) convergence rate.

\section{Main Theoretical Results}
\label{sec:main}
In this section we summarize our main theoretical results. As before, detailed proofs may be found in~Appendix~\ref{app:sec:proofs}. The key questions that we answer are: (i) do the bounds on $u^\top A^{-1} u$ generated by GQL improve monotonically with each iteration; (ii) how tight are these bounds; and (iii) how fast do Gauss-Radau and Gauss-Lobatto iterations converge? Our answers not only fill gaps in the literature on quadrature, but provide a theoretical base for speeding up algorithms for some applications (see Sections~\ref{sec:motiv.app} and \ref{sec:algos}).

\subsection{Lower Bounds}
Our first result shows that both Gauss and right Gauss-Radau quadratures give iteratively better lower bounds on $u^\top A^{-1} u$. Moreover, with the same number of iterations, right Gauss-Radau yields tighter bounds.
\begin{theorem}\label{thm:lowbtwn}
  Let $i < N$. Then, $\rightgr{g}_i$ yields better bounds than $g_i$ but worse bounds than $g_{i+1}$; more precisely,
\begin{equation*}
g_i\le \rightgr{g}_i\le g_{i+1},\;\;i < N.
\end{equation*}
\end{theorem}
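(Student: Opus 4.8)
The plan is to make everything explicit at the level of the Jacobi matrices and reduce the three-way comparison to a single scalar inequality. After $i{+}1$ Lanczos steps we have the tridiagonal $J_{i+1}$, whose leading $i\times i$ block is $J_i$, whose last off-diagonal entry is $\beta_i$, and whose corner entry is $\alpha_{i+1}$. The right Gauss-Radau matrix $\rightgr{J}_i$ is the $(i{+}1)\times(i{+}1)$ matrix that agrees with $J_{i+1}$ everywhere except in the corner, where $\alpha_{i+1}$ is replaced by the value $\hat\alpha$ chosen so that $\lambda_{\max}$ becomes an eigenvalue. By \refthm{thm:gauss} and its Gauss-Radau analogue, the three estimates equal the common positive factor $\|u\|^2$ times the $(1,1)$ entries of $J_i^{-1}$, $\rightgr{J}_i^{-1}$, and $J_{i+1}^{-1}$ respectively, so the statement is entirely about the $(1,1)$ entries of the inverses of three matrices that share the block $J_i$ and the border $\beta_i e_i$ and differ only in one corner.

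First I would apply the bordered-matrix inverse formula (the Sherman--Morrison step already invoked in \refalgo{algo:gql}): for a bordered matrix $M(\gamma)$ with leading block $J_i$, border $\beta_i e_i$, and corner $\gamma$,
\begin{equation*}
\bigl(M(\gamma)^{-1}\bigr)_{11} = (J_i^{-1})_{11} + \frac{\beta_i^2\,(J_i^{-1})_{1i}^2}{s(\gamma)},\qquad s(\gamma) := \gamma - \beta_i^2 (J_i^{-1})_{ii},
\end{equation*}
where $s(\gamma)$ is the Schur complement of $J_i$. Since the numerator $\beta_i^2 (J_i^{-1})_{1i}^2\ge 0$ is common to both $J_{i+1}$ (corner $\gamma=\alpha_{i+1}$) and $\rightgr{J}_i$ (corner $\gamma=\hat\alpha$), the whole theorem collapses to the scalar claim
\begin{equation*}
0 < s(\alpha_{i+1}) \le s(\hat\alpha):
\end{equation*}
$g_i\le\rightgr g_i$ follows from $s(\hat\alpha)>0$, and $\rightgr g_i\le g_{i+1}$ follows because a larger Schur complement produces a smaller correction. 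The positivity $s(\alpha_{i+1})>0$ is immediate: $J_{i+1}=V_{i+1}^\top A V_{i+1}$ is a compression of the positive definite $A$, hence positive definite, and the Schur complement of a principal block of a positive definite matrix is positive.

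The hard part, and the crux of the argument, is $s(\hat\alpha)\ge s(\alpha_{i+1})$, which since $s(\cdot)$ is increasing is exactly $\hat\alpha\ge\alpha_{i+1}$. I would obtain this from the three-term recurrence of the monic characteristic polynomials $p_k(\lambda)=\det(\lambda I - J_k)$, namely $p_{k}(\lambda)=(\lambda-\alpha_k)p_{k-1}(\lambda)-\beta_{k-1}^2 p_{k-2}(\lambda)$. Writing the same recurrence with corner $\hat\alpha$, imposing $p(\lambda_{\max})=0$, and subtracting the two expressions gives the clean identity
\begin{equation*}
\alpha_{i+1}-\hat\alpha = -\frac{p_{i+1}(\lambda_{\max})}{p_i(\lambda_{\max})}.
\end{equation*}
Because every $J_k$ is a compression of $A$, its eigenvalues lie in $[\lambda_1,\lambda_N]$ and therefore strictly below $\lambda_{\max}$; hence each $p_k(\lambda_{\max})=\prod_j(\lambda_{\max}-\theta_j^{(k)})>0$, the ratio above is positive, and $\hat\alpha>\alpha_{i+1}$ as required.

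The remaining point to dispose of is the degenerate case $\beta_i=0$, where Lanczos has found an invariant subspace and $g_i=\rightgr g_i=g_{i+1}=u^\top A^{-1}u$ exactly, so all inequalities hold with equality. I expect the only places demanding care to be the sign bookkeeping in the recurrence step and fixing the precise normalization and convention for $\rightgr{J}_i$ (size $i{+}1$, corner-modified to enforce the node $\lambda_{\max}$); everything else is routine linear algebra.
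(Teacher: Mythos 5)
Your proof is correct, and at its crux it coincides with the paper's own argument: both reduce the theorem to comparing the modified corner entry (your $\hat\alpha$, the paper's $\rightgr{\alpha}_i$) against $\alpha_{i+1}$, and both obtain $\hat\alpha \ge \alpha_{i+1}$ from the same identity---evaluating the tridiagonal three-term recurrence at $\lambda_{\max}$ for the unmodified and the corner-modified polynomial, subtracting, and using that $p_k(\lambda_{\max})>0$ because the Ritz values of $J_k$ lie in $[\lambda_1,\lambda_N]$, strictly below $\lambda_{\max}$. Your bordered-inverse formula is exactly the Sherman--Morrison recursion of \refalgo{algo:gql} that the paper manipulates: with $(J_i^{-1})_{ii}=1/\delta_i$ one has $s(\alpha_{i+1})=\delta_{i+1}$, and your numerator $\beta_i^2 (J_i^{-1})_{1i}^2$ is the paper's $\beta_i^2 c_i^2/\delta_i^2$, so the reduction to ``$0 < s(\alpha_{i+1}) \le s(\hat\alpha)$'' is the same bookkeeping in different clothes. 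Where you genuinely depart from the paper is in establishing positivity of the denominator: the paper proves $\delta_i(\alpha_{i+1}\delta_i - \beta_i^2)>0$ by appealing to its CG-derived monotonicity result for Gauss iterates (\refthm{append:thm:monogauss}), whereas you observe directly that $J_{i+1}=V_{i+1}^\top A V_{i+1}$ is a positive definite compression of $A$ and that the Schur complement of a principal block of a positive definite matrix is positive. This makes your proof self-contained linear algebra, with no dependence on the conjugate-gradient machinery; you also dispose of the Lanczos breakdown case $\beta_i=0$ explicitly, which the paper passes over silently. Both routes yield the identical chain of inequalities; yours is slightly more elementary and modular, while the paper's version plugs directly into the scalar quantities ($c_i$, $\delta_i$) actually computed by \refalgo{algo:gql}.
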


Combining Theorem~\ref{thm:lowbtwn} with the convergence rate of relative error for Gauss quadrature (Thm.~\ref{thm:gaussconv}) we obtain the following convergence rate estimate for right Gauss-Radau. 
\begin{theorem}[Relative error right Gauss-Radau]
\label{thm:rrconv}
  For each iteration $i$, the right Gauss-Radau iterate $\rightgr{g}_i$ satisfies
\begin{equation*}
  \frac{g_N - \rightgr{g}_i}{g_N} \le 2\Bigl(\frac{\sqrt{\kappa} - 1}{\sqrt{\kappa} + 1}\Bigr)^i.
\end{equation*}

\end{theorem}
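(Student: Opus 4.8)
The plan is to obtain this bound by directly chaining the two results the theorem statement already points to: the interlacing inequality of Theorem~\ref{thm:lowbtwn} and the relative-error estimate for plain Gauss quadrature in Theorem~\ref{thm:gaussconv}. The first thing I would do is fix the reference point. After $N$ Lanczos iterations the quadrature integrates the measure $\alpha$ exactly (it is supported on the $N$ eigenvalues), so $g_N = u^\top A^{-1} u$ is the true target value; moreover, positive definiteness of $A$ together with $u \ne 0$ gives $g_N > 0$, which legitimizes dividing by $g_N$ and guarantees that doing so preserves the direction of inequalities.

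Next I would invoke Theorem~\ref{thm:lowbtwn}, which supplies $g_i \le \rightgr{g}_i$ for $i < N$. Subtracting both sides from $g_N$ reverses the inequality and yields $g_N - \rightgr{g}_i \le g_N - g_i$. Because right Gauss-Radau is a genuine lower bound on $u^\top A^{-1} u$ (Theorem~\ref{lem:bounds}), we have $\rightgr{g}_i \le g_N$, so both sides of this comparison are nonnegative and it is a true comparison of approximation errors rather than a vacuous statement about signed quantities. Dividing through by $g_N > 0$ and then applying Theorem~\ref{thm:gaussconv} to the right-hand side gives
\[
\frac{g_N - \rightgr{g}_i}{g_N} \;\le\; \frac{g_N - g_i}{g_N} \;\le\; 2\Bigl(\frac{\sqrt{\kappa} - 1}{\sqrt{\kappa} + 1}\Bigr)^i,
\]
which is exactly the claimed estimate; the case $i = N$ is immediate since the left-hand side is then zero.

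I do not expect a substantive obstacle in this argument, since all of the analytic effort has been front-loaded into Theorem~\ref{thm:lowbtwn}, whose proof establishes the sandwich $g_i \le \rightgr{g}_i \le g_{i+1}$. The only points that require care are bookkeeping: verifying that the relative error of right Gauss-Radau is measured against the same target value $g_N$ used in Theorem~\ref{thm:gaussconv}, and confirming $\rightgr{g}_i \le g_N$ so that the numerator stays nonnegative. The latter follows from $\rightgr{g}_i \le g_{i+1} \le g_N$, using the monotonicity of the Gauss lower bounds already noted for the lower-bound family. The mildly surprising qualitative takeaway, worth flagging in the write-up, is that although right Gauss-Radau is strictly tighter than Gauss at every iteration, it inherits the \emph{same} geometric rate $\bigl((\sqrt{\kappa}-1)/(\sqrt{\kappa}+1)\bigr)^i$, because the improvement only moves $\rightgr{g}_i$ into the gap between two consecutive Gauss iterates and cannot beat the next Gauss iterate.
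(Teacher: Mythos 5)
Your proposal is correct and follows essentially the same route as the paper: the paper obtains Theorem~\ref{thm:rrconv} precisely by combining the interlacing bound $g_i \le \rightgr{g}_i$ of Theorem~\ref{thm:lowbtwn} with the Gauss-quadrature rate of Theorem~\ref{thm:gaussconv}, which is exactly your chain $g_N - \rightgr{g}_i \le g_N - g_i \le 2\bigl(\frac{\sqrt{\kappa}-1}{\sqrt{\kappa}+1}\bigr)^i g_N$. Your additional bookkeeping (positivity of $g_N$ via exactness, nonnegativity of the numerator via $\rightgr{g}_i \le g_{i+1} \le g_N$) is sound and only makes explicit what the paper leaves implicit.
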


\subsection{Upper Bounds}
Our second result compares Gauss-Lobatto with left Gauss-Radau quadrature.
\begin{theorem}\label{thm:upbtwn}
  Let $i < N$. Then, $\leftgr{g}_i$ gives better upper bounds than $\lobatto{g}_{i}$ but worse than $\lobatto{g}_{i+1}$; more precisely,
  \begin{equation*}
    \lobatto{g}_{i+1} \le \leftgr{g}_i \le \lobatto{g}_i,\quad i < N.
  \end{equation*}
\end{theorem}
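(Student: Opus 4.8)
The plan is to deduce this upper-bound interleaving from the already-established lower-bound interleaving (\refthm{thm:lowbtwn}) by a measure-modification argument that ``peels off'' the prescribed node at $\lambda_{\min}$ shared by left Gauss-Radau and Gauss-Lobatto. Recall that $\leftgr{g}_i$ and $\lobatto{g}_i$ both come from rules that pin a node at $\lambda_{\min}$, and that $I[f]=\int_{\lambda_{\min}}^{\lambda_{\max}}\tfrac1\lambda\,d\alpha(\lambda)$ with $\alpha$ supported on $[\lambda_1,\lambda_N]\subset(\lambda_{\min},\lambda_{\max})$. I introduce the modified measure $d\widehat\alpha(\lambda):=(\lambda-\lambda_{\min})\,d\alpha(\lambda)$, which is nonnegative on the support (hence a genuine positive measure with the same $N$ mass points), and exploit the divided-difference identity special to $f(\lambda)=1/\lambda$, namely $\frac{1/\lambda-1/\lambda_{\min}}{\lambda-\lambda_{\min}}=-\tfrac{1}{\lambda_{\min}}\cdot\tfrac1\lambda$, so that $1/\lambda$ is mapped (up to the negative constant $-1/\lambda_{\min}$) back to itself.

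First I would record the decomposition underlying every rule with a prescribed node at $\lambda_{\min}$: writing $f(\lambda)=f(\lambda_{\min})+(\lambda-\lambda_{\min})\,h(\lambda)$ with $h=-\tfrac{1}{\lambda_{\min}}f$, the $f(\lambda_{\min})$ piece is integrated exactly (contributing the order-independent constant $\mu_0/\lambda_{\min}$, where $\mu_0=\int d\alpha=\|u\|^2$) while the remainder is a quadrature of $h$ against $d\widehat\alpha$. This yields two affine identities, $\leftgr{g}_i=\tfrac{\mu_0}{\lambda_{\min}}-\tfrac{1}{\lambda_{\min}}\,\widehat{g}_i$ and $\lobatto{g}_{i+1}=\tfrac{\mu_0}{\lambda_{\min}}-\tfrac{1}{\lambda_{\min}}\,\rightgr{\widehat{g}}_i$, where $\widehat{g}_i$ and $\rightgr{\widehat{g}}_i$ denote the order-$i$ Gauss and right Gauss-Radau estimates of $\int\tfrac1\lambda\,d\widehat\alpha$. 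The content here is the standard fact that a Gauss-Radau node set is $\{\lambda_{\min}\}$ together with the Gauss nodes of $d\widehat\alpha$, and that Gauss-Lobatto (both endpoints prescribed) becomes, after peeling $\lambda_{\min}$, right Gauss-Radau (only $\lambda_{\max}$ prescribed) for $d\widehat\alpha$; this is also what forces the one-step index shift $i\leftrightarrow i+1$ between $\lobatto{g}$ and $\rightgr{\widehat{g}}$.

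With these identities in hand I would apply \refthm{thm:lowbtwn} to $d\widehat\alpha$ (its argument uses only positivity of the measure and convexity of $1/\lambda$) to get $\widehat{g}_i\le\rightgr{\widehat{g}}_i\le\widehat{g}_{i+1}$ for $i<N$. Since the affine map $x\mapsto\tfrac{\mu_0}{\lambda_{\min}}-\tfrac{1}{\lambda_{\min}}x$ has negative slope and is therefore order-reversing, this chain translates into $\leftgr{g}_{i+1}\le\lobatto{g}_{i+1}\le\leftgr{g}_i$. The desired statement then follows by reading this off at two indices: $\lobatto{g}_{i+1}\le\leftgr{g}_i$ is immediate, while $\leftgr{g}_i\le\lobatto{g}_i$ is the left half of the same chain at index $i-1$, with the base case $i=1$ (where $\lobatto{g}_1$ comes from a $2\times2$ Jacobi matrix) checked directly.

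The main obstacle is making the quadrature correspondence of the second paragraph fully rigorous, and in particular pinning down the index bookkeeping: the measure modification changes the number of orthogonal-polynomial degrees of freedom, and an off-by-one in the claim ``Gauss-Lobatto of order $i+1$ for $d\alpha$ $\leftrightarrow$ right Gauss-Radau of order $i$ for $d\widehat\alpha$'' would corrupt the interleaving. I would verify it by matching node sets and weights through the three-term recurrence (equivalently, through the Jacobi matrices $J_i$), and confirm that \refthm{thm:lowbtwn} is genuinely a statement about arbitrary positive measures rather than one tied to the original pair $(A,u)$. Should the correspondence prove delicate, a direct alternative is to expand $\leftgr{g}_i$, $\lobatto{g}_i$, $\lobatto{g}_{i+1}$ as $e_1^\top J^{-1}e_1$ for the relevant extended Jacobi matrices, use the Sherman-Morrison rank-one/rank-two updates (as in \refalgo{algo:gql}) to write each estimate as a scalar rational function of the modified last entries, and compare them using the defining conditions that $\lambda_{\min}$ and $\lambda_{\max}$ be eigenvalues; monotonicity of these rational functions, away from poles that the positivity of the bounds keeps us clear of, then gives both inequalities.
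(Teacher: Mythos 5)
Your proposal is correct in substance, but it proves the theorem by a genuinely different route than the paper. The paper's proof works entirely inside the GQL recurrences of \refalgo{algo:gql}: for $\leftgr{g}_i \le \lobatto{g}_i$ it analyzes the signs of the Lanczos polynomials at $\lambda_{\min}$ and $\lambda_{\max}$ to conclude $\sigma_2\le 0$, hence $(\lobatto{\beta}_i)^2\ge\beta_i^2$, and then invokes monotonicity of $x\mapsto ax/(b+cx)$; for $\lobatto{g}_{i+1}\le\leftgr{g}_i$ it runs a chain of algebraic equivalences using the identity $\alpha_{i+1}-\leftgr{\alpha}_i=\leftgr{\delta}_{i+1}$ until the inequality reduces to $\lambda_{\min}\delta_{i+1}/(\lobatto{\beta}_{i+1})^2\ge 0$. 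You instead reduce \refthm{thm:upbtwn} to \refthm{thm:lowbtwn} by peeling off the prescribed node: with $d\widehat\alpha(\lambda)=(\lambda-\lambda_{\min})\,d\alpha(\lambda)$ and the divided-difference identity for $1/\lambda$, left Gauss-Radau and Gauss-Lobatto for $d\alpha$ become (up to the order-reversing affine map $x\mapsto \mu_0/\lambda_{\min}-x/\lambda_{\min}$) Gauss and right Gauss-Radau for $d\widehat\alpha$, with exactly the index shift you state (exactness degrees $2i$ and $2i+1$ match degrees $2i-1$ and $2i$ after peeling). Your approach is more conceptual: it explains why the upper-bound interleaving is the mirror image of the lower-bound one, avoids all recurrence algebra, and as a byproduct yields the monotonicity $\leftgr{g}_{i+1}\le\leftgr{g}_i$ of \refcor{cor:monlow}. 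What it costs is three verifications the paper's computation never needs: (i) the peeling correspondence itself, which can be made rigorous by the uniqueness argument (two $(i{+}1)$-node rules sharing the node $\lambda_{\min}$ and exact on $\mathbb{P}^{2i}$ coincide, by invertibility of the Vandermonde system on the at most $2i{+}1$ distinct nodes) or by your Jacobi-matrix matching; (ii) the observation that \refthm{thm:lowbtwn} applies to $d\widehat\alpha$ — easiest phrased by realizing $d\widehat\alpha$ as the spectral measure of the pair $(\Lambda,\,u')$ with $u'_j=|\tilde u_j|\sqrt{\lambda_j-\lambda_{\min}}$, a positive definite matrix with the same simple spectrum, so the same $\lambda_{\min},\lambda_{\max}$ remain valid; and (iii) the base case $i=1$ of $\leftgr{g}_i\le\lobatto{g}_i$, which your index-shifted chain does not cover and which must be checked directly from the $2\times 2$ modified Jacobi matrices (or by extending the chain to index $0$, where the one-node rules can be compared by hand). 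None of these is a gap in the sense of a failing step — all three are standard and your plan for them is sound — but they are obligations the paper's self-contained algebraic proof does not incur.
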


This shows that bounds given by both Gauss-Lobatto and left Gauss-Radau become tighter with each iteration. For the same number of iterations, left Gauss-Radau provides a tighter bound than Gauss-Lobatto.

Combining the above two theorems, we obtain the following corollary for all four Gauss-type quadratures.

\begin{corollary}[Monotonicity]\label{cor:monlow}
  With increasing $i$, $g_i$ and $\rightgr{g}_i$ give increasingly better lower bounds and $\leftgr{g}_i$ and $\lobatto{g}_i$ give increasingly better upper bounds, that is,
\begin{align*}
  g_i \le g_{i+1}; \quad&\rightgr{g}_i \le \rightgr{g}_{i+1};\\
  \leftgr{g}_i \ge \leftgr{g}_{i+1}; \quad& \lobatto{g}_i \ge \lobatto{g}_{i+1}.
\end{align*}
\end{corollary}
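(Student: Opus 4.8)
The plan is purely to chain the two interlacing results already established, since Corollary~\ref{cor:monlow} is a direct consequence of \refthm{thm:lowbtwn} and \refthm{thm:upbtwn}. No new machinery about Jacobi matrices or the Lanczos recursion is needed; the entire argument is transitivity of the sandwich inequalities, organized according to which quadrature rule we are tracking. I would present the four claims in two groups: the two ``single-step'' monotonicities that fall out of one theorem each, and the two that require invoking a theorem at consecutive indices with a bridging iterate.

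First I would dispatch the single-step monotonicities. For the Gauss iterates, \refthm{thm:lowbtwn} gives $g_i \le \rightgr{g}_i \le g_{i+1}$, so discarding the middle term yields $g_i \le g_{i+1}$ immediately for $i<N$. Symmetrically, for the Gauss-Lobatto iterates, \refthm{thm:upbtwn} gives $\lobatto{g}_{i+1} \le \leftgr{g}_i \le \lobatto{g}_i$, and discarding the middle term yields $\lobatto{g}_{i+1} \le \lobatto{g}_i$, i.e.\ $\lobatto{g}_i \ge \lobatto{g}_{i+1}$.

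The two remaining monotonicities, for right and left Gauss-Radau, require applying each theorem at two consecutive indices and using the Gauss (respectively Gauss-Lobatto) iterate as a bridge. For right Gauss-Radau, \refthm{thm:lowbtwn} at index $i$ gives $\rightgr{g}_i \le g_{i+1}$, while at index $i+1$ it gives $g_{i+1} \le \rightgr{g}_{i+1}$; concatenating through $g_{i+1}$ produces $\rightgr{g}_i \le \rightgr{g}_{i+1}$. For left Gauss-Radau, \refthm{thm:upbtwn} at index $i+1$ gives $\leftgr{g}_{i+1} \le \lobatto{g}_{i+1}$, while at index $i$ it gives $\lobatto{g}_{i+1} \le \leftgr{g}_i$; concatenating through $\lobatto{g}_{i+1}$ gives $\leftgr{g}_{i+1} \le \leftgr{g}_i$, i.e.\ $\leftgr{g}_i \ge \leftgr{g}_{i+1}$.

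The only subtlety, and thus the main (minor) obstacle, is the boundary case $i = N-1$ for the two Gauss-Radau chains, where the second application of the respective theorem would require an index equal to $N$, which the hypothesis $i<N$ excludes. I would close this gap by recalling that after $N$ steps every Gauss-type rule integrates the piecewise-constant measure $\alpha$ exactly (it has exactly $N$ points of increase), so $g_N = \rightgr{g}_N = \leftgr{g}_N = \lobatto{g}_N = u^\top A^{-1} u$. Then $\rightgr{g}_{N-1} \le g_N = \rightgr{g}_N$ follows from \refthm{thm:lowbtwn} at $i=N-1$, and $\leftgr{g}_{N-1} \ge \lobatto{g}_N = \leftgr{g}_N$ follows from \refthm{thm:upbtwn} at $i=N-1$, completing all four inequalities.
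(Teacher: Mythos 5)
Your decomposition is exactly the paper's: the paper proves \refcor{cor:monlow} in one line, as ``directly drawn from'' the Gauss monotonicity theorem (Theorem~\ref{append:thm:monogauss}) together with \refthm{thm:lowbtwn} and \refthm{thm:upbtwn}, and your four transitivity chains spell out precisely that argument (your use of \refthm{thm:lowbtwn} to obtain $g_i\le g_{i+1}$ in place of the separate Gauss monotonicity theorem is immaterial, since that theorem is what underlies \refthm{thm:lowbtwn} anyway). Your explicit treatment of the boundary index $i=N-1$ is a point of care that the paper's one-line proof skips entirely.

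However, that boundary patch rests on one false claim: it is not true that $\lobatto{g}_N = u^\top A^{-1}u$. The paper's exactness result (\reflem{append:lem:exact}) deliberately asserts exactness only for $g_N$, $\leftgr{g}_N$, and $\rightgr{g}_N$. Gauss-Lobatto is excluded because at step $N$ its rule has only $N-1$ free nodes (two nodes are pinned at $\lambda_{\min}$ and $\lambda_{\max}$, which lie strictly outside the spectrum), so the free-node polynomial $\pi_{N-1}$, having degree $N-1$, cannot vanish at all $N$ eigenvalues; the remainder, proportional to $I[(\lambda-\lambda_{\min})(\lambda-\lambda_{\max})\pi_{N-1}^2]$, is therefore nonzero in general, and $\lobatto{g}_N$ remains a strict upper bound. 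One can also see this from \refalgo{algo:gql}: $\beta_N=0$ kills the Gauss-Radau corrections, so $\leftgr{g}_N=\rightgr{g}_N=g_N$, but $(\lobatto{\beta}_N)^2$ does not involve $\beta_N$ and is nonzero. Fortunately your conclusion survives, because your chain only needs the inequality $\lobatto{g}_N \ge \leftgr{g}_N$, not equality: $\lobatto{g}_N$ is an upper bound on $u^\top A^{-1}u$ by \refthm{lem:bounds}, while $\leftgr{g}_N$ equals it by \reflem{append:lem:exact}. Cleaner still, both boundary cases can be handled without mentioning Gauss-Lobatto at all: $\leftgr{g}_{N-1}\ge u^\top A^{-1}u = \leftgr{g}_N$ and $\rightgr{g}_{N-1}\le u^\top A^{-1}u=\rightgr{g}_N$, directly from \refthm{lem:bounds} and \reflem{append:lem:exact}.
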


\subsection{Convergence rates}
Our next two results state linear convergence rates for left Gauss-Radau quadrature and Gauss-Lobatto quadrature applied to computing the BIF $u^T\inv{A}u$.

\begin{theorem}[Relative error left Gauss-Radau]
\label{thm:lrconv}
For each $i$, the left Gauss-Radau iterate $\leftgr{g}_i$ satisfies
\begin{equation*}
  \frac{\leftgr{g}_i - g_N}{g_N} \le 2\kappa^+\Bigl(\frac{\sqrt{\kappa} - 1}{\sqrt{\kappa} + 1}\Bigr)^i,
\end{equation*}
where $\kappa^+ := \lambda_N/ \lambda_{\min},\ i<N$.
\end{theorem}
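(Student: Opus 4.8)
The plan is to reduce the left Gauss-Radau error to the already-established Gauss error (Theorem~\ref{thm:gaussconv}) by comparing the two rules through their variational (orthogonal-polynomial) characterizations, so that the factor $\kappa^+$ emerges from a single elementary pointwise estimate. Throughout I write $a := \lambda_{\min}$, let $d\alpha$ be the Riemann--Stieltjes measure of~\eqref{eq:1} supported on $[\lambda_1,\lambda_N]$, and recall $g_N = u^\top A^{-1}u = \int \lambda^{-1}\,d\alpha$. Since $\leftgr{g}_i$ is an upper bound (Theorem~\ref{lem:bounds}), the overshoot $\leftgr{g}_i - g_N \ge 0$ is exactly what must be controlled.

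First I would record the exact error representation of left Gauss-Radau. The $i$ free nodes of $\leftgr{g}_i$ are the zeros of the monic degree-$i$ polynomial $\rho_i$ orthogonal with respect to the positive measure $(\lambda - a)\,d\alpha$ (positive because $a<\lambda_1$). Applying the Hermite interpolation error formula to $f(\lambda)=1/\lambda$ at the node set $\{a\}\cup\{\theta_j\}_{j=1}^i$ (each $\theta_j$ doubled) and using that the divided difference of $1/\lambda$ over these $2i+2$ points equals $-(a\,\lambda\,\rho_i(0)^2)^{-1}$, I obtain
\begin{equation*}
  \leftgr{g}_i - g_N = \frac1a\int \Bigl(1-\frac a\lambda\Bigr)\Bigl(\frac{\rho_i(\lambda)}{\rho_i(0)}\Bigr)^2 d\alpha(\lambda).
\end{equation*}

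Next I would convert this into a minimization. Writing $d\beta := (1-a/\lambda)\,d\alpha$, so that $\rho_i$ is orthogonal with respect to $(\lambda-a)\,d\alpha = \lambda\,d\beta$, the normalized polynomial $\rho_i/\rho_i(0)$ is exactly the minimizer of $\int r^2\,d\beta$ over $\{r : r(0)=1,\ \deg r\le i\}$ (the Lagrange conditions for this constrained least-squares problem are precisely $\lambda d\beta$-orthogonality to $\mathbb{P}_{i-1}$). The same reasoning identifies the Gauss error as $g_N - g_i = \min_{r(0)=1,\ \deg r\le i}\int \lambda^{-1}r^2\,d\alpha$, with minimizer $r_\star = \pi_i/\pi_i(0)$. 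Hence $\leftgr{g}_i - g_N = \tfrac1a\min_{r(0)=1,\ \deg r\le i}\int r^2(1-a/\lambda)\,d\alpha$, and bounding this minimum from above by its value at $r_\star$ together with the pointwise inequality $r^2(1-a/\lambda)\le r^2 = \lambda\cdot\lambda^{-1}r^2 \le \lambda_N\,\lambda^{-1}r^2$ (valid on $[\lambda_1,\lambda_N]$) gives
\begin{equation*}
  \leftgr{g}_i - g_N \le \frac{\lambda_N}{a}\int \lambda^{-1}r_\star^2\,d\alpha = \frac{\lambda_N}{\lambda_{\min}}\,(g_N - g_i) = \kappa^+\,(g_N - g_i),
\end{equation*}
after which Theorem~\ref{thm:gaussconv} delivers $(\leftgr{g}_i-g_N)/g_N \le \kappa^+(g_N-g_i)/g_N \le 2\kappa^+\bigl((\sqrt\kappa-1)/(\sqrt\kappa+1)\bigr)^i$.

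I expect the main obstacle to be establishing the error representation and, above all, the variational identity for $\leftgr{g}_i - g_N$: one must verify that the interior Radau nodes are the zeros of the $(\lambda-a)\,d\alpha$-orthogonal polynomial and that the induced weighted least-squares problem is genuinely minimized by $\rho_i/\rho_i(0)$. Once this characterization is in hand the rest is elementary, and the origin of $\kappa^+=\lambda_N/\lambda_{\min}$ is transparent: the prefactor $1/a = 1/\lambda_{\min}$ comes from the fixed node at the bottom of the spectrum, while the factor $\lambda_N$ comes from trading the weight $1$ for the Gauss weight $1/\lambda$. A secondary check is that the index $i$ of the Radau rule (number of free nodes) aligns with the index in Theorem~\ref{thm:gaussconv}, which I would confirm against the GQL recursion in Algorithm~\ref{algo:gql}.
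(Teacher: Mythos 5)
Your proof is correct, but it reaches the bound by a genuinely different route than the paper. The paper works entirely inside the GQL recursion of Algorithm~\ref{algo:gql}: it first establishes the algebraic identity $\alpha_{i+1}-\leftgr{\alpha}_i=\leftgr{\delta}_{i+1}$ via ratios of characteristic polynomials (Lemma~\ref{append:lem:delta}), then the estimate $\leftgr{\delta}_{i+1}/\delta_{i+1}\le 1-\lambda_{\min}/\lambda_N$ (Lemma~\ref{append:lem:comp}), from which the Sherman--Morrison update formulas give $\leftgr{g}_i-g_i=(g_{i+1}-g_i)\bigl(1-\leftgr{\delta}_{i+1}/\delta_{i+1}\bigr)^{-1}\le\kappa^+(g_{i+1}-g_i)\le 2\kappa^+\bigl((\sqrt{\kappa}-1)/(\sqrt{\kappa}+1)\bigr)^i g_N$ (Lemma~\ref{append:lem:diffconv}); the theorem then follows from $\leftgr{g}_i-g_N\le\leftgr{g}_i-g_i$. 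You instead stay at the level of classical quadrature theory: the exact Hermite-interpolation error representation of the Radau rule for $f(\lambda)=1/\lambda$, the identification of the free Radau nodes as zeros of the degree-$i$ polynomial orthogonal with respect to $(\lambda-\lambda_{\min})\,d\alpha$, and the extremal-polynomial characterization of both the Radau and the Gauss errors, closed out by the pointwise estimate $1-\lambda_{\min}/\lambda\le 1\le\lambda_N/\lambda$ on the support of $d\alpha$. Both arguments are sound and land on the same final inequality; your intermediate inequality $\leftgr{g}_i-g_N\le\kappa^+(g_N-g_i)$ is an algorithm-independent comparison of the Radau overshoot with the Gauss undershoot that makes the provenance of $\kappa^+$ transparent (the $1/\lambda_{\min}$ from the fixed node, the $\lambda_N$ from trading the weight $1$ for $1/\lambda$), and it would survive any reformulation of the algorithm. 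The cost is that you must import or prove two standard but nontrivial facts --- the orthogonality characterization of the Radau nodes and the exact error identity --- which the paper's recursion-based argument avoids entirely; conversely, the paper's route yields as a by-product the computational shortcut noted after Lemma~\ref{append:lem:delta} and reuses the Lanczos-polynomial sign analysis already needed for Theorems~\ref{thm:lowbtwn} and~\ref{thm:upbtwn}. Your final indexing concern also checks out: in Algorithm~\ref{algo:gql} the modified matrix $\leftgr{J}_i$ extends $J_i$ by one row and column with prescribed eigenvalue $\lambda_{\min}$, so $\leftgr{g}_i$ has exactly $i$ free nodes, matching the degree-$i$ polynomials in your variational argument and the exponent $i$ in Theorem~\ref{thm:gaussconv}.
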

Theorem~\ref{thm:lrconv} shows that the error again decreases linearly, and it also depends on teh accuracy of $\lambda_{\min}$, our estimate of the smallest eigenvalue that determines the range of integration.
Using the relations between left Gauss-Radau and Gauss-Lobatto, we readily obtain the following corollary.
\begin{corollary}[Relative error Gauss-Lobatto]
\label{cor:loconv}
For each $i$, the Gauss-Lobatto iterate $\lobatto{g}_i$ satisfies
\begin{equation*}
  \frac{\lobatto{g}_i - g_N}{g_N} \le 2\kappa^+\Bigl(\frac{\sqrt{\kappa} - 1}{\sqrt{\kappa} + 1}\Bigr)^{i-1},
\end{equation*}
where $\kappa^+ := \lambda_N/ \lambda_{\min}\text{ and }i<N$.
\end{corollary}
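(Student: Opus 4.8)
The plan is to obtain the Gauss-Lobatto bound directly from the left Gauss-Radau bound of \refthm{thm:lrconv}, using the interlacing relation between the two rules established in \refthm{thm:upbtwn}. The whole argument is a single index shift: \refthm{thm:upbtwn} gives $\lobatto{g}_{i+1}\le\leftgr{g}_i$, so reading it one step earlier yields $\lobatto{g}_i\le\leftgr{g}_{i-1}$. This converts a statement about Gauss-Lobatto at step $i$ into one about left Gauss-Radau at step $i-1$, and the latter's exponent $i-1$ is exactly the one appearing in the claim. Nothing in the argument perturbs the constant $\kappa^+$ or the base $(\sqrt{\kappa}-1)/(\sqrt{\kappa}+1)$, so they carry over verbatim.

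Concretely, I would first record that $g_N=I[f]=u^\top A^{-1}u$ is the exact target and that $g_N>0$ since $A$ is positive definite; this positivity is what lets me divide an inequality by $g_N$ without reversing it. Next, starting from the shifted left inequality $\lobatto{g}_i\le\leftgr{g}_{i-1}$ of \refthm{thm:upbtwn}, and recalling that both iterates are upper bounds on $g_N$ by \refthm{lem:bounds}, I would subtract $g_N$ from both sides and divide by $g_N$ to obtain
\[
\frac{\lobatto{g}_i - g_N}{g_N}\;\le\;\frac{\leftgr{g}_{i-1} - g_N}{g_N}.
\]
Finally, applying \refthm{thm:lrconv} at iteration $i-1$ bounds the right-hand side by $2\kappa^+\bigl((\sqrt{\kappa}-1)/(\sqrt{\kappa}+1)\bigr)^{i-1}$, which is precisely the asserted estimate.

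There is no heavy computation here; the work is entirely bookkeeping, and the substantive content already lives in \refthm{thm:upbtwn} and \refthm{thm:lrconv}. The only points I expect to demand care are the index ranges. Since \refthm{thm:upbtwn} requires $i<N$, the shifted inequality $\lobatto{g}_i\le\leftgr{g}_{i-1}$ is valid for $i\le N$, which comfortably covers the stated range $i<N$. The genuine subtlety is the smallest index $i=1$, where $\leftgr{g}_{i-1}=\leftgr{g}_0$ is degenerate; I would dispatch this boundary case separately, either directly from the definition of the first Gauss-Lobatto iterate or by noting the bound is trivially met there, so that the clean shift argument applies for all $i\ge 2$.
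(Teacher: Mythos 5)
Your proposal is correct and is exactly the paper's argument: the paper obtains \refcor{cor:loconv} from the interlacing relation of \refthm{thm:upbtwn}, read one step earlier as $\lobatto{g}_i \le \leftgr{g}_{i-1}$, combined with \refthm{thm:lrconv} applied at iteration $i-1$, which is precisely your index-shift. Your separate handling of the degenerate boundary case $i=1$ (where $\leftgr{g}_0$ is undefined) is a point of care the paper glosses over, but it does not alter the substance of the argument.
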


\textbf{Remarks} All aforementioned results assumed that $A$ is strictly positive definite with simple eigenvalues. In~Appendix~\ref{append:sec:general}, we show similar results for the more general case that $A$ is only required to be symmetric, and $u$ lies in the space spanned by eigenvectors of $A$ corresponding to distinct positive eigenvalues.

\subsection{Empirical Evidence}
\label{sec:conv}
Next, we empirically verify our the theoretical results shown above. We generate a random symmetric matrix $A \in\mathbb{R}^{100\times 100}$ with density $10\%$, where each entry is either zero or standard normal, and shift its diagonal entries to make its smallest eigenvalue $\lambda_1=10^{-2}$, thus making $A$ positive definite.
We set $\lambda_{\min} = \lambda_1^- = (\lambda_1 - 10^{-5})$ and $\lambda_{\max} = \lambda_N^+ = (\lambda_N + 10^{-5})$. We randomly sample $u\in\mathbb{R}^{100}$ from a standard normal distribution. Figure~\ref{fig:conv} illustrates how the lower and upper bounds given by the four quadrature rules evolve with the number of iterations. 

Figure~\ref{fig:conv} (b) and (c) show the sensitivity of the rules (except Gauss quadrature) to estimating the extremal eigenvalues. Specifically, we use $\lambda_{\min} = 0.1\lambda_1^-$ and $\lambda_{\max} = 10\lambda_N^+$.

\begin{figure}[h!]
\centering
  	\hspace*{-8pt}
  	\begin{subfigure}{.32\textwidth}
	\centering
  	\includegraphics[width = \textwidth]{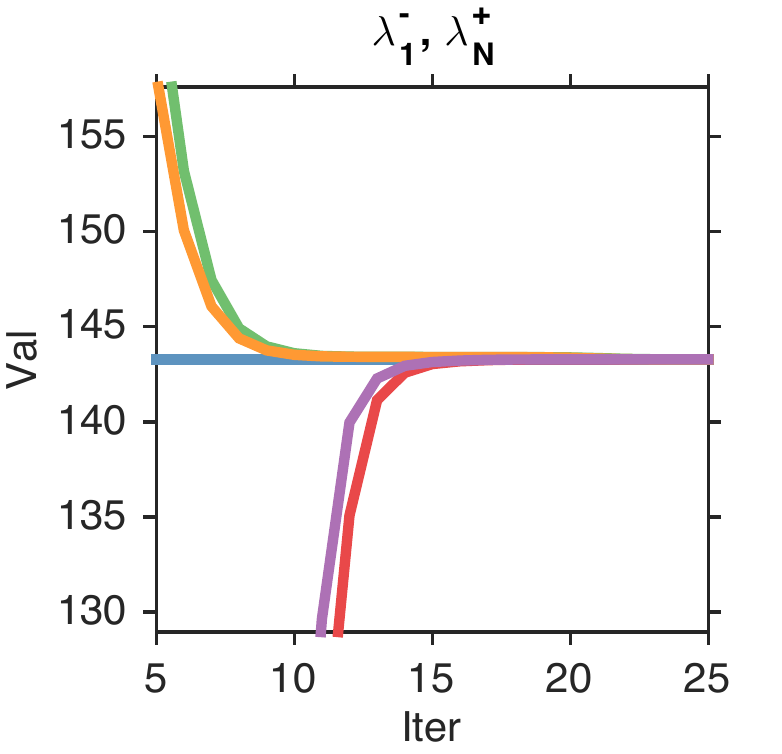}
	\caption{}
	\end{subfigure}%
        \hspace*{-8pt}
	\begin{subfigure}{.32\textwidth}
	\centering
  	\includegraphics[width = \textwidth]{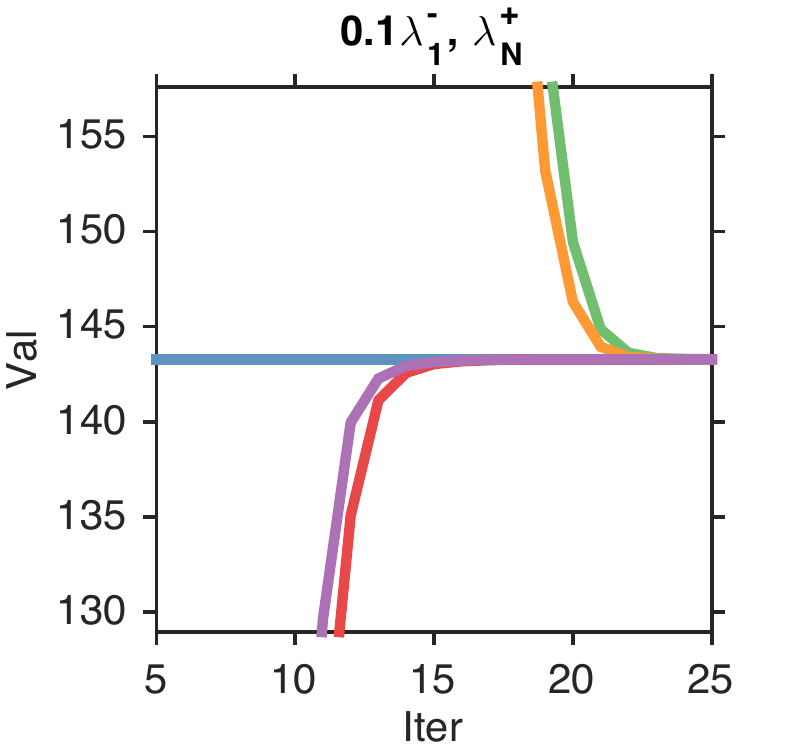}
	\caption{}
	\end{subfigure}%
  	\hspace*{-8pt}
	\begin{subfigure}{.32\textwidth}
	\centering
  	\includegraphics[width = \textwidth]{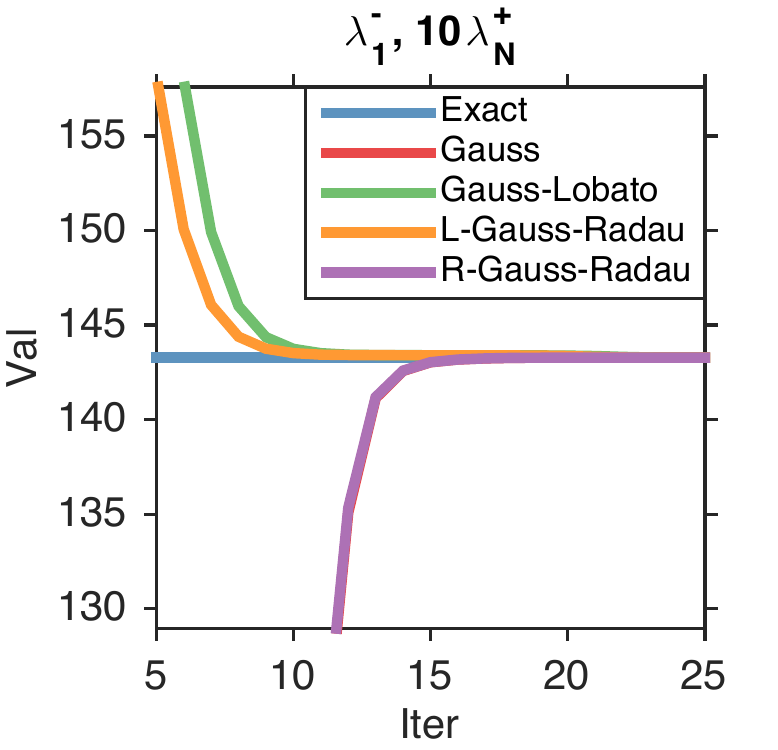}
	\caption{}
	\end{subfigure}
\caption{ Lower and upper bounds computed by Gauss-type quadrature in each iteration on $u^\top\inv{A}u$ with $A\in\mathbb{R}^{100\times 100}$. } 
\label{fig:conv}
\end{figure}

The plots in Figure~\ref{fig:conv} agree with the theoretical results. First, all quadrature rules are seen to yield iteratively tighter bounds. The bounds obtained by the Gauss-Radau quadrature are superior to those given by Gauss and Gauss-Lobatto quadrature (also numerically verified). Notably, the bounds given by all quadrature rules converge very fast -- within 25 iterations they yield reasonably tight bounds. 

It is valuable to see how the bounds are affected if we do not have good approximations to the extremal eigenvalues $\lambda_1$ and $\lambda_N$. Since Gauss quadrature does not depend on the approximations $\lambda_{\min} < \lambda_1$ and $\lambda_{\max} > \lambda_N$, its bounds remain the same in (a),(b),(c). Left Gauss-Radau depends on the quality of $\lambda_{\min}$, and, with a poor approximation takes more iterations to converge (Figure~\ref{fig:conv}(b)). Right Gauss-Radau depends on the quality of $\lambda_{\max}$; thus, if we use $\lambda_{\max}=10\lambda_N^+$ as our approximation, its bounds become worse (Figure~\ref{fig:conv}(c)). However, its bounds are never worse than those obtained by Gauss quadrature.
 
Finally, Gauss-Lobatto depends on both $\lambda_{\min}$ and $\lambda_{\max}$, so its bounds become worse whenever we lack good approximations to $\lambda_1$ or $\lambda_N$. Nevertheless, its quality is lower-bounded by left Gauss-Radau as stated in~\refthm{thm:upbtwn}.

\section{Algorithmic Results and Applications}
\label{sec:algos}
Our theoretical results show that Gauss-Radau quadrature provides good lower and upper bounds to BIFs. More importantly, these bounds get iteratively tighter at a linear rate, finally becoming exact~(see Appendix~\ref{app:sec:proofs}). However, in many applications motivating our work (see Section~\ref{sec:motiv.app}), we do not need exact values of BIFs; bounds that are tight enough suffice for the algorithms to proceed. As a result, all these applications benefit from our theoretical results that provide iteratively tighter bounds. This idea translates into a \emph{retrospective} framework for accelerating methods whose progress relies on knowing an interval containing the BIF. 
Whenever the algorithm takes a step (\emph{transition}) that depends on a BIF (e.g., as in the next section, a state transition in a sampler if the BIF exceeds a certain threshold), we compute rough bounds on its value. If the bounds suffice to take the critical decision (e.g., decide the comparison), then we stop the quadrature. If they do not suffice, we take one or more additional iterations of quadrature to tighten the bound. 
Algorithm~\ref{algo:framework} makes this idea explicit.

\begin{algorithm}\small
  \caption{Efficient Retrospective Framework}\label{algo:framework}
	\begin{algorithmic} 
	\REQUIRE{Algorithm with transitions that depend on BIFs}
	\WHILE{algorithm not yet done}
		\WHILE{no transition request for values of a BIF}
			\STATE proceed with the original algorithm
		\ENDWHILE
		\IF{exist transition request for values of a BIF}
			\WHILE{bounds on the BIF not tight enough to make the transition}
				\STATE Retrospectively run one more iteration of left and(or) right Gauss-Radau to obtain tighter bounds.
			\ENDWHILE
			\STATE Make the correct transition with bounds
		\ENDIF
	\ENDWHILE
\end{algorithmic}
\end{algorithm}

We illustrate our framework by accelerating: (i) Markov chain sampling for ($k$-)\dpp{}s; and (ii) maximization of a (specific) nonmonotone submodular function.

\subsection{Retrospective Markov Chain ($k$-)\dpp}
\label{sec:mcdpp}
First, we use our framework to accelerate iterative samplers for Determinantal Point Processes.
Specifically, we discuss MH sampling \citep{kang2013fast}; the variant for Gibbs sampling follows analogously. 

The key insight is that all state transitions of the Markov chain rely on a comparison between a scalar $p$ and a quantity involving the bilinear inverse form. 
Given the current set $Y$, assume we propose to add element $y$ to $Y$. The probability of transitioning to state $Y \cup \{y\}$ is $q = \min\{1,L_{y,y}-L_{y,Y}L_Y^{-1}L_{Y,y}\}$. To decide whether to accept this transition, we sample $p\sim \mathrm{Uniform(0,1)}$; if $p<q$ then we accept the transition, otherwise we remain at $Y$. Hence, we need to compute $q$ just accurately enough to decide whether $p<q$. To do so, we can use the aforementioned lower and upper bounds on $L_{y,Y}L_Y^{-1}L_{Y,y}$.

Let $s_i$ and $t_i$ be lower and upper bounds for this BIF in the $i$-th iteration of Gauss quadrature.
If $p \le L_{y,y} - t_i$, then we can safely accept the transition, if $p \ge L_{y,y} - s_i$, then we can safely reject the transition.
Only if $L_{y,y}-t_i < p < L_{y,y}-s_i$, we cannot make a decision yet, and therefore 
retrospectively perform one more iteration of Gauss quadrature to obtain tighter upper and lower bounds 
$s_{i+1}$ and $t_{i+1}$. 
We continue until the bounds are sharp enough to safely decide whether to make the transition.
Note that in each iteration we make the same decision as we would with the exact value of the BIF, and hence the resulting algorithm (\refalgo{algo:gaussdpp}) is an exact Markov chain for the \dpp.
In each iteration, it calls~\refalgo{algo:dpp_judge}, which uses step-wise lazy Gauss quadrature for deciding the comparison, while stopping as early as possible.

\begin{algorithm}[t!]\small
	\caption{\small Gauss-\dpp($L$)}\label{algo:gaussdpp}
	\begin{algorithmic} 
	\REQUIRE{\dpp kernel $L$; ground set $\cy$}
	\ENSURE{$Y$ sampled from exact \dpp$(L)$}
	\STATE Randomly Initialize $Y\subseteq \cy$
	\WHILE{chain not mixed}
		\STATE Pick $y\in\cy$, $p\in(0,1)$ uniformly randomly
		\IF{$y\in Y$}
			\STATE $Y' = Y\backslash\{y\}$
			\STATE Compute bounds $\lambda_{\min}$, $\lambda_{\max}$ on the spectrum of $L_{Y'}$
			\IF{\textsc{DppJudge}($L_{yy}{-}{p}$, $L_{Y',y}$, $L_{Y'}$, $\lambda_{\min}$, $\lambda_{\max}$)}
				\STATE $Y = Y'$
			\ENDIF
		\ELSE
			\STATE $Y' = Y\cup \{y\}$
			\STATE Compute bounds $\lambda_{\min}$, $\lambda_{\max}$ on the spectrum of $L_{Y}$
			\IF{\textbf{not} \textsc{DppJudge}($L_{yy}{-}p$, $L_{Y,y}$, $L_{Y}$, $\lambda_{\min}$, $\lambda_{\max}$)}
				\STATE $Y = Y'$
			\ENDIF
		\ENDIF
	\ENDWHILE
\end{algorithmic}
\end{algorithm}

\begin{algorithm}[t!]\small
	\caption{\textsc{DppJudge}($t, u, A, \lambda_{\min}, \lambda_{\max}$)}\label{algo:dpp_judge}
	\begin{algorithmic} 
	\REQUIRE{target value $t$; vector $u$, matrix $A$; lower and upper bounds $\lambda_{\min}$ and $\lambda_{\max}$ on the spectrum of $A$}
	\ENSURE{Return \textbf{true} if $t < u^\top A^{-1} u$, \textbf{false} otherwise}
	\WHILE{{\bf true}}
		\STATE Run one Gauss-Radau iteration to get $\rightgr{g}$ and $\leftgr{g}$ for $u^\top A^{-1}u$.
		\IF{$t < \rightgr{g}$}
			\STATE \textbf{return} \textbf{true}
		\ELSIF{$t \ge \leftgr{g}$}
			\STATE \textbf{return} \textbf{false}
		\ENDIF
		\STATE $i = i + 1$
	\ENDWHILE
\end{algorithmic}
\end{algorithm}

If we condition the \dpp on observing a set of a fixed cardinality $k$, we obtain a $k$-\dpp. The MH sampler for this process is similar, but a state transition corresponds to swapping two elements (adding $y$ and removing $v$ at the same time). Assume the current set is $Y = Y'\cup\{v\}$. If we propose to delete $v$ and add $y$ to $Y'$, then the corresponding transition probability is 
\begin{align}
q &= \min\Big\{1, {L_{y,y} - L_{y,Y'}L_{Y'}^{-1}L_{Y',y} \over L_{v,v} - L_{v,Y'}L_{Y'}^{-1}L_{Y',v}}\Big\}.
\end{align}
Again, we sample $p\sim\text{Uniform}(0,1)$, but now we must compute two quantities, and hence two sets of lower and upper bounds: $s_i^y$, $t_i^y$ for $L_{y,Y'}L_{Y'}^{-1}L_{Y',y}$ in the $i$-th Gauss quadrature iteration, and $s_j^v$, $t_j^v$ for $L_{v,Y'}L_{Y'}^{-1}L_{Y',v}$ in the $j$-th Gauss quadrature iteration. 
Then if we have $p \le {L_{y,y} - t_i^y\over L_{v,v}- s_j^v}$, we can safely accept the transition; and if $p \ge {L_{y,y} - s_i^y\over L_{v,v} - t_j^v}$ we can safely reject the transition; otherwise, we tighten the bounds via additional Gauss-Radau iterations.

\textbf{Refinements.} We could perform one iteration for both $y$ and $v$, but it may be that one set of bounds is already sufficiently tight, while the other is loose. A straightforward idea would be to judge the tightness of the lower and upper bounds by their difference (gap) $t_i{-}s_i$, and decide accordingly which quadrature to iterate further. 

But the bounds for $y$ and $v$ are not symmetric and contribute differently to the transition decision.
In essence, we need to judge the relation between $p$ and ${L_{y,y} - L_{y,Y'}L_{Y'}^{-1}L_{Y',y} \over L_{v,v} - L_{v,Y'}L_{Y'}^{-1}L_{Y',v}}$, or, equivalently, the relation between $pL_{v,v} - L_{y,y}$ and $p L_{v,Y'}L_{Y'}^{-1}L_{Y',v} - L_{y,Y'}L_{Y'}^{-1}L_{Y',y}$.
Since the left hand side is ``easy'', the essential part is the right hand side. 
Assuming that in practice the impact is larger when the gap is larger, we tighten the bounds for $L_{v,Y}L_Y^{-1}L_{Y,v}$ if $p(t_j^v - s_j^v) > (t_i^y - s_i^y)$, and otherwise tighen the bounds for $L_{y,Y}L_Y^{-1}L_{Y,y}$. Details of the final algorithm with this refinement are shown in Appendix~\ref{append:sec:kdppalgo}.

\subsection{Retrospective Double Greedy Algorithm}
\label{sec:app.greedy}
As indicated in Section~\ref{sec:motiv.app}, a number of applications, including sensing and information maximization with Gaussian Processes, rely on maximizing a submodular function given as $F(S) = \log\det(L_S)$. In general, this function may be non-monotone. In this case, an algorithm of choice is the double greedy algorithm of~\citet{buchbinder12}.

The double greedy algorithm  
starts with two sets $X_0=\emptyset$ and $Y_0=\cy$ and serially iterates through all elements to construct a near-optimal subset. At iteration $i$, it includes element $i$ into $X_{i-1}$ with probability $q_i$, and with probability $1-q_i$ it excludes $i$ from $Y_{i-1}$. The decisive value $q_i$ is determined by the marginal gains $\Delta_i^- = F(Y_{i-1}\backslash\{i\})-F(Y_{i-1})$ and $\Delta_i^+ = F(X_{i-1}\cup\{i\})-F(X_{i-1})$:

\begin{align*}
 q_i = [\Delta_i^+]_+ / [\Delta_i^+]_+ + [\Delta_i^-]_+. 
\end{align*}
For the log-det function, we obtain
\begin{align*}
  \Delta_i^+ &= -\log(L_{i,i} - L_{i,Y_{i-1}'}L_{Y_{i-1}'}^{-1}L_{Y_{i-1}',i})\\
  \Delta_i^- &= \log(L_{i,i} - L_{i,X_{i-1}}L_{X_{i-1}}^{-1}L_{X_{i-1},i}),
\end{align*}
where $Y'_{i-1} = Y_{i-1}\backslash\{i\}$. In other words, at iteration $i$ the algorithm uniformly samples $p\in(0,1)$, and then checks if
\begin{equation*}
  p [\Delta_i^-]_+ \le (1-p)[\Delta_i^+]_+,
\end{equation*}
and if true, adds $i$ to $X_{i-1}$, otherwise removes it from $Y_{i-1}$.

This essential decision, whether to retain or discard an element, again involves bounding BIFs, for which we can take advantage of our framework, and profit from the typical sparsity of the data.  Concretely, we retrospectively compute the lower and upper bounds on these BIFs, i.e., lower and upper bounds $l_i^+$ and $u_i^+$ on $\Delta_i^+$, and $l_i^-$ and $u_i^-$ on $\Delta_i^-$. If $p [u_i^-]_+ \le (1-p)[l_i^+]_+$ we safely add $i$ to $X_{i-1}$; if $p [l_i^-]_+ > (1-p) [u_i^+]_+$ we safely remove $i$ from $Y_{i-1}$; otherwise we compute a set of tighter bounds by further iterating the quadrature.

As before, the bounds for $\Delta_i^-$ and $\Delta_i^+$ may not contribute equally to the transition decision. We can again apply the refinement mentioned in Section~\ref{sec:mcdpp}: if $p([u_i^-]_+ - [l_i^-]_+)\le (1-p)([u_i^+]_+ - [l_i^+]_+)$ we tighten bounds for $\Delta_i^+$, otherwise we tighten bounds for $\Delta_i^-$. The resulting algorithm is shown in Appendix~\ref{append:sec:dgalgo}.

\begin{figure}[!htbp]
\centering
\hspace{-5pt}
	\begin{subfigure}{.31\textwidth}
	\centering
	\includegraphics[width=\textwidth]{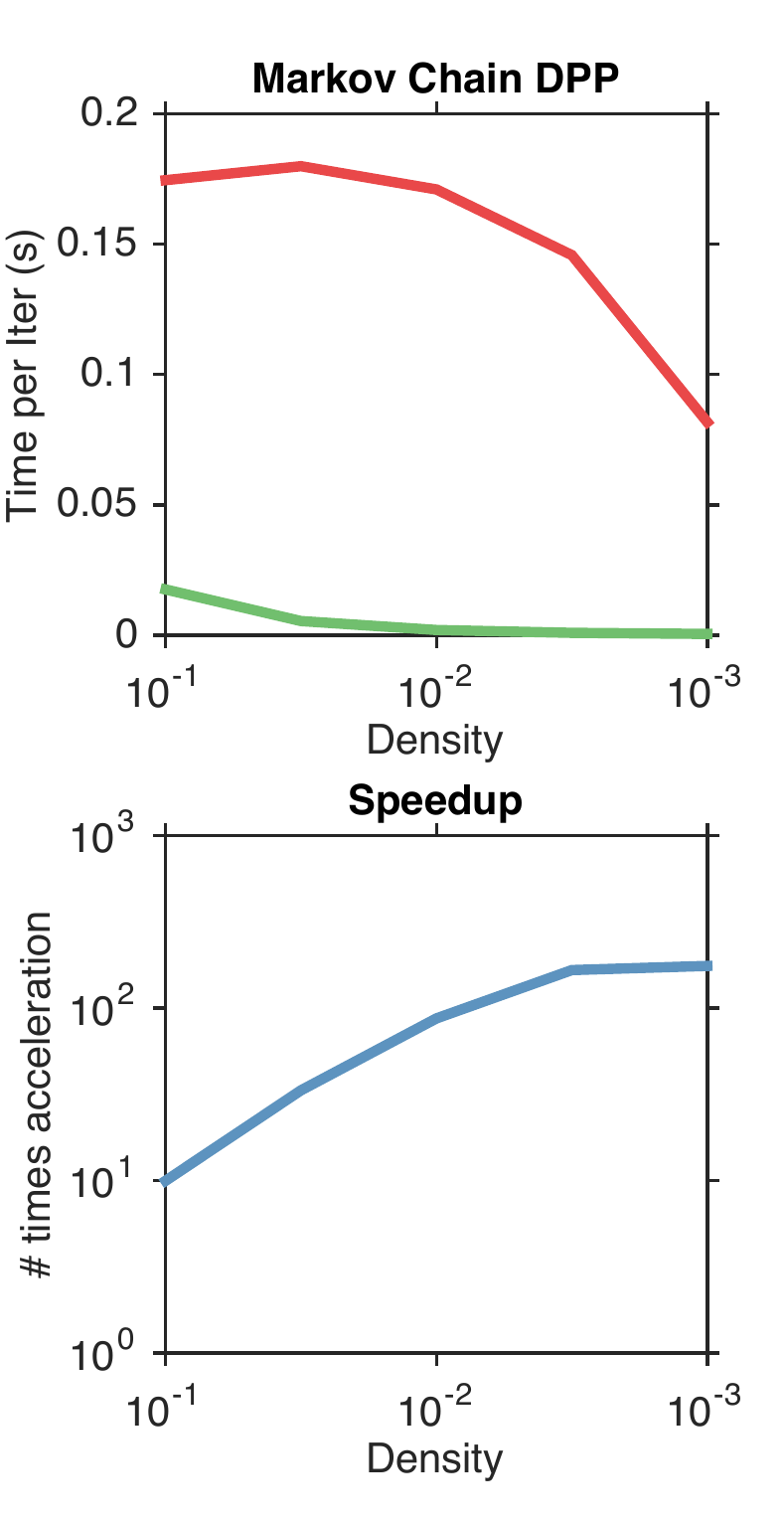}
	\end{subfigure}%
	\begin{subfigure}{.31\textwidth}
	\centering
	\includegraphics[width=\textwidth]{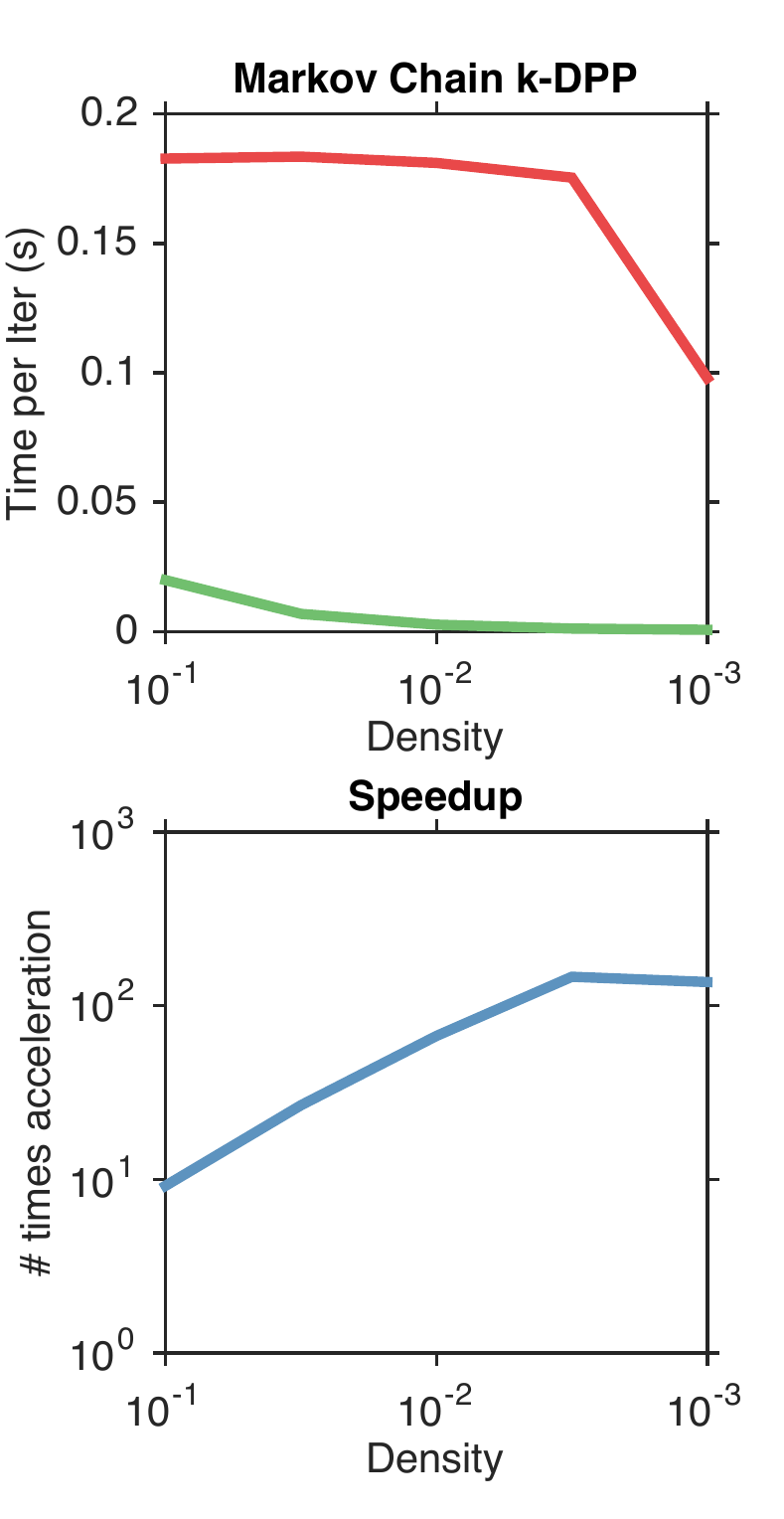}
	\end{subfigure}%
	\begin{subfigure}{.31\textwidth}
	\centering
	\includegraphics[width=\textwidth]{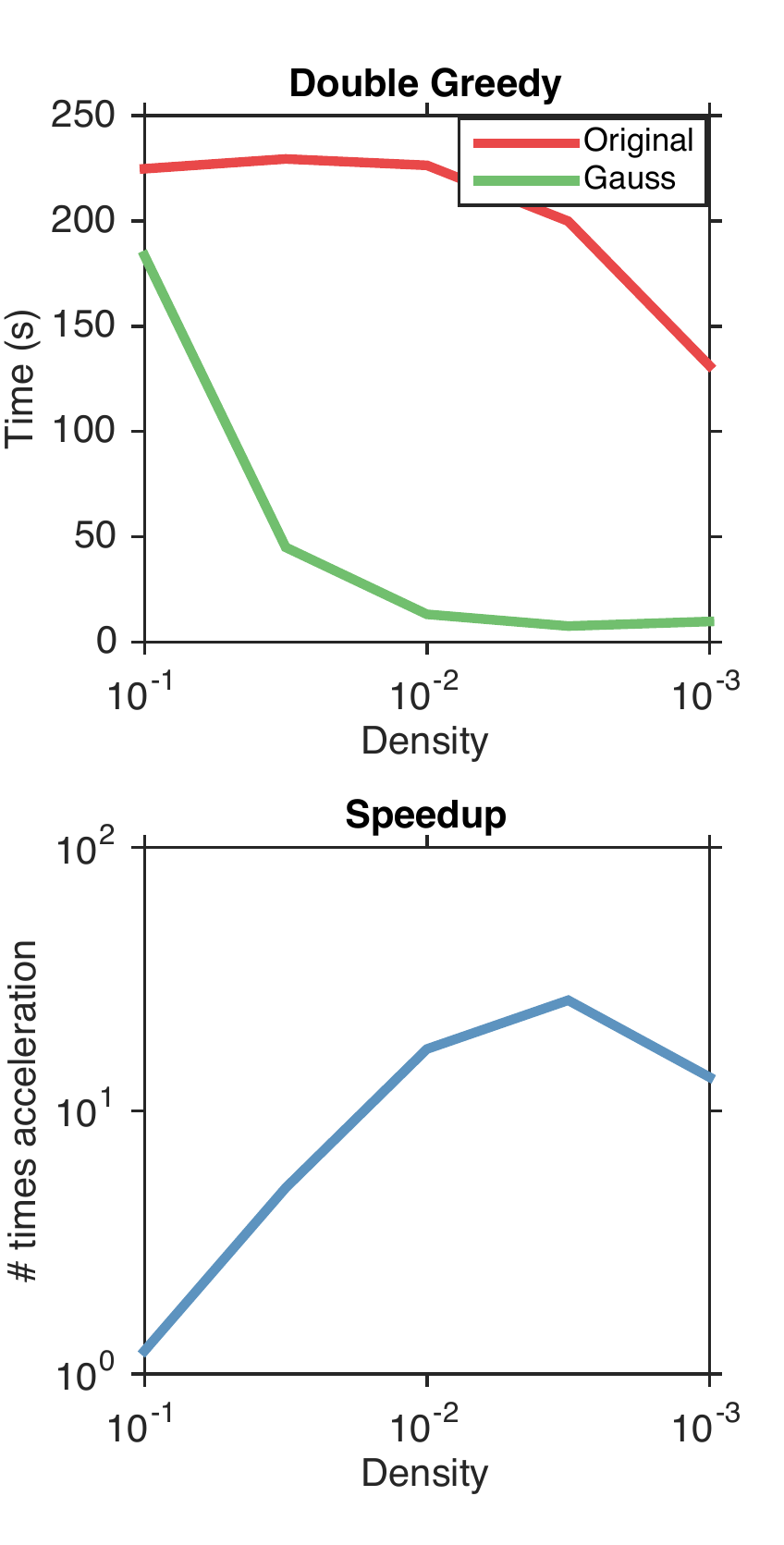}
	\end{subfigure}
	\caption{Running times (top) and corresponding speedup (bottom) on synthetic data. ($k$-)\dpp is initialized with random subsets of size $N/3$ and corresponding running times are averaged over 1,000 iterations of the chain. All results are averaged over 3 runs.}
	\label{fig:syntime}
\end{figure}

\begin{table}[!htbp]
  \begin{center}
  	\begin{tabular}{cccc}
	\toprule
	Data & Dimension & nnz & Density(\%)\\
	\midrule
	Abalone 	& 4,177 	& 144,553 	& 0.83  \\
	Wine		& 4,898		& 2,659,910 & 11.09\\
	\midrule
	GR			& 5,242		& 34,209	& 0.12 \\
	HEP			& 9,877		& 61,821	& 0.0634\\
	\midrule
	Epinions	& 75,879	& 518,231	& 0.009 \\
	Slashdot	& 82,168	& 959,454	& 0.014\\
	\bottomrule
	\end{tabular}
  	\caption{Data. For all datasets we add an 1E-3 times identity matrix to ensure positive definiteness.}
	\label{tab:realdata}
  \end{center}
\end{table}

\begin{table*}[h!]
  \begin{center}
  \begin{small}
    \begin{tabular}{c|cc|cc|cc|cc|cc|cc}
      \toprule
      & \multicolumn{2}{c|}{Abalone} & \multicolumn{2}{c|}{Wine} & \multicolumn{2}{c|}{GR} & \multicolumn{2}{c|}{HEP} & \multicolumn{2}{c|}{Epinions} & \multicolumn{2}{c}{Slashdot} \\    \midrule
      \multirow{2}{*}{\dpp}	& 9.6E-3	& 1x	& 8.5E-2	& 1x	& 9.3E-3	& 1x	& 6.5E-2	& 1x	& 1.46		& 1x		& 5.85		& 1x\\
      						& 5.4E-4	& 17.8x	& 5.9E-3	& 14.4x	& 4.3E-4	& 21.6x	& 5.9E-4	& 110.2x& 3.7E-3	& 394.6x	& 7.1E-3	& 823.9x\\
      \midrule
      \multirow{2}{*}{$k$-\dpp}	& 1.4E-2	& 1x	& 0.15	& 1x		& 1.7E-2	& 1x	& 0.13	& 1x 	& 2.40		& 1x	& 11.83	& 1x\\
      							& 7.3E-4	& 19.2x	& 1.1E-2	& 13.6x		& 7.3E-4	& 23.3x & 9.2E-4 	& 141.3x& 4.9E-3	& 489.8x & 1E-2	& 1183x\\
      \midrule
      \multirow{2}{*}{\dg}	& 1025.6	& 1x	& 1951.3	& 1x	& 965.8	& 1x	& 6269.4	& 1x	& $\ast$ 	& $\ast$	& $\ast$		& $\ast$\\
      						& 17.3		& 59.3x	& 423.2	& 4.6x	&    10		& 9.7x	& 25.3		& 247.8x& 418	& $\ast$	& 712.9	& $\ast$\\
      \bottomrule
    \end{tabular}
    \caption{Running time and speedup for ($k$-)\dpp and double greedy. For results on each dataset (occupying two columns), the first column shows the running time~(in seconds) and the second column shows the speedup. For each algorithm (occupying two rows), the first row shows results from the original algorithm and the second row shows results from algorithms using  our framework. 
    For Epinions and Slashdot, entries of ``$\ast$'' indicate that the experiments did not finish within 24 hours.}
    \label{tab:realspeedup}
    \end{small}
  \end{center}
\end{table*}

\subsection{Empirical Evidence}

We perform experiments on both synthetic and real-world datasets to test the impact of our retrospective quadrature framework in applications.
We focus on ($k$-)\dpp sampling and the double greedy algorithm for the log-det objective.

\subsubsection{Synthetic Datasets}

We generate small sparse matrices using methods similar to Section~\ref{sec:conv}. For ($k$-)\dpp we generate $5000\times 5000$ matrices while for double greedy we use $2000\times 2000$. We vary the density of the matrices from $10^{-3}$ to $10^{-1}$. The running time and speedup are shown in Figure~\ref{fig:syntime}.

The results suggest that our framework greatly accelerates both \dpp sampling and submodular maximization. The speedups are particularly pronounced for sparse matrices.
As the matrices become very sparse, the original algorithms profit from sparsity too, and the difference shrinks a little. Overall, we see that our framework has the potential to lead to substantial speedups for algorithms involving bilinear inverse forms.

\subsubsection{Real Datasets}

We further test our framework on real-world datasets of varying sizes. We selected 6 datasets, four of them are of small/medium size and two are large. 
The four small/medium-sized datasets are used in~\cite{gittens2013revisiting}. 
The first two of small/medium-sized datasets, Abalone and Wine\footnote{Available at \url{http://archive.ics.uci.edu/ml/}.}, are popular datasets for regression, and we construct sparse kernel matrices with an RBF kernel. We set the bandwidth parameter for Abalone as $\sigma = 0.15$ and that for Wine as $\sigma = 1$ and the cut-off parameter as $3\sigma$ for both datasets, as in~\cite{gittens2013revisiting}. 
The other two small/medium-sized datasets are GR~(arXiv High Energy Physics collaboration graph) and HEP~(arXiv General Relativity collaboration graph), where the kernel matrices are Laplacian matrices.
The final two large datasets datasets are Epinions~(Who-trusts-whom network of Epinions) and Slashdot~(Slashdot social network from Feb. 2009)~\footnote{Available at \url{https://snap.stanford.edu/data/}.} with large Laplacian matrices. Dataset statistics are shown in~\reftab{tab:realdata}.

The running times in~\reftab{tab:realspeedup} suggest that the iterative bounds from quadrature significantly accelerate ($k$-)\dpp sampling and double greedy on real data. Our algorithms lead to speedups of up to a thousand times.

On the large sparse matrices, the ``standard'' double greedy algorithm did not finish within 24 hours, due to the expensive matrix operations involved. With our framework, the algorithm needs only 15 minutes.

To our knowledge, these results are the first time to run \dpp and double greedy for information gain on such large datasets.

\subsection{Numerical details}
\label{sec:discuss}
\paragraph{Instability.}
As seen in~\refalgo{algo:gql}, the quadrature algorithm is built upon Lanczos iterations. Although in theory Lanczos iterations construct a set of orthogonal Lanczos vectors, in practice the constructed vectors usually lose orthogonality after some iterations due to rounding errors. One way to deal with this problem  is to reorthogonalize the vectors, either completely at each iteration or selectively~\cite{parlett1979lanczos}. Also, an equivalent Lanczos iteration proposed in~\cite{paige1971computation} which uses a different expression to improve local orthogonality. Further discussion on numerical stability of the method lies beyond the scope of this paper.

\paragraph{Preconditioning.}
For Gauss quadrature on $u^\top A^{-1} u$, 
the convergence rate of bounds is dependent on the condition number of $A$.
We can use preconditioning techniques to get a well-conditioned submatrix and proceed with that.
Concretely, observe that for non-singular $C$,
\begin{align*}
u^\top A^{-1} u &= u^\top C^\top C^{-\top}A^{-1}C^{-1} Cu \\
&= (Cu)(CAC^\top)^{-1}(Cu).
\end{align*}
Thus, if $CAC^\top$ is well-conditioned, we can use it with the vector $Cu$ in Gauss quadrature.

There exists various ways to obtain good preconditioners for an SPD matrix. A simple choice is to use $C = [\text{diag}(A)]^{-1/2}$. 
There also exists methods for efficiently constructing sparse inverse matrix~\cite{benzi1996sparse}.
If $L$ happens to be an SDD matrix, we can use techniques introduced in~\cite{cheng2014scalable} to construct an approximate sparse inverse in near linear time.

\section{Conclusion}

In this paper we present a general and powerful computational framework  for algorithms that rely on computations of bilinear inverse forms. The framework uses Gauss quadrature methods to lazily and iteratively tighten bounds, and is supported by our new theoretical results.
We analyze properties of the various types of Gauss quadratures for approximating the bilinear inverse forms and show that all bounds are monotonically becoming tighter with the number of iterations; those given by Gauss-Radau are superior to those obtained from other Gauss-type quadratures; and both lower and upper bounds enjoy a linear convergence rate. 
We empirically verify the efficiency of our framework and are able to obtain speedups of up to a thousand times for two popular examples: maximizing information gain and sampling from determinantal point processes. 

\paragraph{Acknowledgements} This research was partially supported by NSF CAREER award 1553284 and a Google Research Award.

\bibliographystyle{abbrvnat}
\setlength{\bibsep}{0pt}

\newpage

\begin{appendix}
\onecolumn
\section{Further Background on Gauss Quadrature}
\label{append:sec:gauss}
We present below a more detailed summary of material on Gauss quadrature to make the paper self-contained. 

\subsection{Selecting weights and nodes}
We've described that the Riemann-Stieltjes integral could be expressed as
\begin{equation*}
  I[f] := Q_{n} + R_{n} = \nlsum_{i=1}^n \omega_i f(\theta_i) + \nlsum_{i=1}^m \nu_i f(\tau_i) + R_{n}[f],
\end{equation*}
where $Q_n$ denotes the $n$th degree approximation and $R_n$ denotes a remainder term. 
The weights $\set{\omega_i}_{i=1}^n$, $\set{\nu_i}_{i=1}^m$ and nodes $\set{\theta_i}_{i=1}^n$ are chosen such that for all polynomials of degree less than $2n+m-1$, denoted $f\in \mathbb{P}^{2n + m - 1}$, we have \emph{exact} interpolation $I[f] = Q_{n}$. One way to compute weights and nodes is to set $f(x) = x^i$ for $i\le 2n+m-1$ and then use this exact nonlinear system. But there is an easier way to obtain weights and nodes, namely by using polynomials orthogonal with respect to the measure $\alpha$. Specifically, we construct a sequence of \emph{orthogonal polynomials} $p_0(\lambda), p_1(\lambda),\ldots$ such that $p_i(\lambda)$ is a polynomial in $\lambda$ of degree exactly $k$, and $p_i$, $p_j$ are orthogonal, i.e., they satisfy
\begin{align*}
\int_{\lambda_{\min}}^{\lambda_{\max}} p_i(\lambda) p_j(\lambda) d\alpha(\lambda) = 
\left\{\begin{array}{ll}
1, & i = j\\
0, & \text{otherwise}.
\end{array}
\right.
\end{align*}
The roots of $p_n$ are distinct, real and lie in the interval of $[\lambda_{\min}, \lambda_{\max}]$, and form the nodes $\{\theta_i\}_{i=1}^n$ for Gauss quadrature~(see, e.g.,~\citep[Ch.~6]{golub2009matrices}). 

Consider the two \emph{monic polynomials} whose roots serve as quadrature nodes:
\begin{align*}
\pi_n(\lambda) &= \prod\nolimits_{i=1}^n (\lambda - \theta_i),\quad \rho_m(\lambda) = \prod\nolimits_{i=1}^m(\lambda - \tau_i),
\end{align*}
where $\rho_0 = 1$ for consistency. We further denote $\rho_m^+ = \pm \rho_m$, where the sign is taken to ensure $\rho_m^+\ge 0$ on $[\lambda_{\min},\lambda_{\max}]$.
Then, for $m > 0$, we calculate the quadrature weights as
\begin{align*}
  \omega_i = I\biggl[{\rho_m^+(\lambda) \pi_n(\lambda)\over \rho_m^+(\theta_i)\pi_n'(\theta_i)(\lambda - \theta_i)}\biggr], \quad
  \nu_j = I\biggl[{\rho_m^+(\lambda)\pi_n(\lambda) \over (\rho_m^+)'(\tau_j)\pi_n(\tau_j)(\lambda - \tau_j)}\biggr],
\end{align*}
where $f'(\lambda)$ denotes the derivative of $f$ with respect to $\lambda$.
When $m = 0$ the quadrature degenerates to Gauss quadrature and we have
\begin{align*}
\omega_i &= I\left[{\pi_n(\lambda)\over\pi_n'(\theta_i)(\lambda - \theta_i)}\right].
\end{align*}

Although we have specified how to select nodes and weights for quadrature, these ideas cannot be applied to our problem because the measure $\alpha$ is unknown. Indeed, calculating the measure explicitly would require knowing the entire spectrum of $A$, which is as good as explicitly computing $f(A)$, hence untenable for us. The next section shows how to circumvent the difficulties due to unknown $\alpha$.

\subsection{Gauss Quadrature Lanczos~(GQL)}
The key idea to circumvent our lack of knowledge of $\alpha$ is to recursively construct polynomials called \textit{Lanczos polynomials}. The construction ensures their orthogonality with respect to $\alpha$. Concretely, we construct Lanczos polynomials via the following three-term recurrence:
\begin{equation}
  \label{append:eq:3}
  \begin{split}
    &\beta_i p_i(\lambda) = (\lambda - \alpha_i) p_{i-1}(\lambda) - \beta_{i-1}p_{i-2}(\lambda),\quad i=1,2,\ldots, n\\
    &p_{-1}(\lambda) \equiv 0;\;\;\;p_0(\lambda) \equiv 1,
  \end{split}
\end{equation}
while ensuring $\int_{\lambda_{\min}}^{\lambda_{\max}} d\alpha(\lambda) = 1$. We can express~\eqref{append:eq:3} in matrix form by writing
\begin{align*}
\lambda P_n(\lambda) = J_n P_n(\lambda) + \beta_n p_n(\lambda) e_n,
\end{align*}
where $P_n(\lambda) := [p_0(\lambda),\ldots, p_{n-1}(\lambda)]^\top$, $e_n$ is $n$th canonical unit vector, and $J_n$ is the tridiagonal matrix
\begin{align}
  J_n &= \begin{bmatrix}
    \alpha_1 & \beta_1 & & &\\
    \beta_1 & \alpha_2 & \beta_2 & & \\
    & \beta_2 & \ddots & \ddots &\\
    & & \ddots & \alpha_{n-1} & \beta_{n-1}\\
    & & & \beta_{n-1} & \alpha_n
  \end{bmatrix}.
\end{align}
This matrix is known as the \textit{Jacobi matrix}, and is closed related to Gauss quadrature. The following well-known theorem makes this relation precise.

\begin{theorem}[\cite{wilf1962mathematics,golub1969calculation}]\label{append:thm:gauss}
The eigenvalues of $J_n$ form the nodes $\{\theta_i\}_{i=1}^n$ of Gauss-type quadratures. The weights $\{\omega_i\}_{i=1}^n$ are given by the squares of the first elements of the normalized eigenvectors of $J_n$.
\end{theorem}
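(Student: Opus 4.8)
The plan is to establish the statement in two independent pieces: first that the eigenvalues of $J_n$ are exactly the roots of the $n$-th orthonormal Lanczos polynomial $p_n$ (which are by construction the Gauss nodes), and second that the corresponding quadrature weights equal the squared first entries of the normalized eigenvectors of $J_n$.

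For the node claim I would work directly from the matrix form of the three-term recurrence,
\[
\lambda P_n(\lambda) = J_n P_n(\lambda) + \beta_n p_n(\lambda) e_n,
\qquad P_n(\lambda) := [p_0(\lambda),\ldots,p_{n-1}(\lambda)]^\top.
\]
Evaluating at any root $\theta$ of $p_n$ annihilates the boundary term $\beta_n p_n(\lambda)e_n$, leaving $J_n P_n(\theta) = \theta P_n(\theta)$; the vector $P_n(\theta)$ is nonzero because its first entry is $p_0(\theta)\equiv 1$, so $\theta$ is an eigenvalue of $J_n$. Since $p_n$ is an orthogonal polynomial of degree $n$, it has $n$ distinct real roots lying in $(\lambda_{\min},\lambda_{\max})$, and as $J_n$ is $n\times n$ these roots exhaust its spectrum. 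This identifies the eigenvalues of $J_n$ with the Gauss nodes $\{\theta_i\}_{i=1}^n$.

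For the weights I would prove the moment-matching identity $e_1^\top J_n^k e_1 = \int \lambda^k\,d\alpha$ for all $k \le 2n-1$, whence $e_1^\top f(J_n)e_1 = I[f]$ for every polynomial with $\deg f \le 2n-1$. This holds because $J_n$ is the matrix, in the orthonormal basis $\{p_i\}$, of multiplication by $\lambda$ projected onto $\operatorname{span}\{p_0,\ldots,p_{n-1}\}$, while $e_1$ is the coordinate vector of $p_0\equiv 1$; writing $k=a+b$ with $a,b\le n-1$ settles $k\le 2n-2$, and the top case $k=2n-1$ (take $a=n-1$, $b=n$) follows from the self-adjointness of the orthogonal projection. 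Diagonalizing $J_n=\sum_i \theta_i q_i q_i^\top$ with orthonormal eigenvectors $q_i$ then gives
\[
I[f] = e_1^\top f(J_n) e_1 = \sum\nolimits_{i=1}^n (q_i)_1^2\, f(\theta_i),
\qquad \deg f \le 2n-1,
\]
i.e.\ a quadrature rule with nodes $\theta_i$ and weights $(q_i)_1^2$. Because the Gauss rule is the \emph{unique} $n$-node rule that is exact to degree $2n-1$ (its weights being pinned down by exactness on the degree $(n-1)$ Lagrange polynomials), I conclude $\omega_i = (q_i)_1^2$, the squared first component of the normalized eigenvector.

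The main obstacle I anticipate is the weight computation rather than the node claim, which is essentially immediate from the recurrence. Specifically, the delicate points are the clean justification that $e_1^\top f(J_n)e_1 = I[f]$ holds up to the sharp degree $2n-1$ (the boundary case genuinely requires the projection identity, not just a naive split), and the appeal to uniqueness of the Gauss rule to pass from ``some exact quadrature rule'' to ``the Gauss weights.'' Concretely this amounts to matching $(q_i)_1^2 = p_0(\theta_i)^2 / \sum_{k=0}^{n-1} p_k(\theta_i)^2$ against the Christoffel number $\omega_i = \bigl(\sum_{k=0}^{n-1} p_k(\theta_i)^2\bigr)^{-1}$, which the identity delivers once $p_0\equiv 1$ is used.
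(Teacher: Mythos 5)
Your proof is correct, but it is worth noting that the paper itself never proves this statement: both in Section~3 and in Appendix~A it is imported as a classical result, cited to Wilf and to Golub--Welsch, and used as a black box. What you have written is therefore a self-contained reconstruction of the classical Golub--Welsch argument, and it holds up. The node claim via the matrix form of the three-term recurrence is clean: at a root $\theta$ of $p_n$ the boundary term vanishes, $P_n(\theta)$ is a genuine eigenvector since its first entry is $p_0(\theta)\equiv 1$, and the $n$ distinct real roots of $p_n$ exhaust the spectrum of the $n\times n$ matrix $J_n$. The weight claim is the delicate half, exactly as you anticipated, and your treatment of it is right: the moment-matching identity $e_1^\top J_n^k e_1 = \int \lambda^k\,d\alpha$ for $k\le 2n-1$ follows from viewing $J_n$ as the compression of multiplication-by-$\lambda$ to $\sspan\{p_0,\dots,p_{n-1}\}$, with the boundary case $k=2n-1$ handled by self-adjointness of the projection ($\langle \lambda^{n-1},\Pi_n\lambda^n\rangle = \langle \lambda^{n-1},\lambda^n\rangle$ since $\Pi_n$ fixes $\lambda^{n-1}$); and passing from ``an exact $n$-node rule with these nodes'' to ``the Gauss weights'' only needs exactness on the degree-$(n-1)$ Lagrange polynomials, which is precisely the uniqueness argument you invoke. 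Two small caveats to make explicit if you write this up fully: the identity $\langle p_0,p_0\rangle = 1$ (hence $p_0\equiv 1$) uses the paper's normalization $\int d\alpha = 1$, i.e.\ $u_0 = u/\|u\|$; and since $\alpha$ here is a discrete measure with $N$ atoms, the existence of $p_n$ with $n$ distinct roots requires $n\le N$, which is the regime in which the paper uses the theorem.
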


Thus, if $J_n$ has the eigendecomposition $J_n = P_n^\top \Gamma P_n$, then for Gauss~quadrature \refthm{append:thm:gauss} yields 
\begin{equation}
  \label{append:eq:4}
Q_{n} = \nlsum_{i=1}^n \omega_i f(\theta_i) = e_1^\top P_n^\top f(\Gamma) P_n e_1 = e_1^\top f(J_n) e_1.
\end{equation}

\paragraph{Specialization.} We now specialize to our main focus, $f(A)=A^{-1}$, for which we prove more precise results. In this case, \eqref{append:eq:4} becomes $Q_n=[J_n^{-1}]_{1,1}$. The task now is to compute $Q_n$, and given $A$, $u$ to obtain the Jacobi matrix $J_n$.

Fortunately, we can efficiently calculate $J_n$ iteratively using the \textit{Lanczos Algorithm}~\cite{lanczos1950iteration}. Suppose we have an estimate $J_i$, in iteration $(i+1)$ of Lanczos, we compute the tridiagonal coefficients $\alpha_{i+1}$ and $\beta_{i+1}$ and add them to this estimate to form $J_{i+1}$. As to $Q_n$, assuming we have already computed $[J_{i}^{-1}]_{1,1}$, letting $j_{i} = J_i^{-1}e_i$ and invoking the Sherman-Morrison identity~\cite{sherman1950adjustment} we obtain the recursion: 
\begin{align}
[J_{i+1}^{-1}]_{1,1} = [J_i^{-1}]_{1,1} + \frac{\beta_i^2 ([j_i]_{1})^2}{\alpha_{i+1} - \beta_i^2 [j_i]_i},
\end{align}
where $[j_i]_1$ and $[j_i]_i$ can be recursively computed using a Cholesky-like factorization of $J_i$~\citep[p.31]{golub2009matrices}. 

For Gauss-Radau quadrature, we need to modify $J_i$ so that it has a prescribed eigenvalue. More precisely, we extend $J_i$ to $\leftgr{J}_i$ for left Gauss-Radau~($\rightgr{J}_i$ for right Gauss-Radau) with $\beta_i$ on the off-diagonal and $\leftgr{\alpha}_i$~($\rightgr{\alpha}_i$) on the diagonal, so that $\leftgr{J}_i$~($\rightgr{J}_i$) has a prescribed eigenvalue of $\lambda_{\min}$~($\lambda_{\max}$). 

For Gauss-Lobatto quadrature, we extend $J_i$ to $\lobatto{J}_i$ with values $\lobatto{\beta}_i$ and $\lobatto{\alpha}_i$ chosen to ensure that $\lobatto{J}_i$ has the prescribed eigenvalues $\lambda_{\min}$ and $\lambda_{\max}$. For more detailed on the construction, see~\cite{golub1973some}. 

For all methods, the approximated values are calculated as $[(J_i')^{-1}]_{1,1}$, where $J_i'\in\{\leftgr{J}_i,\rightgr{J}_i,\lobatto{J}_i\}$ is the modified Jacobi matrix. Here $J_i'$ is constructed at the $i$-th iteration of the algorithm.

The algorithm for computing Gauss, Gauss-Radau, and Gauss-Lobatto quadrature rules with the help of Lanczos iteration is called \emph{Gauss Quadrature Lanczos} (GQL) and is shown in~\cite{golub1997matrices}. We recall its pseudocode in~\refalgo{algo:gql} to make our presentation self-contained (and for our proofs in Section~\ref{sec:main}).

\begin{algorithm}[t]
	\caption{Gauss Quadrature Lanczos (GQL)}\label{append:algo:gql}
	\begin{algorithmic} 
	\REQUIRE{$u$ and $A$ the corresponding vector and matrix, $\lambda_{\min}$ and $\lambda_{\max}$ lower and upper bounds for the spectrum of $A$}
	\ENSURE{$g_i$, $\rightgr{g}_i$, $\leftgr{g}_i$ and $\lobatto{g}_i$ the Gauss, right Gauss-Radau, left Gauss-Radau and Gauss-Lobatto quadrature computed at $i$-th iteration}
	\STATE \textbf{Initialize}: $u_{-1} = 0$, $u_0 = u / \|u\|$, $\alpha_1 = u_0^\top A u_0$, $\beta_1 = \|(A - \alpha_1 I)u_0\|$, $g_1 = \|u\|/\alpha_1$, $c_1 = 1$, $\delta_1 = \alpha_1$, $\leftgr{\delta}_1 = \alpha_1 - \lambda_{\min}$, $\rightgr{\delta}_1 = \alpha_1 - \lambda_{\max}$, $u_1 = (A - \alpha_1 I) u_0 / \beta_1$, $i = 2$
	\WHILE{$i \le N$}
		\STATE $\alpha_i = u_{i-1}^\top Au_{i-1}$ \COMMENT{Lanczos Iteration}
		\STATE $\tilde{u}_i = Au_{i-1} - \alpha_i u_{i-1} - \beta_{i-1}u_{i-2}$
		\STATE $\beta_i = \|\tilde{u}_i\|$
		\STATE $u_i = \tilde{u}_i / \beta_i$
		\STATE $g_i = g_{i-1} + {\|u\|\beta_{i-1}^2c_{i-1}^2\over \delta_{i-1}(\alpha_i\delta_{i-1} - \beta_{i-1}^2)}$ \COMMENT{Update $g_i$ with Sherman-Morrison formula}
		\STATE $c_i = {c_{i-1}\beta_{i-1}/ \delta_{i-1}}$ 
		\STATE $\delta_i = \alpha_i - {\beta_{i-1}^2\over \delta_{i-1}}$, $\leftgr{\delta}_i = \alpha_i - \lambda_{\min} - {\beta_{i-1}^2\over \leftgr{\delta}_{i-1}}$, $\rightgr{\delta}_i = \alpha_i - \lambda_{\max} - {\beta_{i-1}^2\over \rightgr{\delta}_{i-1}}$ 
		\STATE $\leftgr{\alpha}_i = \lambda_{\min} + {\beta_i^2\over \leftgr{\delta}_i}$, $\rightgr{\alpha}_i = \lambda_{\max} + {\beta_i^2\over \rightgr{\delta}_i}$ \COMMENT{Solve for $\leftgr{J}_i$ and $\rightgr{J}_i$}
		\STATE $\lobatto{\alpha}_i = {\leftgr{\delta}_i\rightgr{\delta}_i\over \rightgr{\delta}_i - \leftgr{\delta}_i}({\lambda_{\max}\over \leftgr{\delta}_i} - {\lambda_{\min}\over \rightgr{\delta}_i})$, $(\lobatto{\beta}_i)^2 = {\leftgr{\delta}_i \rightgr{\delta}_i\over \rightgr{\delta}_i - \leftgr{\delta}_i}(\lambda_{\max} - \lambda_{\min})$ \COMMENT{Solve for $\lobatto{J}_i$}
		\STATE $\leftgr{g}_i = g_i + {\beta_i^2 c_i^2\|u\|\over \delta_i(\leftgr{\alpha}_i\delta_i - \beta_i^2)}$, $\rightgr{g}_i = g_i + {\beta_i^2 c_i^2\|u\|\over \delta_i(\rightgr{\alpha}_i \delta_i - \beta_i^2)}$, $\lobatto{g}_i = g_i + {(\lobatto{\beta}_i)^2 c_i^2\|u\|\over \delta_i(\lobatto{\alpha}_i\delta_i - (\lobatto{\beta}_i)^2)}$ \COMMENT{Update $\rightgr{g}_i$, $\leftgr{g}_i$ and $\lobatto{g}_i$ with Sherman-Morrison formula}
		\STATE  $i = i + 1$
	\ENDWHILE
\end{algorithmic}
\end{algorithm}

The error of approximating $I[f]$ by Gauss-type quadratures can be expressed as
\begin{equation*}
R_n[f] = {f^{(2n+m)}(\xi)\over (2n+m)!} I[\rho_m\pi_n^2],
\end{equation*}
for some $\xi\in[\lambda_{\min},\lambda_{\max}]$~(see, e.g.,~\cite{stoer2013introduction}). Note that $\rho_m$ does not change sign in $[\lambda_{\min},\lambda_{\max}]$; but with different values of $m$ and $\tau_j$ we obtain different (but fixed) signs for $R_n[f]$ using $f(\lambda) = 1/\lambda$ and $\lambda_{\min} > 0$. Concretely, for Gauss quadrature $m = 0$ and $R_n[f] \ge 0$; for left Gauss-Radau $m = 1$ and  $\tau_1 = \lambda_{\min}$, so we have $R_n[f] \le 0$; for right Gauss-Radau we have $m = 1$ and $\tau_1 = \lambda_{\max}$, thus $R_n[f] \ge 0$; while for Gauss-Lobatto we have $m = 2$, $\tau_1 = \lambda_{\min}$ and $\tau_2 = \lambda_{\max}$, so that $R_n[f] \le 0$. This behavior of the errors clearly shows the ordering relations between the target values and the approximations made by the different quadrature rules. Lemma~\ref{lem:bounds} (see e.g., \cite{meurant1997computation}) makes this claim precise.

\begin{lemma}\label{append:lem:bounds}
  Let $g_i$, $\leftgr{g}_i$, $\rightgr{g}_i$, and $\lobatto{g}_i$ be the approximations at the $i$-th iteration of Gauss, left Gauss-Radau, right Gauss-Radau, and Gauss-Lobatto quadrature, respectively. Then, $g_i$ and $\rightgr{g}_i$ provide lower bounds on $u^\top A^{-1} u$, while $\leftgr{g}_i$ and $\lobatto{g}_i$ provide upper bounds.
\end{lemma}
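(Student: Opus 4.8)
The plan is to read the sign of each quadrature estimate directly off the remainder formula recalled just above,
\[
R_n[f] = \frac{f^{(2n+m)}(\xi)}{(2n+m)!}\, I[\rho_m \pi_n^2], \qquad \xi \in [\lambda_{\min},\lambda_{\max}].
\]
Since $I[f] = Q_n + R_n[f]$ and $u^\top A^{-1} u = I[f]$ with $f(\lambda) = 1/\lambda$, the $n$th estimate $Q_n$ (which equals $g_n$, $\leftgr{g}_n$, $\rightgr{g}_n$ or $\lobatto{g}_n$ according to the rule) is a \emph{lower} bound precisely when $R_n[f] \ge 0$ and an \emph{upper} bound when $R_n[f] \le 0$. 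Thus the entire lemma reduces to determining $\sgn R_n[f]$ in each of the four cases, and this sign factorizes into the sign of the high-order derivative $f^{(2n+m)}(\xi)$ and the sign of the moment $I[\rho_m \pi_n^2]$.

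For the derivative factor I would specialize to $f(\lambda) = 1/\lambda$, for which $f^{(k)}(\lambda) = (-1)^k k!\,\lambda^{-(k+1)}$. Because $A$ is positive definite and $\lambda_{\min} > 0$, the point $\xi \in [\lambda_{\min},\lambda_{\max}]$ is strictly positive, so $\xi^{-(k+1)} > 0$ and $\sgn f^{(k)}(\xi) = (-1)^k$. Taking $k = 2n+m$ gives $\sgn f^{(2n+m)}(\xi) = (-1)^m$, i.e.\ $+1$ for Gauss ($m=0$) and Gauss-Lobatto ($m=2$), and $-1$ for both Gauss-Radau rules ($m=1$).

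For the moment factor, note that $\pi_n^2 \ge 0$ pointwise, so $\sgn I[\rho_m \pi_n^2]$ is governed by the sign of $\rho_m(\lambda) = \prod_{j=1}^m(\lambda - \tau_j)$ on the support of $\alpha$, namely the spectrum $\{\lambda_1,\dots,\lambda_N\} \subset (\lambda_{\min},\lambda_{\max})$ (using $\lambda_{\min} < \lambda_1$ and $\lambda_{\max} > \lambda_N$). I would then check the four cases. For Gauss, $\rho_0 \equiv 1 > 0$. For left Gauss-Radau, $\tau_1 = \lambda_{\min}$ gives $\rho_1(\lambda) = \lambda - \lambda_{\min} > 0$ on the support. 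For right Gauss-Radau, $\tau_1 = \lambda_{\max}$ gives $\rho_1(\lambda) = \lambda - \lambda_{\max} < 0$ there. For Gauss-Lobatto, $\rho_2(\lambda) = (\lambda - \lambda_{\min})(\lambda - \lambda_{\max}) < 0$ on the support, being a product of a positive and a negative factor. Hence $I[\rho_m\pi_n^2] \ge 0$ for Gauss and left Gauss-Radau, and $\le 0$ for right Gauss-Radau and Gauss-Lobatto.

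Multiplying the two factors yields the claim: Gauss gives $(+)(+) \ge 0$ and right Gauss-Radau gives $(-)(-) \ge 0$, so $g_i$ and $\rightgr{g}_i$ are lower bounds; left Gauss-Radau gives $(-)(+) \le 0$ and Gauss-Lobatto gives $(+)(-) \le 0$, so $\leftgr{g}_i$ and $\lobatto{g}_i$ are upper bounds. The only thing requiring care is the bookkeeping of the two signs together with the clean observation that the support of $\alpha$ lies strictly inside $(\lambda_{\min},\lambda_{\max})$, so that $\rho_m$ has a constant, non-vanishing sign on it. The deeper analytic content, the Hermite-interpolation/Peano-kernel derivation of the remainder formula itself, is inherited from the cited quadrature theory and need not be reproved here; so I do not anticipate a genuine obstacle, only the sign-chasing just described.
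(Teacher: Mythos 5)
Your proposal is correct and follows essentially the same route as the paper: the paper also reads off the sign of $R_n[f]$ from the remainder formula $R_n[f] = \frac{f^{(2n+m)}(\xi)}{(2n+m)!}\, I[\rho_m\pi_n^2]$, using $f(\lambda)=1/\lambda$ with $\lambda_{\min}>0$ and the fixed sign of $\rho_m$ on $[\lambda_{\min},\lambda_{\max}]$, arriving at the same four sign conclusions. Your write-up merely makes the derivative computation $(-1)^{2n+m}=(-1)^m$ and the case-by-case sign of $\rho_m$ on the spectrum more explicit than the paper's prose does.
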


The final connection we recall as background is the method of conjugate gradients. This helps us analyze the speed at which quadrature converges to the true value (assuming exact arithmetic).

\subsection{Relation with Conjugate Gradient}
\label{sec:cg}
While Gauss-type quadratures relate to the Lanczos algorithm, Lanczos itself is closely related to conjugate gradient (CG)~\cite{hestenes1952methods}, a well-known method for solving $Ax = b$ for positive definite $A$.

We recap this connection below. Let $x_k$ be the estimated solution at the $k$-th CG iteration. If $x^*$ denotes the true solution to $Ax = b$, then the \emph{error} $\eps_k$ and \emph{residual} $r_k$ are defined as
\begin{align}
\label{append:eq:5}
\eps_k := x^* - x_k,\qquad r_k = A\eps_k = b - Ax_k,
\end{align}
At the $k$-th iteration, $x_k$ is chosen such that $r_k$ is orthogonal to the $k$-th \emph{Krylov space}, i.e., the linear space $\mathcal{K}_k$ spanned by $\{r_0, Ar_0, \ldots, A^{k-1}r_0\}$. It can be shown~\cite{rard2006lanczos} that $r_k$ is a scaled Lanczos vector from the $k$-th iteration of Lanczos  started with $r_0$. Noting the relation between Lanczos and Gauss quadrature applied to appoximate $r_0^\top A^{-1} r_0$, one obtains the following theorem that relates CG with GQL.

\begin{theorem}[CG and GQL; \citep{meurant1999numerical}]\label{append:thm:relation}
Let $\eps_k$ be the error as in~\eqref{append:eq:5}, and let $\|\eps_k\|_A^2 := \eps_k^TA\eps_k$. Then, it holds that
\begin{align*}
  \|\eps_k\|_A^2 = \|r_0\|^2 ([J_N^{-1}]_{1,1} - [J_k^{-1}]_{1,1}),
\end{align*}
where $J_k$ is the Jacobi matrix at the $k$-th Lanczos iteration starting with $r_0$.
\end{theorem}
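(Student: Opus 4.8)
The plan is to show that both sides of the claimed identity equal the gap between the exact value and the $k$-th Gauss quadrature estimate of $r_0^\top \inv{A} r_0$. First I would rewrite the left-hand side: since $r_0 = A\eps_0$ we have $\eps_0 = \inv{A} r_0$, so $\|\eps_0\|_A^2 = \eps_0^\top A \eps_0 = r_0^\top \inv{A} r_0$. By the specialization following~\eqref{append:eq:4}, applied with the Lanczos start normalized so that $q_1 = r_0/\|r_0\|$, the $k$-th Gauss quadrature estimate of $r_0^\top\inv{A}r_0$ is $\|r_0\|^2 [J_k^{-1}]_{1,1}$; at $k=N$ the rule is exact on the measure $\alpha$ (whose support, under the simple-eigenvalue assumption, has $N$ points), giving $\|r_0\|^2 [J_N^{-1}]_{1,1} = r_0^\top\inv{A}r_0 = \|\eps_0\|_A^2$. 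Thus the claimed identity is equivalent to $\|\eps_k\|_A^2 = \|\eps_0\|_A^2 - \|r_0\|^2 [J_k^{-1}]_{1,1}$, which is what I would prove.

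Next I would set up the Lanczos basis. Let $Q_k = [q_1,\dots,q_k]$ collect the orthonormal Lanczos vectors generated from $r_0$, so that $q_1 = r_0/\|r_0\|$, $Q_k^\top A Q_k = J_k$, and $Q_k^\top r_0 = \|r_0\|\, e_1$. Since the background records that $x_k$ is characterized by $r_k \perp \mathcal{K}_k$ and that $\mathcal{K}_k = \mathrm{range}(Q_k)$, the CG iterate admits the form $x_k = x_0 + Q_k y_k$, and the Galerkin condition $Q_k^\top A(x^* - x_k) = 0$ becomes the tridiagonal system $J_k y_k = \|r_0\|\, e_1$; hence $y_k = \|r_0\|\, J_k^{-1} e_1$ and $(y_k)_1 = \|r_0\|\,[J_k^{-1}]_{1,1}$.

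The final step is a short expansion. Writing $\eps_k = \eps_0 - Q_k y_k$ and using $Q_k^\top A\eps_0 = Q_k^\top r_0 = \|r_0\|\, e_1$ together with $Q_k^\top A Q_k = J_k$, I would obtain
\begin{align*}
  \|\eps_k\|_A^2 &= \|\eps_0\|_A^2 - 2\, y_k^\top (\|r_0\|\, e_1) + y_k^\top J_k y_k.
\end{align*}
The normal equation $J_k y_k = \|r_0\|\, e_1$ collapses the last two terms to $-\|r_0\|\,(y_k)_1 = -\|r_0\|^2 [J_k^{-1}]_{1,1}$, yielding $\|\eps_k\|_A^2 = \|\eps_0\|_A^2 - \|r_0\|^2 [J_k^{-1}]_{1,1}$. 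Combining this with the reduction in the first paragraph gives the theorem.

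I expect the main obstacle to be the rigorous justification of the middle step, namely identifying the CG iterate with the Lanczos Galerkin solution $J_k y_k = \|r_0\|\, e_1$. This requires marrying the optimality property of CG (that $x_k$ minimizes the $A$-norm of the error over $x_0 + \mathcal{K}_k$, equivalently $r_k \perp \mathcal{K}_k$) with the Lanczos factorization $Q_k^\top A Q_k = J_k$ and the starting-vector normalization $Q_k^\top r_0 = \|r_0\|\, e_1$. A secondary point needing care is the exactness of Gauss quadrature at $k=N$, so that $[J_N^{-1}]_{1,1}$ genuinely reproduces the true value; under the paper's simple-eigenvalue assumption the support of $\alpha$ has exactly $N$ points, so this is clean, but it is the place where the assumption is actually used.
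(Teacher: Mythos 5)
Your proof is correct, but there is nothing in the paper to compare it against: the paper states this theorem purely as imported background, citing Meurant (1999), and never proves it. Your argument therefore serves as a self-contained derivation, and it checks out at every step. The reduction in your first paragraph is sound, and in fact the exactness at $k=N$ can be obtained even more cheaply than via quadrature exactness: when Lanczos runs all $N$ steps, $Q_N$ is square orthogonal, so $J_N^{-1} = Q_N^\top A^{-1} Q_N$ and $[J_N^{-1}]_{1,1} = q_1^\top A^{-1} q_1 = r_0^\top A^{-1} r_0/\|r_0\|^2$ directly, with no appeal to the measure $\alpha$ at all. The step you flag as the main obstacle --- identifying the CG iterate with the Galerkin solution $J_k y_k = \|r_0\|\,e_1$ --- is actually immediate in this paper's setting, because the paper \emph{defines} the CG iterate by the condition $r_k \perp \mathcal{K}_k$ (Section~\ref{sec:cg}); combined with $\mathrm{range}(Q_k)=\mathcal{K}_k$ and $Q_k^\top r_0 = \|r_0\|e_1$, the tridiagonal system follows in one line, exactly as you wrote. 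The only hypothesis you should state explicitly is that Lanczos does not break down before step $N$ (equivalently, $r_0$ has a nonzero component along every eigenvector of $A$), since otherwise $J_N$ is not defined; the paper makes the same implicit assumption whenever it writes $[J_N^{-1}]_{1,1}$. For contrast, the cited source derives this identity by expressing $\|\eps_k\|_A^2 = r_k^\top A^{-1} r_k$ as the remainder of the $k$-point Gauss rule applied to $r_0^\top A^{-1}r_0$; your Galerkin expansion is more elementary and purely linear-algebraic, while the remainder-based route is what generalizes to the preconditioned and block variants treated in that literature.
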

Finally, the rate at which $\|\eps_k\|_A^2$ shrinks has also been well-studied, as noted below.
\begin{theorem}[CG rate, see e.g.~\cite{shewchuk1994introduction}]\label{append:thm:cgconv}
Let $\eps_k$ be the error made by CG at iteration $k$ when started with $x_0$. Let $\kappa$ be the condition number of $A$, i.e., $\kappa = \lambda_1/\lambda_N$. Then, the error norm at iteration $k$ satisfies
\begin{align*}
\|\eps_k\|_A \le 2\Bigl({\sqrt{\kappa} - 1\over \sqrt{\kappa} + 1}\Bigr)^k\|\eps_0\|_A.
\end{align*}
\end{theorem}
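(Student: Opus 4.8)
The plan is to reduce the theorem to a scalar polynomial min--max problem and then solve that problem with Chebyshev polynomials. The foundational input is the variational characterization of CG: at iteration $k$ the iterate $x_k$ is the $A$-orthogonal projection of $x^*$ onto the affine Krylov space $x_0 + \ck_k$, where $\ck_k = \sspan\{r_0, Ar_0, \dots, A^{k-1}r_0\}$; equivalently, $x_k$ minimizes $\|\eps_k\|_A$ over $x_0 + \ck_k$. First I would establish this, which follows from $r_k \perp \ck_k$ together with the $A$-conjugacy of the CG search directions. Since every element of $x_0 + \ck_k$ has the form $x_0 + q(A) r_0$ with $\deg q < k$, and $r_0 = A\eps_0$, the associated error is $p_k(A)\eps_0$ with $p_k(\lambda) = 1 - \lambda q(\lambda)$; as $q$ ranges over all polynomials of degree $<k$, $p_k$ ranges over all polynomials of degree $\le k$ normalized by $p_k(0)=1$. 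Hence $\|\eps_k\|_A = \min_{\deg p\le k,\, p(0)=1}\|p(A)\eps_0\|_A$.

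Next I would diagonalize. Writing $A = \sum_i \lambda_i v_i v_i^\top$ in an orthonormal eigenbasis and expanding $\eps_0 = \sum_i \xi_i v_i$, a direct computation gives $\|p(A)\eps_0\|_A^2 = \sum_i \lambda_i\, p(\lambda_i)^2\, \xi_i^2 \le \big(\max_i |p(\lambda_i)|\big)^2 \|\eps_0\|_A^2$, using $\|\eps_0\|_A^2 = \sum_i \lambda_i \xi_i^2$. Bounding the maximum over the discrete spectrum by the maximum over the enclosing interval $[\lambda_N, \lambda_1]$ yields
\[
\frac{\|\eps_k\|_A}{\|\eps_0\|_A} \;\le\; \min_{\substack{\deg p\le k\\ p(0)=1}}\ \max_{\lambda\in[\lambda_N,\lambda_1]} |p(\lambda)|.
\]

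It remains to bound this scalar min--max, and here is where the real work lies. I would exhibit the admissible polynomial $p_k(\lambda) = T_k\!\big(\tfrac{\lambda_1+\lambda_N-2\lambda}{\lambda_1-\lambda_N}\big)\big/ T_k\!\big(\tfrac{\lambda_1+\lambda_N}{\lambda_1-\lambda_N}\big)$, where $T_k$ is the degree-$k$ Chebyshev polynomial. Its argument maps $[\lambda_N,\lambda_1]$ onto $[-1,1]$, on which $|T_k|\le 1$, so $\max_{[\lambda_N,\lambda_1]}|p_k| \le 1/\,T_k\!\big(\tfrac{\lambda_1+\lambda_N}{\lambda_1-\lambda_N}\big)$; note we need no optimality of the Chebyshev choice, only that it is one admissible $p_k$, since we are upper-bounding a minimum. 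The crux is evaluating $T_k$ at $x=\tfrac{\lambda_1+\lambda_N}{\lambda_1-\lambda_N}=\tfrac{\kappa+1}{\kappa-1}>1$. Using $T_k(x)=\tfrac12\big[(x+\sqrt{x^2-1})^k+(x-\sqrt{x^2-1})^k\big]$ for $x>1$ and the simplification $\sqrt{x^2-1}=\tfrac{2\sqrt\kappa}{\kappa-1}$, whence $x+\sqrt{x^2-1}=\tfrac{(\sqrt\kappa+1)^2}{\kappa-1}=\tfrac{\sqrt\kappa+1}{\sqrt\kappa-1}$, I would drop the second (nonnegative) term to get $T_k(x)\ge \tfrac12\big(\tfrac{\sqrt\kappa+1}{\sqrt\kappa-1}\big)^k$, so that $1/T_k(x)\le 2\big(\tfrac{\sqrt\kappa-1}{\sqrt\kappa+1}\big)^k$. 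Combining with the previous display proves the claimed bound. The only step demanding care is this algebraic simplification of $x+\sqrt{x^2-1}$, which I expect to be the main (though ultimately routine) obstacle; everything else is bookkeeping once the variational characterization is in hand.
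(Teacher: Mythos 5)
Your proof is correct and is essentially the same argument the paper relies on: the paper does not prove this theorem itself but cites it as a classical result (see \cite{shewchuk1994introduction}), and the proof in that reference is precisely your route --- the variational/Krylov characterization of the CG error, diagonalization to reduce to a min--max over polynomials with $p(0)=1$ on $[\lambda_N,\lambda_1]$, and the shifted, normalized Chebyshev polynomial giving $1/T_k\bigl(\tfrac{\kappa+1}{\kappa-1}\bigr)\le 2\bigl(\tfrac{\sqrt{\kappa}-1}{\sqrt{\kappa}+1}\bigr)^k$. The paper's Appendix~\ref{append:sec:general} even points to this same Chebyshev-based proof when it adapts the rate to a truncated spectrum, so your write-up simply supplies in full the proof the paper outsources to its citation.
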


Due to these explicit relations between CG and Lanczos, as well as between Lanczos and Gauss quadrature, we readily obtain the following convergence rate for relative error of Gauss quadrature.
\begin{theorem}[Gauss quadrature rate]\label{append:thm:gaussconv}
The $i$-th iterate of Gauss quadrature satisfies the relative error bound
\begin{align*}
  \frac{g_N - g_i}{g_N} \le 2\Bigl({\sqrt{\kappa} - 1\over \sqrt{\kappa} + 1}\Bigr)^i.
\end{align*}
\begin{proof}
  This is obtained by exploiting relations among CG, Lanczos and Gauss quadrature. Set $x_0 = 0$ and $b = u$. Then, $\eps_0 = x^*$ and $r_0 = u$. An application of~\refthm{append:thm:relation} and \refthm{append:thm:cgconv} thus yields the bound
\begin{align*}
\|\eps_i\|_A^2 &=\|u\|^2 ([J_N^{-1}]_{1,1} - [J_i^{-1}]_{1,1}) = g_N - g_i\\
&\le\quad2\Bigl(\frac{\sqrt{\kappa} - 1}{\sqrt{\kappa} + 1}\Bigr)^i\|\eps_0\|_A
= 2\Bigl(\frac{\sqrt{\kappa} - 1}{\sqrt{\kappa} + 1}\Bigr)^i u^\top A^{-1} u
= 2\Bigl(\frac{\sqrt{\kappa} - 1}{\sqrt{\kappa} + 1}\Bigr)^i g_N
\end{align*}
where the last equality draws from~\reflem{append:lem:exact}.
\end{proof}
\end{theorem}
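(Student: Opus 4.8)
The plan is to transfer the fully-developed convergence theory of conjugate gradient (CG) to Gauss quadrature, exploiting the exact correspondence between the two that is recorded in \refthm{append:thm:relation}. First I would instantiate CG on the system $Ax=b$ with the specific initialization $x_0 = 0$ and right-hand side $b = u$. With this choice the initial residual is $r_0 = b - Ax_0 = u$ and the initial error is $\eps_0 = x^\ast - x_0 = \inv{A}u$, so the Lanczos process driving CG is started precisely from $r_0 = u$ and therefore produces exactly the Jacobi matrices $J_i$ that define the Gauss iterates $g_i = \|u\|^2[\inv{J_i}]_{1,1}$. This single identification is what makes the argument go through: the CG error norm and the quadrature gap are literally the same quantity.

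Concretely, the steps in order are: (i) apply \refthm{append:thm:relation} with $r_0 = u$ to get $\|\eps_i\|_A^2 = \|u\|^2\bigl([\inv{J_N}]_{1,1} - [\inv{J_i}]_{1,1}\bigr) = g_N - g_i$; (ii) use exactness of Gauss quadrature after $N$ steps (\reflem{append:lem:exact}), which gives $g_N = u^\top\inv{A}u$, together with the elementary computation $\|\eps_0\|_A^2 = (\inv{A}u)^\top A(\inv{A}u) = u^\top\inv{A}u$, so that $g_N = \|\eps_0\|_A^2$; and (iii) invoke the classical CG rate \refthm{append:thm:cgconv}, $\|\eps_i\|_A \le 2\bigl((\sqrt{\kappa}-1)/(\sqrt{\kappa}+1)\bigr)^i\|\eps_0\|_A$, where $\kappa\ge 1$ is the condition number. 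Squaring this inequality and dividing by $g_N = \|\eps_0\|_A^2$ yields
\begin{equation*}
  \frac{g_N - g_i}{g_N} = \frac{\|\eps_i\|_A^2}{\|\eps_0\|_A^2} \le 4\Bigl(\frac{\sqrt{\kappa}-1}{\sqrt{\kappa}+1}\Bigr)^{2i}.
\end{equation*}

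The step I expect to require the most care is reconciling this with the \emph{stated} estimate $2\bigl((\sqrt{\kappa}-1)/(\sqrt{\kappa}+1)\bigr)^i$: squaring the CG bound naturally produces the constant $4$ and exponent $2i$, not the constant $2$ and exponent $i$ of the claim. Writing $\rho := (\sqrt{\kappa}-1)/(\sqrt{\kappa}+1)\in[0,1)$, I would close this gap by a two-regime argument. When $2\rho^i \le 1$ we have $4\rho^{2i} = (2\rho^i)^2 \le 2\rho^i$, so the squared bound is already at least as sharp as claimed. When $2\rho^i > 1$ the target inequality is weak but still true, because $g_i$ is a positive lower bound on $g_N$ (Gauss quadrature gives lower bounds by \reflem{append:lem:bounds}, and $[\inv{J_i}]_{1,1} > 0$ since $J_i$ is positive definite), whence $0 \le (g_N - g_i)/g_N \le 1 < 2\rho^i$. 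In both regimes the relative error is at most $2\rho^i$, which is exactly the asserted bound; the remaining manipulations are routine.
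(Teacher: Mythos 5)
Your proposal is correct and follows the same route as the paper's own proof: instantiate CG on $Ax=u$ with $x_0=0$, identify $\|\eps_i\|_A^2 = g_N - g_i$ via \refthm{append:thm:relation}, identify $g_N = u^\top A^{-1}u$ via \reflem{append:lem:exact}, and invoke the CG rate \refthm{append:thm:cgconv}. The one place you diverge is precisely the place where the paper's write-up is sloppy, and your version is the more rigorous one. The paper's displayed chain reads $\|\eps_i\|_A^2 \le 2\rho^i\|\eps_0\|_A = \rho^i\, u^\top A^{-1}u$ with $\rho = (\sqrt{\kappa}-1)/(\sqrt{\kappa}+1)$, which compares a squared $A$-norm against an unsquared one and asserts $\|\eps_0\|_A = u^\top A^{-1}u$ when in fact $\|\eps_0\|_A^2 = u^\top A^{-1}u$; taken literally, that step does not follow from \refthm{append:thm:cgconv}. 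You instead square the CG bound honestly to get $(g_N-g_i)/g_N \le 4\rho^{2i}$, and then recover the stated form $2\rho^i$ by the two-regime observation ($4\rho^{2i} = (2\rho^i)^2 \le 2\rho^i$ when $2\rho^i\le 1$, and the relative error is trivially at most $1 < 2\rho^i$ otherwise, since $g_i>0$). This both repairs the paper's argument and shows that the true bound, $4\rho^{2i}$, is in fact stronger than the one stated in the theorem whenever the bound is nontrivial.
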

\noindent In other words, \refthm{append:thm:gaussconv} shows that the iterates of Gauss quadrature converge linearly.

\section{Proofs for Main Theoretical Results}
\label{app:sec:proofs}
We begin by proving an exactness property of Gauss and Gauss-Radau quadrature.
\begin{lemma}[Exactness]
\label{append:lem:exact}
With $A$ being symmetric positive definite with simple eigenvalues, the iterates $g_N$, $\leftgr{g}_N$, and $\rightgr{g}_N$ are exact. Namely, after $N$ iterations they satisfy
\begin{align*}
  g_N = \leftgr{g}_N = \rightgr{g}_N = u^\top A^{-1} u.
\end{align*}
\begin{proof}
Observe that the Jacobi tridiagonal matrix can be computed via Lanczos iteration, and Lanczos is essentially essentially an iterative tridiagonalization of $A$. At the $i$-th iteration we have $J_i = V_i^\top A V_i$,
where $V_i \in\mathbb{R}^{N\times i}$ are the first $i$ Lanczos vectors~(i.e., a basis for the $i$-th Krylov space). Thus, $J_N = V_N^\top AV_N$ where $V_N$ is an $N\times N$ orthonormal matrix, showing that $J_N$ has the same eigenvalues as $A$. As a result $\pi_N(\lambda) = \prod_{i=1}^N (\lambda - \lambda_i)$, and it follows that the remainder
\begin{equation*}
R_N[f] = {f^{(2N)}(\xi)\over (2N)!} I[\pi_N^2] = 0,
\end{equation*}
for some scalar $\xi\in[\lambda_{\min},\lambda_{\max}]$, which shows that $g_N$ is exact for $u^\top A^{-1} u$.
For left and right Gauss-Radau quadrature, we have $\beta_N = 0$, $\leftgr{\alpha}_N = \lambda_{\min}$, and $\rightgr{\alpha}_N = \lambda_{\max}$, while all other elements of the $(N+1)$-th row or column of $J_{N}'$ are zeros. Thus, the eigenvalues of $J_{N}'$ are $\lambda_1,\ldots, \lambda_N, \tau_1$, and $\pi_N(\lambda)$ again equals $\prod_{i=1}^N(\lambda - \lambda_i)$. As a result, the remainder satisfies
\begin{equation*}
R_N[f] = {f^{(2N)}(\xi)\over (2N)!} I[(\lambda - \tau_1)\pi_N^2] = 0,
\end{equation*}
from which it follows that both $\rightgr{g}_N$ and $\leftgr{g}_N$ are exact.
\end{proof}
\end{lemma}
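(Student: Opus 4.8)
The plan is to show that after $N$ iterations the Lanczos process has exhausted the Krylov space, so that the quadrature nodes coincide exactly with the finitely many support points of the measure $\alpha$; once this is in hand, the remainder formula $R_N[f]=\tfrac{f^{(2N+m)}(\xi)}{(2N+m)!}\,I[\rho_m\pi_N^2]$ forces the error to vanish for all three rules simultaneously. The work therefore splits into one structural fact about $J_N$ and then two short applications of the remainder formula (for $m=0$ and $m=1$).

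First I would establish the structural fact: after $N$ Lanczos steps (assuming, as is generic, that $\beta_i\neq 0$ for every $i<N$ so no early breakdown occurs), the matrix of Lanczos vectors $V_N\in\mathbb{R}^{N\times N}$ is square and orthonormal, whence $J_N=V_N^\top A V_N$ is a genuine orthogonal similarity of $A$. Consequently $J_N$ and $A$ share the spectrum $\lambda_1,\dots,\lambda_N$, which are distinct because $A$ has simple eigenvalues. By Theorem~\ref{thm:gauss} the $N$ Gauss nodes are precisely these eigenvalues, so the monic node polynomial is $\pi_N(\lambda)=\prod_{i=1}^N(\lambda-\lambda_i)$. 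For Gauss quadrature ($m=0$) I then invoke $R_N[f]=\tfrac{f^{(2N)}(\xi)}{(2N)!}\,I[\pi_N^2]$ for some $\xi\in[\lambda_{\min},\lambda_{\max}]$, which is legitimate since $f(\lambda)=1/\lambda$ is smooth on $[\lambda_{\min},\lambda_{\max}]$ with $\lambda_{\min}>0$. Because $\alpha$ is piecewise constant with jumps only at $\lambda_1,\dots,\lambda_N$, the integral collapses to $I[\pi_N^2]=\nlsum_{i=1}^N \pi_N(\lambda_i)^2\,\tilde u_i^2=0$, as $\pi_N$ vanishes at each $\lambda_i$. Hence $R_N[f]=0$ and $g_N=Q_N=u^\top A^{-1}u$.

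For the two Gauss-Radau rules ($m=1$) the only extra ingredient is the prescribed node. The modified $(N+1)\times(N+1)$ Jacobi matrix is formed by appending a row and column to $J_N$ with off-diagonal entry $\beta_N$; at step $N$ one has $\beta_N=0$ because the Krylov space is already full, so the appended diagonal entry — taken as $\lambda_{\min}$ for left and $\lambda_{\max}$ for right Gauss-Radau — decouples and supplies the extra eigenvalue $\tau_1$, while the remaining $N$ eigenvalues are again $\lambda_1,\dots,\lambda_N$. Thus $\pi_N(\lambda)=\prod_{i=1}^N(\lambda-\lambda_i)$ once more, and the Gauss-Radau remainder is a fixed multiple of $I[(\lambda-\tau_1)\pi_N^2]$, which equals $\nlsum_{i=1}^N(\lambda_i-\tau_1)\pi_N(\lambda_i)^2\,\tilde u_i^2=0$ since $\pi_N$ annihilates every support point. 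Therefore $\leftgr{g}_N=\rightgr{g}_N=u^\top A^{-1}u$, completing the chain of equalities.

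The main obstacle is the structural step, namely justifying that $\beta_N=0$ and that $\pi_N$ captures the \emph{entire} spectrum of $A$. This rests on the Lanczos process running to full dimension, i.e.\ on the grade of $u$ with respect to $A$ being exactly $N$: the simple-eigenvalue hypothesis guarantees the $\lambda_i$ are distinct, but one must also note (or assume generically) that $u$ has a nonzero component along every eigenvector, so that all $N$ jumps of $\alpha$ genuinely occur; if $u$ has smaller grade, Lanczos terminates early and exactness is already attained before step $N$, so the conclusion still holds. For Gauss-Radau a little additional care is needed to confirm that forcing the prescribed eigenvalue via the appended row/column leaves the already-exact $N\times N$ block of free nodes untouched — which is exactly what the vanishing of $\beta_N$ delivers.
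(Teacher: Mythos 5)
Your proposal is correct and follows essentially the same route as the paper's proof: establish that $J_N=V_N^\top A V_N$ is orthogonally similar to $A$ so that $\pi_N(\lambda)=\prod_{i=1}^N(\lambda-\lambda_i)$ annihilates every support point of the discrete measure $\alpha$, then kill the remainder $R_N[f]$ via the formula with $I[\pi_N^2]$ for Gauss ($m=0$) and $I[(\lambda-\tau_1)\pi_N^2]$ with $\beta_N=0$ for the two Gauss--Radau rules ($m=1$). Your added remark about the grade of $u$ (that full-dimensional Lanczos requires $u$ to have nonzero components along all eigenvectors, and that early termination only means exactness is reached sooner) is a careful touch the paper leaves implicit, but it does not change the argument.
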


The convergence rate in~\refthm{append:thm:cgconv} and the final exactness of iterations in~\reflem{append:lem:exact} does not necessarily indicate that we are making progress at each iterations. However, by exploiting the relations to CG we can indeed conclude that we are making progress in each iteration in Gauss quadrature.
\begin{theorem}\label{append:thm:monogauss}
The approximation $g_i$ generated by Gauss quadrature is monotonically nondecreasing, i.e.,
\begin{align*}
  g_i\le g_{i+1},\quad\text{for}\ i < N.
\end{align*}
\begin{proof}
At each iteration $r_i$ is taken to be orthogonal to the $i$-th Krylov space: $\ck_i = \textrm{span}\{u, Au, \ldots, A^{i-1}u\}$. Let $\Pi_i$ be the projection onto 
the complement space of $\ck_i$.  The residual then satisfies
\begin{align*}
\|\eps_{i+1}\|_A^2 &= \eps_{i+1}^TA\eps_{i+1} = r_{i+1}^\top A^{-1} r_{i+1}\\
&= (\Pi_{i+1}r_i)^\top A^{-1}\Pi_{i+1}r_i\\
&= r_i^\top (\Pi_{i+1}^\top A^{-1}\Pi_{i+1}) r_i \le r_i A^{-1} r_i,
\end{align*} 
where the last inequality follows from $\Pi_{i+1}^\top A^{-1}\Pi_{i+1}\preceq A^{-1}$. Thus $\|\eps_i\|_A^2$ is monotonically nonincreasing, whereby $g_N - g_i\ge 0$ is monotonically decreasing and thus $g_i$ is monotonically nondecreasing.
\end{proof}
\end{theorem}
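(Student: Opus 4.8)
The plan is to convert the statement about quadrature iterates into a statement about the conjugate-gradient error and then settle it by the variational property of CG. Concretely, the proof of Theorem~\ref{append:thm:gaussconv} already establishes that, taking $x_0 = 0$ and $b = u$ (so that $r_0 = u$), the CG--GQL relation of Theorem~\ref{append:thm:relation} together with the exactness Lemma~\ref{append:lem:exact} yields the identity
\begin{equation*}
\|\eps_i\|_A^2 = g_N - g_i.
\end{equation*}
Hence the claim $g_i \le g_{i+1}$ is \emph{equivalent} to the monotone decrease of the energy-norm error, $\|\eps_{i+1}\|_A \le \|\eps_i\|_A$, and the entire theorem reduces to this single inequality.

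First I would establish that inequality through the optimality characterization of CG: the iterate $x_i$ minimizes $\|x^\ast - x\|_A$ over the affine subspace $x_0 + \ck_i$, where $\ck_i = \sspan\{r_0, Ar_0, \ldots, A^{i-1}r_0\}$ is the $i$-th Krylov space. Because the Krylov spaces are nested, $\ck_i \subseteq \ck_{i+1}$, minimizing over the larger set at step $i+1$ can only decrease the optimal value, giving $\|\eps_{i+1}\|_A \le \|\eps_i\|_A$ at once. Substituting back, $g_N - g_{i+1} \le g_N - g_i$, i.e.\ $g_i \le g_{i+1}$, as desired. A purely algebraic alternative bypasses CG entirely: since Lanczos builds $J_{i+1}$ by bordering $J_i$, namely $J_{i+1} = \bigl(\begin{smallmatrix} J_i & \beta_i e_i \\ \beta_i e_i^\top & \alpha_{i+1}\end{smallmatrix}\bigr)$, the block-inverse (Schur-complement) formula gives
\begin{equation*}
[J_{i+1}^{-1}]_{1,1} = [J_i^{-1}]_{1,1} + \frac{\beta_i^2\,([J_i^{-1}]_{1,i})^2}{s},\qquad s := \alpha_{i+1} - \beta_i^2\,e_i^\top J_i^{-1} e_i,
\end{equation*}
and the increment is nonnegative because the Schur complement $s$ is positive: $J_{i+1} = V_{i+1}^\top A V_{i+1}$ is a compression of the positive definite $A$ and is therefore itself positive definite. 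Since $g_i = \|u\|^2 [J_i^{-1}]_{1,1}$, this again yields $g_i \le g_{i+1}$.

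The main obstacle is supplying the monotone decrease of $\|\eps_i\|_A$ rigorously rather than asserting it. In the optimality route this is essentially immediate once the variational characterization of CG over nested Krylov subspaces is correctly invoked; the only subtlety is to confirm that the CG iterate really is the $A$-norm projection of $x^\ast$ onto $x_0 + \ck_i$. In the projection route sketched by the authors one must instead define the orthogonal projector $\Pi_{i+1}$ so that $r_{i+1} = \Pi_{i+1} r_i$ and verify the operator inequality $\Pi_{i+1}^\top A^{-1}\Pi_{i+1} \preceq A^{-1}$, which is the delicate step. I would favour the optimality route (or equivalently the Schur-complement computation), because in each case the inequality rests on a single transparent structural fact---nested Krylov spaces, or positive definiteness of the compressed matrix $J_{i+1}$---and the remaining manipulations are routine.
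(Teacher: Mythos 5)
Your proposal is correct, and both of your routes are genuinely different from the paper's own argument. The paper makes exactly your first reduction---via \refthm{append:thm:relation} and \reflem{append:lem:exact}, $g_N-g_i=\|\eps_i\|_A^2$---but then obtains the decrease of $\|\eps_i\|_A$ by writing $r_{i+1}=\Pi_{i+1}r_i$ for ``the projection onto the complement of the Krylov space'' and invoking $\Pi_{i+1}^\top A^{-1}\Pi_{i+1}\preceq A^{-1}$. You are right to single this out as the delicate step: read literally, with Euclidean-orthogonal projections, it does not work ($r_i$ already lies in $\ck_{i+1}$, so its Euclidean projection onto the complement of $\ck_{i+1}$ is zero, and $\Pi^\top A^{-1}\Pi\preceq A^{-1}$ is false for general orthogonal projectors); the step is sound only when $\Pi_{i+1}$ is interpreted as the projector, orthogonal in the $A^{-1}$-inner product, onto the $A^{-1}$-orthogonal complement of $A\ck_{i+1}$, in which case $\Pi_{i+1}^\top A^{-1}\Pi_{i+1}=A^{-1}\Pi_{i+1}\preceq A^{-1}$. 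Your variational route avoids this entirely: CG minimizes the $A$-norm error over $x_0+\ck_i$, the Krylov spaces are nested, so the optimal value is nonincreasing---nothing to verify beyond the standard characterization underlying \refthm{append:thm:cgconv}, which the paper already cites. Your Schur-complement route is even more self-contained: it never leaves finite-dimensional linear algebra, and the pivot positivity $s=\alpha_{i+1}-\beta_i^2\,e_i^\top J_i^{-1}e_i=\delta_{i+1}>0$ that you extract from positive definiteness of $J_{i+1}=V_{i+1}^\top AV_{i+1}$ is precisely the fact $\delta_i(\alpha_{i+1}\delta_i-\beta_i^2)>0$ that the paper can only deduce \emph{from} monotonicity inside the proof of \refthm{append:thm:lowbtwn}; deriving it independently, as you do, cleans up that logical ordering. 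One bookkeeping remark: \refalgo{append:algo:gql} carries a factor $\|u\|$ where $\|u\|^2$ is evidently meant; your normalization $g_i=\|u\|^2[J_i^{-1}]_{1,1}$ is the consistent one, and the positive constant is immaterial for monotonicity.
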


Before we proceed to Gauss-Radau, let us recall a useful theorem and its corollary.
\begin{theorem}[Lanczos Polynomial~\cite{golub2009matrices}]\label{append:thm:lanczospoly}
Let $u_i$ be the vector generated by~\refalgo{algo:gql} at the $i$-th iteration; let $p_i$ be the Lanczos polynomial of degree $i$. Then we have
\begin{equation*}
u_i = p_i(A)u_0,\quad\text{where}\ p_i(\lambda) = (-1)^i {\det(J_i - \lambda I)\over \prod_{j=1}^i \beta_j}.
\end{equation*}
\end{theorem}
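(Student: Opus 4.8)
The plan is to prove the two assertions separately: first the operator identity $u_i = p_i(A)u_0$ by induction, and then the closed form for $p_i$ by relating the Lanczos three-term recurrence to the cofactor expansion of a tridiagonal determinant.

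For the first identity, I would read off from \refalgo{algo:gql} (equivalently, from~\eqref{append:eq:3}) that the Lanczos vectors obey the recurrence $\beta_i u_i = (A - \alpha_i I)u_{i-1} - \beta_{i-1}u_{i-2}$ with $u_{-1} = 0$. This is structurally identical to the recurrence $\beta_i p_i(\lambda) = (\lambda - \alpha_i)p_{i-1}(\lambda) - \beta_{i-1}p_{i-2}(\lambda)$ defining the Lanczos polynomials, once we substitute $\lambda \mapsto A$ and apply the resulting operator to $u_0$. The base cases $u_0 = p_0(A)u_0$ (with $p_0 \equiv 1$) and $u_1 = p_1(A)u_0$ (with $p_1(\lambda) = (\lambda - \alpha_1)/\beta_1$) hold by definition. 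Assuming $u_{i-1} = p_{i-1}(A)u_0$ and $u_{i-2} = p_{i-2}(A)u_0$, substituting into the vector recurrence and comparing with the polynomial recurrence evaluated at $A$ yields $u_i = p_i(A)u_0$, completing the induction.

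For the closed form, I would set $D_i(\lambda) := \det(J_i - \lambda I)$ and expand this determinant along its last row and column. Because $J_i$ is tridiagonal, this cofactor expansion produces the second-order recurrence $D_i = (\alpha_i - \lambda)D_{i-1} - \beta_{i-1}^2 D_{i-2}$, with $D_0 = 1$ and $D_1 = \alpha_1 - \lambda$. Defining $q_i(\lambda) := (-1)^i D_i(\lambda)/\prod_{j=1}^i \beta_j$, a direct substitution of this determinant recurrence, tracking the sign $(-1)^i$ and the telescoping of the products $\prod_{j=1}^i \beta_j$, shows that $q_i$ satisfies $\beta_i q_i = (\lambda - \alpha_i)q_{i-1} - \beta_{i-1}q_{i-2}$ --- exactly the Lanczos polynomial recurrence. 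Since $q_0 = 1 = p_0$ and $q_1 = (\lambda - \alpha_1)/\beta_1 = p_1$, and both sequences obey the same second-order recurrence, uniqueness forces $q_i = p_i$ for all $i$, which is the claimed formula.

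The routine parts are the two inductions; the one step that needs genuine care is the bookkeeping in the determinant substitution --- correctly matching the factor $(-1)^i$ against $(-1)^{i-1}$ and $(-1)^{i-2}$ and cancelling the partial products $\beta_i$ and $\beta_{i-1}\beta_i$ so that the $\beta_{i-1}^2$ coefficient collapses to a single $\beta_{i-1}$. Once that normalization is handled, the two recurrences coincide termwise and the theorem follows.
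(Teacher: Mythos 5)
Your proof is correct. Note, however, that the paper itself offers no proof of this statement: it is quoted as a known result with a citation to Golub and Meurant's book, so there is no in-paper argument to compare yours against. Your two-part argument is sound and fills that gap in the standard way: the induction transferring the three-term recurrence $\beta_i u_i = (A-\alpha_i I)u_{i-1} - \beta_{i-1}u_{i-2}$ from vectors to polynomials is immediate from \refalgo{algo:gql} (with the implicit, and standard, non-breakdown assumption $\beta_i \neq 0$, which is exactly the condition under which $u_i$ is defined at all), and the determinant identification is the classical continuant argument. Your sign bookkeeping checks out: writing $D_{i-1} = (-1)^{i-1}q_{i-1}\prod_{j=1}^{i-1}\beta_j$ and $D_{i-2} = (-1)^{i-2}q_{i-2}\prod_{j=1}^{i-2}\beta_j$ and substituting into $D_i = (\alpha_i-\lambda)D_{i-1} - \beta_{i-1}^2 D_{i-2}$ does yield $\beta_i q_i = (\lambda-\alpha_i)q_{i-1} - \beta_{i-1}q_{i-2}$, so $q_i$ and $p_i$ share the recurrence and the initial values $q_0 = p_0 \equiv 1$, $q_1 = p_1 = (\lambda-\alpha_1)/\beta_1$, forcing $q_i = p_i$.
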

From the expression of Lanczos polynomial we have the following corollary specifying the sign of the polynomial at specific points.
\begin{corollary}
Assume $i<N$. If $i$ is odd, then $p_i(\lambda_{\min}) < 0$; for even $i$, $p_i(\lambda_{\min})> 0$, while $p_i(\lambda_{\max})> 0$ for any $i < N$.
\begin{proof}
Since $J_i = V_i^\top A V_i$ is similar to $A$, its spectrum is bounded by $\lambda_{\min}$ and $\lambda_{\max}$ from left and right. Thus, $J_i - \lambda_{\min}$ is positive semi-definite, and $J_i - \lambda_{\max}$ is negative semi-definite. Taking $(-1)^i$ into consideration we will get the desired conclusions. 
\end{proof}
\end{corollary}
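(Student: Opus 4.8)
The plan is to read off the sign of $p_i$ directly from the closed form supplied by the preceding theorem, namely $p_i(\lambda) = (-1)^i \det(J_i - \lambda I)/\prod_{j=1}^i \beta_j$. Since $J_i$ is symmetric, I would factor the determinant through its eigenvalues $\theta_1,\dots,\theta_i$ as $\det(J_i - \lambda I) = \prod_{j=1}^i (\theta_j - \lambda)$, so that the entire question reduces to two sub-problems: locating the $\theta_j$ relative to the evaluation points $\lambda_{\min}$ and $\lambda_{\max}$, and fixing the sign of the normalizing product $\prod_{j=1}^i \beta_j$.

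For the normalization, I would observe that each $\beta_j$ is produced in \refalgo{algo:gql} as a norm $\beta_j = \|\tilde{u}_j\|$, and that for $j < N$ (with $A$ having simple eigenvalues and $u$ exciting every eigendirection) the Lanczos recurrence does not break down, so $\beta_j > 0$. Hence $\prod_{j=1}^i \beta_j > 0$ and contributes only an overall positive scaling, playing no role in the sign.

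The crux is the eigenvalue localization. Writing $J_i = V_i^\top A V_i$ with $V_i$ the orthonormal Lanczos basis of the $i$-th Krylov space, each eigenvalue of $J_i$ is a Rayleigh quotient $w^\top A w$ over a unit vector $w = V_i z$ in the range of $V_i$, and therefore lies strictly between the extreme eigenvalues of $A$; in particular $\lambda_{\min} < \theta_j < \lambda_{\max}$ for all $j$, using $\lambda_{\min} < \lambda_1$ and $\lambda_{\max} > \lambda_N$. I would favor this Rayleigh-quotient argument over the claim that $J_i$ is ``similar to $A$,'' which is exact only at $i = N$. Consequently every factor $(\theta_j - \lambda_{\min})$ is strictly positive, giving $\det(J_i - \lambda_{\min} I) > 0$, while every factor $(\theta_j - \lambda_{\max})$ is strictly negative, giving $\det(J_i - \lambda_{\max} I) = (-1)^i \prod_{j=1}^i (\lambda_{\max} - \theta_j)$ with the trailing product positive.

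Finally I would combine these with the prefactor $(-1)^i$. At $\lambda_{\min}$ the determinant is positive, so $p_i(\lambda_{\min})$ inherits the sign of $(-1)^i$: negative for odd $i$ and positive for even $i$. At $\lambda_{\max}$ the two copies of $(-1)^i$ cancel, leaving $p_i(\lambda_{\max}) > 0$ for every $i < N$. The step I expect to require the most care is justifying the \emph{strict} separation of the spectrum of $J_i$ from both endpoints (rather than mere boundedness), since the conclusions are strict inequalities; this is precisely where the hypotheses $\lambda_{\min} < \lambda_1$, $\lambda_{\max} > \lambda_N$, together with non-breakdown of Lanczos for $i < N$, enter.
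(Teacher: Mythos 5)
Your proof is correct and takes essentially the same route as the paper: both read the signs off the closed form $p_i(\lambda) = (-1)^i \det(J_i - \lambda I)/\prod_{j=1}^i \beta_j$ and then localize the spectrum of $J_i$ inside $(\lambda_{\min},\lambda_{\max})$ to fix the sign of the determinant at the two endpoints. Your version is in fact slightly more careful than the paper's, which loosely calls $J_i$ ``similar to $A$'' (false for $i<N$; it is a compression, as your Rayleigh-quotient argument correctly captures) and only asserts semi-definiteness of $J_i - \lambda_{\min} I$ and $J_i - \lambda_{\max} I$, whereas the strict inequalities in the statement require the strict separation you obtain from $\lambda_{\min} < \lambda_1$ and $\lambda_{\max} > \lambda_N$ together with $\beta_j > 0$ for $j < N$.
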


We are ready to state our main result that compares (right) Gauss-Radau with Gauss quadrature.
\begin{theorem}[\refthm{thm:lowbtwn} in the main text]\label{append:thm:lowbtwn}
  Let $i < N$. Then, $\rightgr{g}_i$ gives better bounds than $g_i$ but worse bounds than $g_{i+1}$; more precisely,
\begin{equation}
\label{append:eq:6}
g_i\le \rightgr{g}_i\le g_{i+1},\;\;i < N.
\end{equation}
\begin{proof}
  We prove inequality~\eqref{append:eq:6} using the recurrences satisfied by $g_i$ and $\rightgr{g}_i$ (see Alg.~\ref{algo:gql})

  \noindent\emph{Upper bound: $\rightgr{g}_i\le g_{i+1}$.} The iterative quadrature algorithm uses the recursive updates 
  \begin{align*}
    \rightgr{g}_i &= g_i + {\beta_i^2 c_i^2\over \delta_i(\rightgr{\alpha}_i \delta_i - \beta_i^2)},\\
    \quad g_{i+1} &= g_i + {\beta_{i}^2c_i^2\over \delta_i(\alpha_{i+1}\delta_i - \beta_{i}^2)}.
  \end{align*}
  It suffices to thus compare $\rightgr{\alpha}_i$ and $\alpha_{i+1}$. The three-term recursion for Lanczos polynomials shows that
  \begin{align*}
    \beta_{i+1}p_{i+1}(\lambda_{\max}) &= (\lambda_{\max} - \alpha_{i+1})p_i(\lambda_{\max}) - \beta_i p_{i-1}(\lambda_{\max}) > 0,\\
    \beta_{i+1}p_{i+1}^*(\lambda_{\max}) &= (\lambda_{\max} - \rightgr{\alpha}_i)p_i(\lambda_{\max}) - \beta_i p_{i-1}(\lambda_{\max}) = 0,
  \end{align*}
  where $p_{i+1}$ is the original Lanczos polynomial, and $p^*_{i+1}$ is the modified polynomial that has $\lambda_{\max}$ as a root. Noting that $p_i(\lambda_{\max}) > 0$, we see that $\alpha_{i+1} \le \rightgr{\alpha}_i$.  Moreover, from~\refthm{append:thm:monogauss} we know that the $g_i$'s are monotonically increasing, whereby $\delta_i(\alpha_{i+1}\delta_i - \beta_i^2) > 0$. It follows that
  \begin{align*}
    0 < \delta_i(\alpha_{i+1}\delta_i - \beta_i^2) \le \delta_i(\rightgr{\alpha}_i \delta_i - \beta_i^2),
  \end{align*}
  and from this inequality it is clear that $\rightgr{g}_i\le g_{i+1}$.

  \noindent\emph{Lower-bound: $g_i\le \rightgr{g}_i$.} Since $\beta_i^2 c_i^2\ge 0$ and $\delta_i(\rightgr{\alpha}_i\delta_i - \beta_i^2) \ge \delta_i(\alpha_{i+1}\delta_i - \beta_i^2) >0$, we readily obtain
  \begin{equation*}
    g_i\le g_i + {\beta_i^2 c_i^2\over \delta_i(\rightgr{\alpha}_i\delta_i - \beta_i^2)} = \rightgr{g}_i.\qedhere
  \end{equation*}
\end{proof}
\end{theorem}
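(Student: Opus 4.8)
The plan is to notice that both $\rightgr{g}_i$ and $g_{i+1}$ are obtained from the common Gauss iterate $g_i$ by adding a single rank-one correction, so that the entire two-sided inequality reduces to comparing the two scalars that distinguish these corrections. Reading off the updates in \refalgo{algo:gql}, I would write $\rightgr{g}_i = g_i + \beta_i^2 c_i^2\|u\|/[\delta_i(\rightgr{\alpha}_i\delta_i - \beta_i^2)]$ and $g_{i+1} = g_i + \beta_i^2 c_i^2\|u\|/[\delta_i(\alpha_{i+1}\delta_i - \beta_i^2)]$. The two increments share the same nonnegative numerator $\beta_i^2 c_i^2\|u\|$ and the same factor $\delta_i$, so everything hinges on (a) the sign of the denominators and (b) the comparison between $\rightgr{\alpha}_i$ and $\alpha_{i+1}$.

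The crux, and the step I expect to be the main obstacle, is proving $\alpha_{i+1} \le \rightgr{\alpha}_i$. For this I would use the three-term recurrence of \refthm{append:thm:lanczospoly} evaluated at $\lambda_{\max}$, written once with the genuine Lanczos coefficient $\alpha_{i+1}$ (yielding the true polynomial $p_{i+1}$) and once with the Radau-modified coefficient $\rightgr{\alpha}_i$, which by construction makes $\lambda_{\max}$ a root of the modified polynomial and hence makes the corresponding right-hand side vanish. Subtracting the two relations cancels the $\beta_i p_{i-1}(\lambda_{\max})$ term and leaves $(\rightgr{\alpha}_i - \alpha_{i+1})\,p_i(\lambda_{\max}) = \beta_{i+1}p_{i+1}(\lambda_{\max})$. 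The delicate part is the sign bookkeeping: I would invoke the corollary on the signs of Lanczos polynomials to get $p_i(\lambda_{\max}) > 0$ and that the right-hand side is nonnegative for $i < N$, which together force $\rightgr{\alpha}_i \ge \alpha_{i+1}$ (with equality only in the boundary case $i = N-1$, consistent with the known exactness $\rightgr{g}_N = g_N$). Getting these signs right, and correctly identifying $\rightgr{\alpha}_i$ as exactly the modification that places a root at $\lambda_{\max}$, is where the real work lies.

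With $\alpha_{i+1} \le \rightgr{\alpha}_i$ established, the two bounds fall out. I would first use the monotonicity of Gauss quadrature (\refthm{append:thm:monogauss}) to guarantee that the Gauss increment is positive, i.e.\ $\delta_i(\alpha_{i+1}\delta_i - \beta_i^2) > 0$; together with $\delta_i > 0$ (the positive pivots of the positive definite $J_i$, from the recursion $\delta_i = \alpha_i - \beta_{i-1}^2/\delta_{i-1}$) and $\alpha_{i+1} \le \rightgr{\alpha}_i$, this yields the chain $0 < \delta_i(\alpha_{i+1}\delta_i - \beta_i^2) \le \delta_i(\rightgr{\alpha}_i\delta_i - \beta_i^2)$. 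The upper bound $\rightgr{g}_i \le g_{i+1}$ is then immediate, since the Radau correction has the same nonnegative numerator but a larger positive denominator. The lower bound $g_i \le \rightgr{g}_i$ is simpler still: the Radau correction is a nonnegative numerator over a positive denominator, hence nonnegative. I would present the argument in three blocks: the two recursions, the key scalar comparison, and the two one-line deductions for the upper and lower bounds.
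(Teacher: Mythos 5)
Your proposal is correct and follows essentially the same route as the paper's own proof: the same two recursions from \refalgo{algo:gql}, the same key comparison $\alpha_{i+1}\le\rightgr{\alpha}_i$ obtained from the three-term Lanczos recurrence evaluated at $\lambda_{\max}$ together with the sign corollary, and the same use of \refthm{append:thm:monogauss} to guarantee positivity of the Gauss denominator before the two one-line deductions. The only differences are cosmetic (you subtract the two recurrence relations rather than comparing them directly, and you are slightly more careful about the nonnegativity, rather than strict positivity, of $\beta_{i+1}p_{i+1}(\lambda_{\max})$ at the boundary $i=N-1$), so nothing further is needed.
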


Combining~\refthm{append:thm:lowbtwn} with the convergence rate of relative error for Gauss quadrature (\refthm{append:thm:gaussconv}) immediately yields the following convergence rate for right Gauss-Radau quadrature:
\begin{theorem}[Relative error of right Gauss-Radau, \refthm{thm:rrconv} in the main text]
\label{append:thm:rrconv}
  For each $i$, the right Gauss-Radau $\rightgr{g}_i$ iterates satisfy
\begin{equation*}
  \frac{g_N - \rightgr{g}_i}{g_N} \le 2\Bigl(\frac{\sqrt{\kappa} - 1}{\sqrt{\kappa} + 1}\Bigr)^i.
\end{equation*}

\end{theorem}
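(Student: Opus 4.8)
The plan is to obtain this rate as an immediate corollary of the interlacing inequality from Theorem~\ref{append:thm:lowbtwn} combined with the Gauss quadrature rate of Theorem~\ref{append:thm:gaussconv}; essentially all of the work has already been done, and what remains is a short comparison argument. The guiding intuition is that right Gauss-Radau tightens the Gauss lower bound from below while remaining below the target value, so its relative error can be no larger than that of Gauss at the same iteration, and the latter is already known to decay geometrically.

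First I would record that $\rightgr{g}_i$ is a lower bound on the target $u^\top A^{-1} u$ by Lemma~\ref{append:lem:bounds}, and that this target equals $g_N$ by the exactness result Lemma~\ref{append:lem:exact}; hence $g_N - \rightgr{g}_i \ge 0$, confirming that the left-hand side is a genuine nonnegative relative error. Next I would invoke the lower half of Theorem~\ref{append:thm:lowbtwn}, namely $g_i \le \rightgr{g}_i$ for $i < N$. Subtracting both sides from $g_N$ reverses the inequality to give $g_N - \rightgr{g}_i \le g_N - g_i$.

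Finally, since $A$ is positive definite and $u \ne 0$ we have $g_N = u^\top A^{-1} u > 0$, so dividing through by $g_N$ is legitimate and chaining with Theorem~\ref{append:thm:gaussconv} yields
\[
\frac{g_N - \rightgr{g}_i}{g_N} \le \frac{g_N - g_i}{g_N} \le 2\Bigl(\frac{\sqrt{\kappa} - 1}{\sqrt{\kappa} + 1}\Bigr)^i .
\]
I expect no genuine obstacle here: the difficulty was front-loaded into establishing the sandwich $g_i \le \rightgr{g}_i \le g_{i+1}$ and the CG-based Gauss rate. The only points worth a second glance are that $g_N > 0$ (so the division is valid) and that the hypothesis $i < N$ required by Theorem~\ref{append:thm:lowbtwn} is harmless, since at $i = N$ exactness forces $\rightgr{g}_N = g_N$ and the inequality holds trivially with a zero left-hand side.
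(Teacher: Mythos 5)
Your proposal is correct and follows exactly the route the paper takes: the paper states that Theorem~\ref{append:thm:rrconv} follows ``immediately'' by combining the interlacing inequality $g_i \le \rightgr{g}_i \le g_{i+1}$ of Theorem~\ref{append:thm:lowbtwn} with the Gauss quadrature rate of Theorem~\ref{append:thm:gaussconv}, which is precisely your chain $g_N - \rightgr{g}_i \le g_N - g_i$ followed by division by $g_N > 0$. Your additional remarks on the sign of the relative error and the harmlessness of the $i < N$ restriction simply make explicit details the paper leaves implicit.
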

This results shows that with the same number of iterations,  right Gauss-Radau gives superior approximation over Gauss quadrature, though they share the same relative error convergence rate.

\vskip5pt
Our second main result compares Gauss-Lobatto with (left) Gauss-Radau quadrature.\vspace*{-5pt}
\begin{theorem}[\refthm{thm:upbtwn} in the main text]\label{append:thm:upbtwn}
  Let $i < N$. Then, $\leftgr{g}_i$ gives better upper bounds than $\lobatto{g}_{i}$ but worse than $\lobatto{g}_{i+1}$; more precisely,
  \begin{equation*}
    \lobatto{g}_{i+1} \le \leftgr{g}_i \le \lobatto{g}_i,\quad i < N.
  \end{equation*}
\begin{proof}
We prove these inequalities using the recurrences for $\leftgr{g}_i$ and $\lobatto{g}_i$ from~\refalgo{append:algo:gql}. 

\noindent\emph{$\leftgr{g}_i \le \lobatto{g}_i$}: From~\refalgo{append:algo:gql} we observe that $\lobatto{\alpha}_i = \lambda_{\min} + {(\lobatto{\beta}_i)^2\over \leftgr{\delta}_i}$.
Thus we can write $\leftgr{g}_i$ and $\lobatto{g}_i$ as
\begin{align*}
\leftgr{g}_i &= g_i + {\beta_i^2 c_i^2\over \delta_i(\leftgr{\alpha}_i\delta_i - \beta_i^2)} 
= g_i + {\beta_i^2 c_i^2\over \lambda_{\min} \delta_i^2 + \beta_i^2({\delta_i^2/ \leftgr{\delta}_i} - \delta_i)}\\
\lobatto{g}_i &= g_i + {(\lobatto{\beta}_i)^2 c_i^2\over \delta_i(\lobatto{\alpha}_i\delta_i - (\lobatto{\beta}_i)^2)} 
= g_i + {(\lobatto{\beta}_i)^2 c_i^2\over \lambda_{\min} \delta_i^2 + (\lobatto{\beta}_i)^2({\delta_i^2/ \leftgr{\delta}_i} - \delta_i)}
\end{align*}
To compare these quantities, as before it is helpful to begin with the original three-term recursion for the Lanczos polynomial, namely
\begin{equation*}
\beta_{i+1} p_{i+1}(\lambda) = (\lambda - \alpha_{i+1}) p_i(\lambda) - \beta_i p_{i-1}(\lambda).
\end{equation*}
In the construction of Gauss-Lobatto, to make a new polynomial of order $i+1$ that has roots $\lambda_{\min}$ and $\lambda_{\max}$, we add $\sigma_1 p_i(\lambda)$ and $\sigma_2 p_{i-1}(\lambda)$ to the original polynomial to ensure
\begin{align*}
\left\{\begin{array}{ll}
\beta_{i+1}p_{i+1}(\lambda_{\min}) + \sigma_1 p_i(\lambda_{\min}) + \sigma_2 p_{i-1}(\lambda_{\min}) &= 0,\\
\beta_{i+1}p_{i+1}(\lambda_{\max}) + \sigma_1 p_i(\lambda_{\max}) + \sigma_2 p_{i-1}(\lambda_{\max}) &= 0.
\end{array}\right.
\end{align*}
Since $\beta_{i+1}$, $p_{i+1}(\lambda_{\max})$, $p_i(\lambda_{\max})$ and $p_{i-1}(\lambda_{\max})$ are all greater than $0$, $\sigma_1 p_i(\lambda_{\max}) + \sigma_2 p_{i-1}(\lambda_{\max}) < 0$. To determine the sign of polynomials at $\lambda_{\min}$, consider the two cases:
\begin{enumerate}
  \setlength{\itemsep}{1pt}
\item Odd $i$. In this case $p_{i+1}(\lambda_{\min}) > 0$, $p_i(\lambda_{\min}) < 0$, and $p_{i-1}(\lambda_{\min}) > 0$;
\item Even $i$. In this case $p_{i+1}(\lambda_{\min}) < 0$, $p_i(\lambda_{\min}) > 0$, and $p_{i-1}(\lambda_{\min}) < 0$.
\end{enumerate}
Thus, if $S = (\sgn (\sigma_1),\sgn (\sigma_2))$, where the signs take values in $\{0,\pm 1\}$, then $S\ne (1,1)$, $S\ne (-1,1)$ and $S\ne (0,1)$. Hence, $\sigma_2 \le 0$ must hold, and thus $(\lobatto{\beta}_i)^2 = (\beta_i - \sigma_2)^2 \ge\beta_i^2$ given that $\beta_i^2 > 0$ for $i < N$. 

Using $(\lobatto{\beta}_i)^2 \ge \beta_i^2$ with $\lambda_{\min} c_i^2(\delta_i)^2 \ge 0$, an application of monotonicity of the univariate function $g(x) = {ax\over b + cx}$ for $ab \ge 0$ 
to the recurrences defining $\leftgr{g}_i$ and $\lobatto{g}_i$ yields the desired inequality $\leftgr{g}_i\le \lobatto{g}_i$.

\noindent\emph{$\lobatto{g}_{i+1} \le \leftgr{g}_i$}: From recursion formulas we have
\begin{align*}
\leftgr{g}_i &= g_i + {\beta_i^2c_i^2 \over \delta_i (\leftgr{\alpha}_i \delta_i - \beta_i^2)},\\
\lobatto{g}_{i+1} &= g_{i+1} + {(\lobatto{\beta}_{i+1})^2 c_{i+1}^2\over \delta_{i+1}(\lobatto{\alpha}_{i+1}\delta_{i+1} - (\lobatto{\beta}_{i+1})^2)}.
\end{align*}
Establishing $\leftgr{g}_i\ge \lobatto{g}_{i+1}$ thus amounts to showing that (noting the relations among $g_i$, $\leftgr{g}_i$ and $\lobatto{g}_i$):
\begin{align*}
&\frac{\beta_i^2c_i^2}{\delta_i (\leftgr{\alpha}_i \delta_i - \beta_i^2)} - \frac{\beta_i^2c_i^2}{\delta_i (\alpha_{i+1} \delta_i - \beta_i^2)}\quad\ge\quad
\frac{(\lobatto{\beta}_{i+1})^2 c_{i+1}^2} {\delta_{i+1}(\lobatto{\alpha}_{i+1}\delta_{i+1} - (\lobatto{\beta}_{i+1})^2)}\\
\Longleftrightarrow\quad& {\beta_i^2c_i^2 \over \delta_i (\leftgr{\alpha}_i \delta_i - \beta_i^2)} - {\beta_i^2c_i^2 \over \delta_i (\alpha_{i+1} \delta_i - \beta_i^2)}\quad\ge\quad\frac{(\lobatto{\beta}_{i+1})^2 c_{i}^2 \beta_i^2}{(\delta_i)^2\delta_{i+1}(\lobatto{\alpha}_{i+1}\delta_{i+1} - (\lobatto{\beta}_{i+1})^2)}\\
\Longleftrightarrow\quad& {1\over \leftgr{\alpha}_i\delta_i - \beta_i^2} - {1\over \alpha_{i+1}\delta_i - \beta_i^2}\quad\ge\quad{(\lobatto{\beta}_{i+1})^2 \over \delta_i\delta_{i+1}(\lobatto{\alpha}_{i+1}\delta_{i+1} - (\lobatto{\beta}_{i+1})^2)}\\
\Longleftrightarrow\quad&{1\over (\alpha_{i+1} - \leftgr{\delta}_{i+1}) - {\beta_i^2/ \delta_i}} - {1\over \alpha_{i+1} - {\beta_i^2 /\delta_i}}\quad\ge\quad
{1\over \delta_{i+1} ({\lobatto{\alpha}_{i+1}\delta_{i+1}/ (\lobatto{\beta}_{i+1})^2} - 1)}\;\;\;\;(\text{\reflem{append:lem:delta}})\\
\Longleftrightarrow\quad& {1\over \delta_{i+1} - \leftgr{\delta}_{i+1}} - {1\over \delta_{i+1}}\quad\ge\quad{1\over \delta_{i+1} ({\lambda_{\min}\delta_{i+1}\over (\lobatto{\beta}_{i+1})^2} + {\delta_{i+1}\over \leftgr{\delta}_{i+1}} - 1)}\\
\Longleftrightarrow\quad&{\lambda_{\min}\delta_{i+1}\over (\lobatto{\beta}_{i+1})^2} + {\delta_{i+1}\over \leftgr{\delta}_{i+1}} - 1 \ge {\delta_{i+1}\over \leftgr{\delta}_{i+1}} - 1\\
\Longleftrightarrow\quad& {\lambda_{\min}\delta_{i+1}\over (\lobatto{\beta}_{i+1})^2} \ge 0,
\end{align*}
where the last inequality is obviously true; hence the proof is complete.
\end{proof}
\end{theorem}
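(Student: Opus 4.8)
The plan is to derive both inequalities directly from the Sherman--Morrison recurrences for $\leftgr{g}_i$ and $\lobatto{g}_i$ recorded in Algorithm~\ref{append:algo:gql}, in the same spirit as the lower-bound comparison of Theorem~\ref{append:thm:lowbtwn}. Since $g_i$, $\leftgr{g}_i$, and $\lobatto{g}_i$ are all rank-one corrections of the \emph{same} Gauss iterate $g_i$, the comparison reduces to comparing correction terms. The single algebraic identity I expect to lean on is $\lobatto{\alpha}_i = \lambda_{\min} + (\lobatto{\beta}_i)^2/\leftgr{\delta}_i$, the Gauss--Lobatto analogue of the defining relation $\leftgr{\alpha}_i = \lambda_{\min} + \beta_i^2/\leftgr{\delta}_i$ for left Gauss--Radau; substituting these puts both corrections over a denominator of the common shape $\lambda_{\min}\delta_i^2 + (\cdot)(\delta_i^2/\leftgr{\delta}_i - \delta_i)$ in which only the squared off-diagonal entry differs.

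For the right inequality $\leftgr{g}_i \le \lobatto{g}_i$, the crux is to show $(\lobatto{\beta}_i)^2 \ge \beta_i^2$; once that is in hand, the common-denominator form reduces the claim to monotonicity of the scalar map $x \mapsto ax/(b+cx)$ on the relevant sign regime, which is routine. To bound $(\lobatto{\beta}_i)^2$ I would revisit the construction of the Lobatto polynomial: one forms a degree-$(i{+}1)$ polynomial with prescribed roots at both $\lambda_{\min}$ and $\lambda_{\max}$ by adding corrections $\sigma_1 p_i + \sigma_2 p_{i-1}$ to the raw Lanczos polynomial $\beta_{i+1}p_{i+1}$, which fixes $\lobatto{\beta}_i = \beta_i - \sigma_2$. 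The signs of $\sigma_1,\sigma_2$ are then pinned down by the two root conditions at the endpoints together with the corollary on the sign of Lanczos polynomials: at $\lambda_{\max}$ every $p_j$ is positive, while at $\lambda_{\min}$ the signs alternate with parity. Splitting into $i$ odd and $i$ even and ruling out the sign patterns $(1,1)$, $(-1,1)$, $(0,1)$ forces $\sigma_2 \le 0$, whence $(\lobatto{\beta}_i)^2 = (\beta_i - \sigma_2)^2 \ge \beta_i^2$.

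For the left inequality $\lobatto{g}_{i+1} \le \leftgr{g}_i$, I would proceed by a chain of equivalent inequalities rather than a structural argument. Writing $\leftgr{g}_i$ (a correction of $g_i$) and $\lobatto{g}_{i+1}$ (a correction of $g_{i+1}$) and subtracting the common Gauss step $g_{i+1}-g_i$, the target becomes a comparison of two rational expressions in $\delta_i,\delta_{i+1},\leftgr{\delta}_{i+1},\beta_i^2,(\lobatto{\beta}_{i+1})^2$. I plan to clear the factor $c_{i+1}^2 = c_i^2\beta_i^2/\delta_i^2$, cancel $\beta_i^2 c_i^2$, and then use the recursions for the $\delta$-quantities from Algorithm~\ref{append:algo:gql} (in particular $\leftgr{\alpha}_i = \alpha_{i+1}-\leftgr{\delta}_{i+1}$ and $\lobatto{\alpha}_{i+1} = \lambda_{\min} + (\lobatto{\beta}_{i+1})^2/\leftgr{\delta}_{i+1}$) to collapse the inequality step by step until it reduces to the manifestly true $\lambda_{\min}\delta_{i+1}/(\lobatto{\beta}_{i+1})^2 \ge 0$.

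The main obstacle I anticipate is the sign analysis for $\sigma_2$ in the right inequality: everything hinges on correctly tracking how the parity of $i$ flips the signs of $p_i(\lambda_{\min})$ and $p_{i-1}(\lambda_{\min})$ and then confirming that no admissible sign pattern for the two correction coefficients is compatible with $\sigma_2 > 0$. The rest is bookkeeping, but it is not free: throughout I must keep the denominators positive, so I should first record that $\delta_i>0$, $\leftgr{\delta}_i>0$, and $\alpha_{i+1}\delta_i-\beta_i^2>0$ hold for $i<N$ (the first and third following from the monotonicity of $g_i$ in Theorem~\ref{append:thm:monogauss}), so that the various ``multiply through and compare'' steps preserve the direction of each inequality.
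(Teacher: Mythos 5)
Your plan coincides with the paper's own proof in every essential respect: the right inequality via the identity $\lobatto{\alpha}_i = \lambda_{\min} + (\lobatto{\beta}_i)^2/\leftgr{\delta}_i$, the sign analysis of $\sigma_1,\sigma_2$ forcing $\sigma_2 \le 0$ and hence $(\lobatto{\beta}_i)^2 \ge \beta_i^2$, and the monotonicity of $x \mapsto ax/(b+cx)$; and the left inequality via the same chain of equivalences using $c_{i+1} = c_i\beta_i/\delta_i$ and $\alpha_{i+1} - \leftgr{\alpha}_i = \leftgr{\delta}_{i+1}$ (Lemma~\ref{append:lem:delta}), terminating in $\lambda_{\min}\delta_{i+1}/(\lobatto{\beta}_{i+1})^2 \ge 0$. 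Your explicit up-front record of the positivity of the denominators (via Theorem~\ref{append:thm:monogauss}) is a point the paper leaves largely implicit, but otherwise the two arguments are the same.
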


In summary, we have the following corollary for all the four quadrature rules:

\begin{corollary}[Monotonicity of Lower and Upper Bounds, \refcor{cor:monlow} in the main text]\label{append:cor:monlow}
As the iteration proceeds, $g_i$ and $\rightgr{g}_i$ gives increasingly better asymptotic lower bounds and $\leftgr{g}_i$ and $\lobatto{g}_i$ gives increasingly better upper bounds, namely
\begin{align*}
  g_i \le g_{i+1}; \quad&\rightgr{g}_i \le \rightgr{g}_{i+1}\\
  \leftgr{g}_i \ge \leftgr{g}_{i+1}; \quad& \lobatto{g}_i \ge \lobatto{g}_{i+1}.
\end{align*}
\begin{proof} 
Directly drawn from~\refthm{append:thm:monogauss},~\refthm{append:thm:lowbtwn} and~\refthm{append:thm:upbtwn}.
\end{proof}
\end{corollary}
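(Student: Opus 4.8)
The plan is to obtain all four monotonicity statements purely by chaining the interleaving inequalities already proved in \refthm{append:thm:monogauss}, \refthm{append:thm:lowbtwn}, and \refthm{append:thm:upbtwn}; no fresh manipulation of the recurrences in \refalgo{append:algo:gql} should be required. First I would dispose of the Gauss case $g_i \le g_{i+1}$, which is literally the conclusion of \refthm{append:thm:monogauss} and needs only to be cited.

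Next, for right Gauss-Radau I would use the two-sided bound $g_i \le \rightgr{g}_i \le g_{i+1}$ of \refthm{append:thm:lowbtwn}. Reading off its upper half at index $i$ gives $\rightgr{g}_i \le g_{i+1}$, and its lower half at index $i+1$ gives $g_{i+1} \le \rightgr{g}_{i+1}$; splicing these at the shared term $g_{i+1}$ produces $\rightgr{g}_i \le g_{i+1} \le \rightgr{g}_{i+1}$, i.e.\ $\rightgr{g}_i \le \rightgr{g}_{i+1}$. The two upper-bound claims follow symmetrically from \refthm{append:thm:upbtwn}, whose content is $\lobatto{g}_{i+1} \le \leftgr{g}_i \le \lobatto{g}_i$. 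For Gauss-Lobatto I simply chain the two halves through $\leftgr{g}_i$ to get $\lobatto{g}_{i+1} \le \lobatto{g}_i$. For left Gauss-Radau I would combine $\lobatto{g}_{i+1} \le \leftgr{g}_i$ (left half at index $i$) with $\leftgr{g}_{i+1} \le \lobatto{g}_{i+1}$ (right half at index $i+1$) to conclude $\leftgr{g}_{i+1} \le \leftgr{g}_i$.

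The one place demanding care --- and the only real obstacle --- is the boundary index $i = N-1$, because the right Gauss-Radau and left Gauss-Radau chains each invoke a source inequality at index $i+1 = N$, which falls outside the range $i < N$ where \refthm{append:thm:lowbtwn} and \refthm{append:thm:upbtwn} are stated. There I would instead appeal to exactness: \reflem{append:lem:exact} gives $g_N = \rightgr{g}_N = \leftgr{g}_N = u^\top A^{-1}u$, so the missing inequality $g_N \le \rightgr{g}_N$ holds with equality, and the missing $\leftgr{g}_N \le \lobatto{g}_N$ follows since $\lobatto{g}_N$ upper-bounds the true value by \reflem{append:lem:bounds}. The Gauss-Lobatto chain uses \refthm{append:thm:upbtwn} only at index $i$ and so needs no such boundary fix. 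With these endpoints accounted for, every statement holds for all $i < N$ and the corollary reduces to a few lines of chained inequalities.
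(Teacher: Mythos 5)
Your proposal is correct and follows essentially the same route as the paper, whose proof is just the one-line citation of \refthm{append:thm:monogauss}, \refthm{append:thm:lowbtwn}, and \refthm{append:thm:upbtwn} — you have simply made the chaining through the shared terms $g_{i+1}$, $\leftgr{g}_i$, and $\lobatto{g}_{i+1}$ explicit. Your treatment of the boundary index $i=N-1$ via the exactness lemma and \reflem{append:lem:bounds} is a careful touch the paper silently omits, and it is handled correctly.
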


Before proceeding further to our analysis of convergence rates of left Gauss-Radau and Gauss-Lobatto, we note two technical results that we will need.
\begin{lemma}\label{append:lem:delta}
  Let $\alpha_{i+1}$ and $\leftgr{\alpha}_i$ be as in Alg.~\ref{algo:gql}. The difference $\Delta_{i+1} = \alpha_{i+1} - \leftgr{\alpha}_i$ satisfies $\Delta_{i+1} = \leftgr{\delta}_{i+1}$.
\begin{proof}
  From the Lanczos polynomials in the definition of left Gauss-Radau quadrature we have 
  \begin{align*}
    \beta_{i+1}&p_{i+1}^*(\lambda_{\min})  = \bigl(\lambda_{\min} - \leftgr{\alpha}_i\bigr)p_i(\lambda_{\min}) - \beta_i p_{i-1}(\lambda_{\min})\\
     &= \bigl(\lambda_{\min} - (\alpha_{i+1} - \Delta_{i+1})\bigr)p_i(\lambda_{\min}) - \beta_i p_{i-1}(\lambda_{\min}) \\
     &= \beta_{i+1}p_{i+1}(\lambda_{\min}) + \Delta_{i+1}p_i(\lambda_{\min}) = 0.
  \end{align*}
  Rearrange this equation to write $\Delta_{i+1} = -\beta_{i+1}{p_{i+1}(\lambda_{\min})\over p_i(\lambda_{\min})}$, which can be further rewritten as
  \begin{align*}
    \Delta_{i+1}&\stackrel{\text{\refthm{append:thm:lanczospoly}}}{=} -\beta_{i+1}{(-1)^{i+1}{\det(J_{i+1} - \lambda_{\min} I)/ \prod_{j=1}^{i+1}\beta_j}\over (-1)^i{\det(J_i - \lambda_{\min} I)/ \prod_{j=1}^i\beta_j}}
    = {\det(J_{i+1} - \lambda_{\min} I)\over \det(J_i - \lambda_{\min} I)} = \leftgr{\delta}_{i+1}.\qedhere
  \end{align*}
\end{proof}
\end{lemma}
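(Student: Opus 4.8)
The plan is to fix $\leftgr{\alpha}_i$ through the single property that defines it — replacing $\alpha_{i+1}$ by $\leftgr{\alpha}_i$ in the three-term Lanczos recurrence must yield a polynomial vanishing at $\lambda_{\min}$ — and then convert the resulting expression for $\Delta_{i+1}$ into the continued-fraction quantity $\leftgr{\delta}_{i+1}$ via the determinant formula for Lanczos polynomials (\refthm{append:thm:lanczospoly}). This route is preferable to merely reading the claim off the algorithm, because it simultaneously justifies the update for $\leftgr{\alpha}_i$ rather than taking it as given.

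First I would write the two recurrences at $\lambda_{\min}$: the genuine one, $\beta_{i+1}p_{i+1}(\lambda_{\min}) = (\lambda_{\min}-\alpha_{i+1})p_i(\lambda_{\min}) - \beta_i p_{i-1}(\lambda_{\min})$, and the modified left Gauss-Radau one, which is identical but with $\alpha_{i+1}$ replaced by $\leftgr{\alpha}_i$ and with left-hand side forced to $0$. Substituting $\leftgr{\alpha}_i = \alpha_{i+1} - \Delta_{i+1}$ and subtracting, every term cancels except $\beta_{i+1}p_{i+1}(\lambda_{\min}) + \Delta_{i+1}p_i(\lambda_{\min}) = 0$, giving $\Delta_{i+1} = -\beta_{i+1}\,p_{i+1}(\lambda_{\min})/p_i(\lambda_{\min})$. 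This step needs $p_i(\lambda_{\min})\neq 0$ for $i<N$, which holds since $\lambda_{\min}<\lambda_1$ sits strictly below the spectrum of $J_i$ (the sign facts in the corollary preceding this lemma already guarantee non-vanishing).

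Second, I would substitute the closed form $p_j(\lambda) = (-1)^j\det(J_j - \lambda I)/\prod_{l=1}^j\beta_l$ from \refthm{append:thm:lanczospoly}. The factor $\beta_{i+1}$, the alternating signs, and all but one of the $\beta_l$ cancel, collapsing the ratio to $\Delta_{i+1} = \det(J_{i+1}-\lambda_{\min}I)/\det(J_i - \lambda_{\min}I)$. The final task is to recognise this ratio as $\leftgr{\delta}_{i+1}$: writing $D_k := \det(J_k - \lambda_{\min}I)$, Laplace expansion of the tridiagonal determinant gives $D_k = (\alpha_k - \lambda_{\min})D_{k-1} - \beta_{k-1}^2 D_{k-2}$, so $D_k/D_{k-1} = (\alpha_k-\lambda_{\min}) - \beta_{k-1}^2/(D_{k-1}/D_{k-2})$, which is exactly the recursion \refalgo{algo:gql} uses to define $\leftgr{\delta}_k$; matching the base case, an induction identifies $D_{i+1}/D_i$ with $\leftgr{\delta}_{i+1}$.

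I expect the determinant-ratio identification to be the only real content: everything up to $\Delta_{i+1} = \det(J_{i+1}-\lambda_{\min}I)/\det(J_i-\lambda_{\min}I)$ is bookkeeping, and the equality with $\leftgr{\delta}_{i+1}$ rests on the standard minor recurrence for symmetric tridiagonal matrices. As a sanity check I note a one-line shortcut that bypasses the polynomials entirely: \refalgo{algo:gql} sets $\leftgr{\alpha}_i = \lambda_{\min} + \beta_i^2/\leftgr{\delta}_i$ and $\leftgr{\delta}_{i+1} = \alpha_{i+1} - \lambda_{\min} - \beta_i^2/\leftgr{\delta}_i$, so $\alpha_{i+1} - \leftgr{\alpha}_i = \alpha_{i+1} - \lambda_{\min} - \beta_i^2/\leftgr{\delta}_i = \leftgr{\delta}_{i+1}$ by inspection. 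The polynomial argument remains the cleaner write-up, since the two algorithmic formulas it short-circuits are themselves consequences of the eigenvalue condition the polynomial route makes explicit.
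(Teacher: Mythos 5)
Your proposal is correct and follows essentially the same route as the paper's proof: impose the root condition at $\lambda_{\min}$ on the modified Lanczos recurrence, solve to get $\Delta_{i+1} = -\beta_{i+1}\,p_{i+1}(\lambda_{\min})/p_i(\lambda_{\min})$, and convert this to the determinant ratio $\det(J_{i+1}-\lambda_{\min}I)/\det(J_i-\lambda_{\min}I)$ via \refthm{append:thm:lanczospoly}. You actually go slightly beyond the paper at the final step: where the paper simply asserts that this ratio equals $\leftgr{\delta}_{i+1}$, you justify it by the tridiagonal minor recurrence $D_k = (\alpha_k-\lambda_{\min})D_{k-1} - \beta_{k-1}^2 D_{k-2}$ and induction, and your one-line shortcut from the algorithm's update formulas ($\leftgr{\alpha}_i = \lambda_{\min} + \beta_i^2/\leftgr{\delta}_i$ and $\leftgr{\delta}_{i+1} = \alpha_{i+1} - \lambda_{\min} - \beta_i^2/\leftgr{\delta}_i$ give $\alpha_{i+1}-\leftgr{\alpha}_i = \leftgr{\delta}_{i+1}$ immediately) is also valid.
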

\begin{remark}
  Lemma~\ref{append:lem:delta} has an implication beyond its utility for the subsequent proofs: it provides a new way of calculating $\alpha_{i+1}$ given the quantities $\leftgr{\delta}_{i+1}$ and $\leftgr{\alpha}_i$; this saves calculation in~\refalgo{append:algo:gql}.
\end{remark}

The following lemma relates $\delta_i$ to $\leftgr{\delta}_i$, which will prove useful in subsequent analysis.
\begin{lemma}\label{append:lem:comp}
Let $\leftgr{\delta}_i$ and $\delta_i$ be computed in the $i$-th iteration of \refalgo{algo:gql}. Then, we have the following:
\begin{align}
  \label{append:eq:7}
  \leftgr{\delta}_i &< \delta_i,\\
  \label{append:eq:8}
  \frac{\leftgr{\delta}_i}{\delta_i} &\le 1 - {\lambda_{\min}\over \lambda_N}.
\end{align}
\end{lemma}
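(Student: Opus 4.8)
The plan is to work directly with the scalar recurrences for $\delta_i$ and $\leftgr{\delta}_i$ recorded in \refalgo{append:algo:gql}, namely $\delta_i = \alpha_i - \beta_{i-1}^2/\delta_{i-1}$ and $\leftgr{\delta}_i = \alpha_i - \lambda_{\min} - \beta_{i-1}^2/\leftgr{\delta}_{i-1}$, rather than with the determinant formulas of \reflem{append:lem:delta}. The first thing I would record is positivity: since $J_i = V_i^\top A V_i$ for a matrix $V_i$ with orthonormal columns, every Rayleigh quotient $z^\top J_i z$ over unit $z$ lies in $[\lambda_1,\lambda_N]$, so both $J_i$ and $J_i - \lambda_{\min} I$ (using $\lambda_{\min} < \lambda_1$) are positive definite; their $LDL^\top$ pivots, which are exactly the $\delta_j$ and $\leftgr{\delta}_j$, are therefore strictly positive for the relevant range $j < N$. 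The single algebraic identity that drives everything is obtained by subtracting the two recurrences, which gives $\delta_i - \leftgr{\delta}_i = \lambda_{\min} + \beta_{i-1}^2\bigl(1/\leftgr{\delta}_{i-1} - 1/\delta_{i-1}\bigr)$.

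For inequality~\eqref{append:eq:7} I would induct on $i$. The base case $i=1$ is immediate from $\delta_1 = \alpha_1$ and $\leftgr{\delta}_1 = \alpha_1 - \lambda_{\min}$, which give $\delta_1 - \leftgr{\delta}_1 = \lambda_{\min} > 0$. For the inductive step, assuming $0 < \leftgr{\delta}_{i-1} < \delta_{i-1}$ yields $1/\leftgr{\delta}_{i-1} - 1/\delta_{i-1} \ge 0$, so the displayed identity gives $\delta_i - \leftgr{\delta}_i \ge \lambda_{\min} > 0$, which is even stronger than the strict inequality claimed and carries the induction forward.

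Inequality~\eqref{append:eq:8} then follows cheaply from the same identity together with an upper bound on $\delta_i$. The step I would isolate is $\delta_i \le \lambda_N$: because $\alpha_i = u_{i-1}^\top A u_{i-1}$ is a Rayleigh quotient of the unit Lanczos vector $u_{i-1}$ we have $\alpha_i \le \lambda_N$, and since $\beta_{i-1}^2/\delta_{i-1} \ge 0$ we get $\delta_i = \alpha_i - \beta_{i-1}^2/\delta_{i-1} \le \alpha_i \le \lambda_N$. Combining $\delta_i - \leftgr{\delta}_i \ge \lambda_{\min}$ from the previous paragraph with $\delta_i \le \lambda_N$ gives $\leftgr{\delta}_i/\delta_i = 1 - (\delta_i - \leftgr{\delta}_i)/\delta_i \le 1 - \lambda_{\min}/\delta_i \le 1 - \lambda_{\min}/\lambda_N$, as desired.

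I expect the only delicate point to be the bookkeeping of positivity and the eigenvalue localization of $J_i$; once $0 < \leftgr{\delta}_{i-1} < \delta_{i-1}$ and $\delta_i \le \lambda_N$ are in hand, both inequalities are one-line consequences of the subtracted recurrence. The essential quantitative input is that the gap $\delta_i - \leftgr{\delta}_i$ never drops below $\lambda_{\min}$, which is exactly the monotone shift one expects from subtracting $\lambda_{\min} I$ from $J_i$.
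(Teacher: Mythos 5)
Your proof is correct and follows essentially the same route as the paper's: an induction on the scalar recurrences for $\delta_i$ and $\leftgr{\delta}_i$ to obtain \eqref{append:eq:7}, followed by the Rayleigh-quotient bounds $\delta_i \le \alpha_i \le \lambda_N$ to obtain \eqref{append:eq:8}. Your write-up is in fact slightly more careful than the paper's in two minor respects: you make explicit the positivity of the pivots of $J_i$ and $J_i - \lambda_{\min} I$ (which the paper's reciprocal comparison silently requires), and you carry the additive gap $\delta_i - \leftgr{\delta}_i \ge \lambda_{\min}$ through the induction, which is precisely what the paper's ratio chain $\leftgr{\delta}_i/\delta_i \le (\alpha_i - \lambda_{\min})/\alpha_i \le 1 - \lambda_{\min}/\lambda_N$ encodes implicitly.
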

\begin{proof} We prove~\eqref{append:eq:7} by induction. 
Since $\lambda_{\min} > 0$, $\delta_1 = \alpha_1 > \lambda_{\min}$ and $\leftgr{\delta}_1 = \alpha - \lambda_{\min}$ we know that $\leftgr{\delta}_1 < \delta_1$. Assume that $\leftgr{\delta}_i < \delta_i$ is true for all $i\le k$ and considering the $(k+1)$-th iteration:
\begin{equation*}
\leftgr{\delta}_{k+1} = \alpha_{k+1} - \lambda_{\min} - {\beta_k^2\over \leftgr{\delta}_k} < \alpha_{k+1} - {\beta_k^2 \over \delta_k} = \delta_{k+1}.
\end{equation*}

To prove \eqref{append:eq:8}, simply observe the following
\begin{align*}
  \frac{\leftgr{\delta}_i}{\delta_i} &= \frac{\alpha_i - \lambda_{\min} - \beta_{i-1}^2/ \leftgr{\delta}_{i-1}}{\alpha_i - \beta_{i-1}^2/\delta^{i-1}}\quad
\stackrel{\eqref{append:eq:7}}{\le}
\quad \frac{\alpha_i - \lambda_{\min}}{\alpha_i}
\le 1 - \frac{\lambda_{\min}}{\lambda_N}.\qedhere
\end{align*}
\end{proof}

With aforementioned lemmas we will be able to show how fast the difference between $\leftgr{g}_i$ and $g_i$ decays. Note that $\leftgr{g}_i$ gives an upper bound on the objective while $g_i$ gives a lower bound.
\begin{lemma}\label{append:lem:diffconv}
The difference between $\leftgr{g}_i$ and $g_i$ decreases linearly. More specifically we have
\begin{align*}
\leftgr{g}_i - g_i \le 2\kappa^+({\sqrt{\kappa} - 1\over \sqrt{\kappa} + 1})^i g_N
\end{align*}
where $\kappa^+ = \lambda_N/\lambda_{\min}$ and $\kappa$ is the condition number of $A$, i.e., $\kappa = \lambda_N / \lambda_1$.
\begin{proof}
We rewrite the difference $\leftgr{g}_i - g_i$ as follows
\begin{align*}
\leftgr{g}_i &- g_i = {\beta_i^2 c_i^2\over \delta_i(\leftgr{\alpha}_i\delta_i - \beta_i^2)} \\
&= {\beta_i^2 c_i^2\over \delta_i(\alpha_{i+1}\delta_i - \beta_i^2)}{\delta_i(\alpha_{i+1}\delta_i - \beta_i^2)\over \delta_i(\leftgr{\alpha}_i\delta_i - \beta_i^2)}\\
&= {\beta_i^2 c_i^2\over \delta_i(\alpha_{i+1}\delta_i - \beta_i^2)} \frac{1}{\bigl(\leftgr{\alpha}_i - \beta_i^2/ \delta_i\bigr) \big/ \bigl(\alpha_{i+1} - \beta_i^2/ \delta_i\bigr)} \\
&= {\beta_i^2 c_i^2\over \delta_i(\alpha_i\delta_i - \beta_i^2)} {1\over {1 - {\Delta_{i+1} / \delta_{i+1}}}},
\end{align*}
where $\Delta_{i+1} = \alpha_{i+1} - \leftgr{\alpha}_i$. Next, recall that 
$\frac{g_N - g_i}{g_N} \le 2\Bigl(\frac{\sqrt{\kappa} - 1}{\sqrt{\kappa} + 1}\Bigr)^i$. Since $g_i$ lower bounds $g_N$, we have
\begin{align*}
\Bigl(1 - 2\Bigl(\frac{\sqrt{\kappa} - 1}{\sqrt{\kappa} + 1}\Bigr)^i\Bigr) g_N \le g_i \le g_N,\\
\Bigl(1 - 2\Bigl(\frac{\sqrt{\kappa} - 1}{\sqrt{\kappa} + 1}\Bigr)^{i+1}\Bigr) g_N \le g_{i+1} \le g_N.
\end{align*}
Thus, we can conclude that
\begin{align*}
{\beta_i^2 c_i^2\over \delta_i(\alpha_i\delta_i - \beta_i^2)} &= g_{i+1} - g_i
\le 2\Bigl(\frac{\sqrt{\kappa} - 1}{\sqrt{\kappa} + 1}\Bigr)^i g_N.
\end{align*}
Now we focus on the term $\bigl(1 - \Delta_{i+1}/ \delta_{i+1}\bigr)^{-1}$. Using~\reflem{append:lem:delta} we know that $\Delta_{i+1} = \leftgr{\delta}_{i+1}$.
Hence,
\begin{align*}
1 - &\Delta_{i+1}/ \delta_{i+1} = 1 - \leftgr{\delta}_{i+1}/ \delta_{i+1} \\
&\ge 1 - (1 - \lambda_{\min}/\lambda_N) = \lambda_{\min}/ \lambda_N \triangleq\frac{1}{\kappa^+}.
\end{align*}
Finally we have
\begin{equation*}
\leftgr{g}_i - g_i ={\beta_i^2 c_i^2\over \delta_i(\alpha_i\delta_i - \beta_i^2)} \frac{1}{1 - \Delta_{i+1}/ \delta_{i+1}}  
\le 2\kappa^+\Bigl({\sqrt{\kappa} - 1\over \sqrt{\kappa} + 1}\Bigr)^i g_N. \qedhere
\end{equation*}
\end{proof}

\end{lemma}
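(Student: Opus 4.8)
The plan is to reduce the difference $\leftgr{g}_i - g_i$ to a product of two quantities, each of which I can already control: the one-step Gauss increment $g_{i+1} - g_i$, and a correction factor comparing the left Gauss-Radau diagonal modification against the next Gauss diagonal entry. The starting point is the explicit Sherman--Morrison recurrences of~\refalgo{append:algo:gql}, which give $\leftgr{g}_i - g_i = \frac{\beta_i^2 c_i^2}{\delta_i(\leftgr{\alpha}_i\delta_i - \beta_i^2)}$ and $g_{i+1} - g_i = \frac{\beta_i^2 c_i^2}{\delta_i(\alpha_{i+1}\delta_i - \beta_i^2)}$.

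First I would divide these two expressions. The common factor $\beta_i^2 c_i^2/\delta_i$ cancels, leaving the ratio $\frac{\alpha_{i+1} - \beta_i^2/\delta_i}{\leftgr{\alpha}_i - \beta_i^2/\delta_i}$. Recognizing from the recurrence that $\delta_{i+1} = \alpha_{i+1} - \beta_i^2/\delta_i$, and writing $\leftgr{\alpha}_i = \alpha_{i+1} - \Delta_{i+1}$ with $\Delta_{i+1} := \alpha_{i+1} - \leftgr{\alpha}_i$, this ratio collapses to $\frac{1}{1 - \Delta_{i+1}/\delta_{i+1}}$. Thus $\leftgr{g}_i - g_i = \frac{g_{i+1} - g_i}{1 - \Delta_{i+1}/\delta_{i+1}}$, and since both factors are positive, the claim reduces to bounding each of them above.

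Next I would bound the two factors separately. For the Gauss increment, monotonicity (\refthm{append:thm:monogauss}) gives $g_{i+1} \le g_N$, so $g_{i+1} - g_i \le g_N - g_i$, and the Gauss relative-error rate (\refthm{append:thm:gaussconv}) then yields $g_{i+1} - g_i \le 2\left(\frac{\sqrt{\kappa}-1}{\sqrt{\kappa}+1}\right)^i g_N$. For the correction factor, \reflem{append:lem:delta} identifies $\Delta_{i+1} = \leftgr{\delta}_{i+1}$, after which the comparison bound~\eqref{append:eq:8} of \reflem{append:lem:comp} gives $\leftgr{\delta}_{i+1}/\delta_{i+1} \le 1 - \lambda_{\min}/\lambda_N$, hence $1 - \Delta_{i+1}/\delta_{i+1} \ge \lambda_{\min}/\lambda_N = 1/\kappa^+$ and the factor is at most $\kappa^+$ (this lower bound also certifies positivity of the denominator). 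Multiplying the two bounds gives the stated estimate.

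The main obstacle is the algebraic reduction in the first step: one must spot that the seemingly unrelated quantities $\leftgr{\alpha}_i$, $\alpha_{i+1}$, $\delta_{i+1}$ and $\leftgr{\delta}_{i+1}$ are tied together precisely by $\Delta_{i+1} = \leftgr{\delta}_{i+1}$, which is exactly what \reflem{append:lem:delta} supplies. Without that identity the correction factor cannot be matched to the $\delta$/$\leftgr{\delta}$ ratio controlled in \reflem{append:lem:comp}, and the two halves of the argument would not combine. Everything else is routine once the difference is rewritten as $\frac{g_{i+1}-g_i}{1 - \leftgr{\delta}_{i+1}/\delta_{i+1}}$.
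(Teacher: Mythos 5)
Your proposal is correct and follows essentially the same route as the paper's own proof: the identical factorization $\leftgr{g}_i - g_i = (g_{i+1}-g_i)\cdot\bigl(1 - \Delta_{i+1}/\delta_{i+1}\bigr)^{-1}$, the same use of \reflem{append:lem:delta} to identify $\Delta_{i+1}=\leftgr{\delta}_{i+1}$, the bound from \reflem{append:lem:comp}, and the Gauss rate of \refthm{append:thm:gaussconv} for the increment. Your write-up is in fact slightly cleaner, since you cite~\eqref{append:eq:8} explicitly and avoid the paper's index slip ($\alpha_i$ in place of $\alpha_{i+1}$) in its final display.
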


\begin{theorem}[Relative error of left Gauss-Radau, \refthm{thm:lrconv} in the main text]
\label{append:thm:lrconv}
For left Gauss-Radau quadrature where the preassigned node is $\lambda_{\min}$, we have the following bound on relative error:
\begin{equation*}
  \frac{\leftgr{g}_i - g_N}{g_N} \le 2\kappa^+\Bigl(\frac{\sqrt{\kappa} - 1}{\sqrt{\kappa} + 1}\Bigr)^i,
\end{equation*}
where $\kappa^+ := \lambda_N/ \lambda_{\min},\ i<N$.
\begin{proof}
Write $\leftgr{g}_i = g_i + (\leftgr{g}_i - g_i)$. Since $g_i \le g_N$, using Lemma \ref{append:lem:diffconv} to bound the second term we obtain
\begin{equation*}
  \leftgr{g}_i \le g_N + 2\kappa^+\Bigl({\sqrt{\kappa} - 1\over \sqrt{\kappa} + 1}\Bigr)^i g_N,
\end{equation*}
from which the claim follows upon rearrangement.

\end{proof}

\end{theorem}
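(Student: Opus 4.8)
The plan is to exploit the additive decomposition $\leftgr{g}_i - g_N = (\leftgr{g}_i - g_i) + (g_i - g_N)$, which splits the left Gauss-Radau error into a part we already control and a part with a favorable sign. First I would observe that the second term is nonpositive: by the exactness result (\reflem{append:lem:exact}) we have $g_N = u^\top A^{-1} u$, and since Gauss quadrature produces lower bounds (\reflem{append:lem:bounds})---equivalently, since $g_i$ is monotonically nondecreasing toward $g_N$ by \refthm{append:thm:monogauss}---it follows that $g_i \le g_N$, whence $g_i - g_N \le 0$.

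Next I would bound the first term directly using \reflem{append:lem:diffconv}, which states $\leftgr{g}_i - g_i \le 2\kappa^+\bigl((\sqrt{\kappa}-1)/(\sqrt{\kappa}+1)\bigr)^i g_N$. Combining the two observations gives
\[
  \leftgr{g}_i - g_N = (\leftgr{g}_i - g_i) + (g_i - g_N) \le 2\kappa^+\Bigl(\frac{\sqrt{\kappa}-1}{\sqrt{\kappa}+1}\Bigr)^i g_N .
\]
Dividing through by $g_N > 0$ (positivity of $g_N = u^\top A^{-1} u$ follows from the positive definiteness of $A$ and $u \neq 0$) yields the claimed relative error bound.

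In terms of difficulty, the real work has been front-loaded into \reflem{append:lem:diffconv}, so this theorem is essentially a one-line corollary once that lemma is in hand. The only points requiring care are bookkeeping: confirming the sign $g_i - g_N \le 0$ (so that dropping this term only weakens the inequality in the correct direction) and ensuring $g_N > 0$ so that dividing preserves the inequality. I do not anticipate a genuine obstacle here; the substantive step---controlling $\leftgr{g}_i - g_i$ by relating left Gauss-Radau to Gauss through the factor $(1 - \Delta_{i+1}/\delta_{i+1})^{-1}$ and bounding $1 - \leftgr{\delta}_{i+1}/\delta_{i+1} \ge 1/\kappa^+$ via \reflem{append:lem:comp}---lives entirely in the proof of \reflem{append:lem:diffconv}, which I am permitted to assume.
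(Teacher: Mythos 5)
Your proposal is correct and follows essentially the same route as the paper: both decompose the error as $\leftgr{g}_i - g_N = (\leftgr{g}_i - g_i) + (g_i - g_N)$, invoke \reflem{append:lem:diffconv} for the first term and the lower-bound property $g_i \le g_N$ for the second, then divide by $g_N$. Your extra remarks on the sign of $g_i - g_N$ and the positivity of $g_N$ are just slightly more explicit bookkeeping of steps the paper leaves implicit.
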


Due to the relations between left Gauss-Radau and Gauss-Lobatto, we have the following corollary:

\begin{corollary}[Relative error of Gauss-Lobatto, \refcor{cor:loconv} in the main text]
\label{append:cor:loconv}
For Gauss-Lobatto quadrature, we have the following bound on relative error:
\begin{align}
\frac{\lobatto{g}_i - g_N}{g_N} \le 2\kappa^+\Bigl(\frac{\sqrt{\kappa} - 1}{\sqrt{\kappa} + 1}\Bigr)^{i-1},
\end{align}
where $\kappa^+ := \lambda_N/ \lambda_{\min}\text{ and }i<N$.
\end{corollary}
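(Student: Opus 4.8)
The plan is to obtain this corollary directly from the sandwiching relation between Gauss-Lobatto and left Gauss-Radau in \refthm{append:thm:upbtwn}, combined with the linear convergence rate for left Gauss-Radau from \refthm{append:thm:lrconv}. No fresh manipulation of the Lanczos recurrences is required; the whole argument is a reindex-and-substitute.

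First I would reindex the left inequality of \refthm{append:thm:upbtwn}. That theorem gives $\lobatto{g}_{i+1} \le \leftgr{g}_i$ for $i < N$; replacing $i$ by $i-1$ yields $\lobatto{g}_i \le \leftgr{g}_{i-1}$, which is valid for all $i$ with $i-1 < N$, hence in particular for every $i < N$ as stated. In words, the $i$-th Gauss-Lobatto iterate is never a worse upper bound than the $(i-1)$-th left Gauss-Radau iterate, so bounding the latter suffices.

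Next I would apply \refthm{append:thm:lrconv} at index $i-1$, obtaining $\leftgr{g}_{i-1} - g_N \le 2\kappa^+\bigl(\tfrac{\sqrt{\kappa}-1}{\sqrt{\kappa}+1}\bigr)^{i-1} g_N$; this is legitimate because $i < N$ forces $i-1 < N$. Since $g_N = u^\top A^{-1} u > 0$ and $\lobatto{g}_i \le \leftgr{g}_{i-1}$, subtracting $g_N$ preserves the inequality, giving $\lobatto{g}_i - g_N \le \leftgr{g}_{i-1} - g_N$. Dividing through by $g_N > 0$ then produces $\tfrac{\lobatto{g}_i - g_N}{g_N} \le 2\kappa^+\bigl(\tfrac{\sqrt{\kappa}-1}{\sqrt{\kappa}+1}\bigr)^{i-1}$, which is exactly the claim.

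There is no substantive obstacle here, as the proof is a two-line chaining of two earlier results. The only point requiring care is the index bookkeeping: one must track that the extra $\lobatto{}\!\to\!\leftgr{}$ step costs one iteration, so that invoking the left Gauss-Radau rate at $i-1$ (rather than $i$) is precisely what turns the exponent $i$ into $i-1$, while keeping the same prefactor $2\kappa^+$ and the same geometric ratio $\tfrac{\sqrt{\kappa}-1}{\sqrt{\kappa}+1}$.
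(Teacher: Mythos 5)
Your proof is correct and is precisely the argument the paper intends: the paper derives \refcor{append:cor:loconv} ``due to the relations between left Gauss-Radau and Gauss-Lobatto,'' i.e., from $\lobatto{g}_i \le \leftgr{g}_{i-1}$ (the reindexed left inequality of \refthm{append:thm:upbtwn}) chained with \refthm{append:thm:lrconv} at index $i-1$, exactly as you do. Your write-up in fact makes explicit the index bookkeeping that the paper leaves implicit, including why the exponent drops to $i-1$ while the prefactor $2\kappa^+$ and ratio are unchanged.
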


\section{Generalization: Symmetric Matrices}
\label{append:sec:general}
In this section we consider the case where $u$ lies in the column space of several top eigenvectors of $A$, and discuss how the aforementioned theorems vary. In particular, note that the previous analysis assumes that $A$ is positive definite. With our analysis in this section we relax this assumption to the more general case where $A$ is symmetric with simple eigenvalues, though we require $u$ to lie in the space spanned by eigenvectors of $A$ corresponding to positive eigenvalues.

We consider the case where $A$ is symmetric and has the eigendecomposition of $A = Q \Lambda Q^\top = \sum_{i=1}^{N} \lambda_i q_i q_i^\top$ where $\lambda_i$'s are eigenvalues of $A$ increasing with $i$ and $q_i$'s are corresponding eigenvectors.
Assume that $u$ lies in the column space spanned by top $k$ eigenvectors of $A$ where all these $k$ eigenvectors correspond to positive eigenvalues.
Namely we have $u\in \text{Span}\{\{q_i\}_{i=N-k+1}^N\}$ and $0 < \lambda_{N-k+1}$.

Since we only assume that $A$ is symmetric, it is possible that $A$ is singular and thus we consider the value of $u^\top A^\dagger u$, where $A^\dagger$ is the pseudo-inverse of $A$. Due to the constraints on $u$ we have
\begin{align*}
u^\top A^\dagger u 
= u^\top Q \Lambda^\dagger Q^\top u
= u^\top Q_k \Lambda_k^\dagger Q_k^\top u
= u^\top B^\dagger u,
\end{align*}
where $B = \sum_{i=N-k+1}^N \lambda_i q_i q_i^\top$. Namely, if $u$ lies in the column space spanned by the top $k$ eigenvectors of $A$ then it is equivalent to substitute $A$ with $B$, which is the truncated version of $A$ at top $k$ eigenvalues and corresponding eigenvectors.

Another key observation is that,
given that $u$ lies only in the space spanned by $\{q_i\}_{i=N-k+1}^N$,
the Krylov space starting at $u$ becomes
\begin{align}
\text{Span}\{u, Au, A^2u,\ldots\} = \text{Span}\{u, Bu, B^2u,\ldots, B^{k-1}u\}
\end{align}
This indicates that Lanczos iteration starting at matrix $A$ and vector $u$ will finish constructing the corresponding Krylov space after the $k$-th iteration.
Thus under this condition, \refalgo{algo:gql} will run at most $k$ iterations and then stop. At that time, the eigenvalues of $J_k$ are exactly the eigenvalues of $B$, thus they are exactly $\{\lambda_i\}_{i=N-k+1}^N$ of $A$. Using similar proof as in~\reflem{append:lem:exact}, we can obtain the following generalized exactness result.
\begin{corollary}[Generalized Exactness]
$g_k$, $\rightgr{g}_k$ and $\leftgr{g}_k$ are exact for $u^\top A^\dagger u = u^\top B^\dagger u$, namely
\begin{align*}
g_k = \rightgr{g}_k = \leftgr{g}_k = u^\top A^\dagger u = u^\top B^\dagger u.
\end{align*}
\end{corollary}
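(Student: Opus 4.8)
The plan is to mirror the proof of Lemma~\ref{append:lem:exact}, replacing the full spectrum of $A$ by the $k$ points of increase of the measure $\alpha$. First I would record the two structural facts already established in this section: since $u \in \sspan\{q_i\}_{i=N-k+1}^N$, the component $\tilde u_j = q_j^\top u$ vanishes for $j \le N-k$, so $\alpha$ has points of increase only at the top $k$ eigenvalues and $I[f] = \sum_{i=N-k+1}^N f(\lambda_i)\tilde u_i^2$; and the Krylov space generated by $(A,u)$ coincides with the $A$-invariant subspace $\sspan\{q_i\}_{i=N-k+1}^N$, so Lanczos terminates after exactly $k$ steps with $\beta_k = 0$. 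Here $f(\lambda)=1/\lambda$ is the correct integrand because on the range of $A^\dagger$ (equivalently of $B$) the pseudo-inverse acts as $1/\lambda$, and we take $0 < \lambda_{\min} < \lambda_{N-k+1}$ so that $f$ is smooth on $[\lambda_{\min},\lambda_{\max}]$.

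Next I would argue that the Jacobi matrix $J_k$ produced at termination has eigenvalues exactly $\{\lambda_i\}_{i=N-k+1}^N$. Writing $J_k = V_k^\top A V_k$ with $V_k$ an orthonormal basis of the Krylov space, and noting that $A q_i = \lambda_i q_i = B q_i$ for $i \ge N-k+1$ gives $A V_k = B V_k$, we obtain $J_k = V_k^\top B V_k$; since the columns of $V_k$ span the range of $B$, this is the orthogonal compression of $B$ onto its own range, hence similar to $B$ restricted to that range and carrying precisely the nonzero eigenvalues of $B$. Consequently the Gauss nodes, being the eigenvalues of $J_k$, coincide with the support points of $\alpha$, so the node polynomial $\pi_k(\lambda) = \prod_{i=N-k+1}^N(\lambda - \lambda_i)$ vanishes at every point of increase of $\alpha$.

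The exactness then follows from the remainder formula exactly as in Lemma~\ref{append:lem:exact}. For Gauss quadrature, $R_k[f] = \frac{f^{(2k)}(\xi)}{(2k)!} I[\pi_k^2]$ and $I[\pi_k^2] = \sum_{i=N-k+1}^N \pi_k(\lambda_i)^2 \tilde u_i^2 = 0$ since each $\pi_k(\lambda_i) = 0$; thus $g_k = I[f] = u^\top B^\dagger u$. For left and right Gauss-Radau, after $k$ steps the modified matrix $J_k'$ has $\beta_k = 0$ with the prescribed node $\tau_1 \in \{\lambda_{\min},\lambda_{\max}\}$ adjoined, but its remaining eigenvalues are still $\{\lambda_i\}_{i=N-k+1}^N$, so the same $\pi_k$ appears and $R_k[f] = \frac{f^{(2k)}(\xi)}{(2k)!} I[(\lambda-\tau_1)\pi_k^2] = 0$ by the same vanishing. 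Combining with the identity $u^\top A^\dagger u = u^\top B^\dagger u$ recorded in the text yields $g_k = \rightgr{g}_k = \leftgr{g}_k = u^\top A^\dagger u$.

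The main obstacle I anticipate is the second step: justifying cleanly that restricting Lanczos to the $k$-dimensional invariant subspace yields a $J_k$ whose spectrum is exactly the top $k$ eigenvalues, together with confirming that $1/\lambda$ is the legitimate integrand on the range of the pseudo-inverse and that the classical remainder formula (with its $f^{(2k)}$ term) still applies once $\lambda_{\min}$ is chosen positive and below $\lambda_{N-k+1}$. Once these are in place, the remainder of the argument is bookkeeping identical to the positive-definite case.
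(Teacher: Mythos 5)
Your proof is correct and takes essentially the same route as the paper's: the paper's own argument consists of noting that the Krylov space terminates after $k$ Lanczos steps, that the spectrum of $J_k$ is exactly $\{\lambda_i\}_{i=N-k+1}^N$ (the nonzero eigenvalues of $B$), and then invoking the remainder-formula argument of Lemma~\ref{append:lem:exact} verbatim. You additionally make rigorous two steps the paper leaves implicit---the compression identity $J_k = V_k^\top B V_k$ showing why the nodes coincide with the support of $\alpha$, and the explicit evaluation $I[\pi_k^2] = \sum_{i=N-k+1}^N \pi_k(\lambda_i)^2 \tilde{u}_i^2 = 0$---so your write-up is a faithful, more detailed version of the same proof.
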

The monotonicity and the relations between bounds given by various Gauss-type quadratures will still be the same as in the original case in~Section~\ref{sec:main}, 
but the original convergence rate cannot apply in this case because now we probably have $\lambda_{\min}(B) = 0$, making $\kappa$ undefined.
This crash of convergence rate results from the crash of the convergence of the corresponding conjugate gradient algorithm for solving $Ax = u$.
However, by looking at the proof of, e.g.,~\cite{shewchuk1994introduction}, and by noting that $\lambda_1(B) = \ldots = \lambda_{N-k}(B) = 0$, with a slight modification of the proof we actually obtain the bound
\begin{align*}
\|\eps^i\|_A^2 \le \min_{P_i} \max_{\lambda\in\{\lambda_i\}_{i=N-k+1}^N} [P_i(\lambda)]^2 \|\eps^0\|_A^2,
\end{align*}
where $P_i$ is a polynomial of order $i$. By using properties of Chebyshev polynomials and following the original proof~(e.g.,~\cite{golub2009matrices} or~\cite{shewchuk1994introduction}) we obtain the following lemma for conjugate gradient.
\begin{lemma}
  Let $\eps^k$ be as before (for conjugate gradient). Then,
\begin{align*}
\|\eps^k\|_A \le 2\Bigl(\frac{\sqrt{\kappa'} - 1}{\sqrt{\kappa'} + 1})^k\|\eps_0\|_A,\quad\text{where}\ \kappa' := \lambda_N/\lambda_{N-k+1}.
\end{align*}
\end{lemma}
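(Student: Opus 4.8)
The plan is to follow the classical minimax proof of conjugate gradient convergence almost verbatim, but with the full spectral interval replaced by the interval $[\lambda_{N-k+1}, \lambda_N]$ spanned by the positive eigenvalues that actually support $u$. The displayed inequality immediately preceding the lemma already carries out the essential reduction: because $u$ (hence $r_0 = u$ and $\eps_0 = x^\ast = A^\dagger u$) lies in $\text{Span}\{q_{N-k+1},\dots,q_N\}$, expanding $\eps_0$ in the eigenbasis of $A$ shows that all of its components along the null-space directions $q_1,\dots,q_{N-k}$ vanish. Writing $\eps_i = P_i(A)\eps_0$ for the optimal CG residual polynomial $P_i$ (degree $\le i$, $P_i(0)=1$), the energy norm $\|\eps_i\|_A^2 = \sum_{j} \lambda_j [P_i(\lambda_j)]^2 \langle \eps_0, q_j\rangle^2$ therefore only receives contributions from $j \in \{N-k+1,\dots,N\}$, giving
\begin{equation*}
\|\eps^i\|_A^2 \le \min_{\substack{\deg P_i \le i\\ P_i(0)=1}} \max_{\lambda \in \{\lambda_j\}_{j=N-k+1}^N} [P_i(\lambda)]^2 \,\|\eps^0\|_A^2.
\end{equation*}

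First I would enlarge the discrete maximum over $\{\lambda_{N-k+1},\dots,\lambda_N\}$ to the continuous maximum over the enclosing interval $[\lambda_{N-k+1}, \lambda_N]$, which can only increase the right-hand side. Then I would invoke the standard shifted-and-scaled Chebyshev construction: among all polynomials of degree $\le i$ normalized to value $1$ at the origin, the one minimizing the sup norm on $[a,b]$ with $0<a<b$ is obtained from the Chebyshev polynomial $T_i$ mapped affinely from $[a,b]$ onto $[-1,1]$, and its sup norm is bounded by $2\bigl((\sqrt{b/a}-1)/(\sqrt{b/a}+1)\bigr)^i$. Taking $a=\lambda_{N-k+1}$ and $b=\lambda_N$ gives $b/a=\kappa'$ and hence
\begin{equation*}
\|\eps^i\|_A^2 \le 4\Bigl(\frac{\sqrt{\kappa'}-1}{\sqrt{\kappa'}+1}\Bigr)^{2i}\|\eps^0\|_A^2;
\end{equation*}
taking square roots finishes the proof. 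This is exactly the argument behind \refthm{append:thm:cgconv}, so no new machinery is required beyond substituting the correct interval.

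The only genuinely delicate point, and the place where the hypothesis on $u$ is indispensable, is ensuring that the normalization point $0$ lies strictly outside the approximation interval. Since $\lambda_{N-k+1} > 0$ by assumption, the interval $[\lambda_{N-k+1}, \lambda_N]$ is bounded away from $0$, so the Chebyshev estimate applies and $\kappa' = \lambda_N/\lambda_{N-k+1}$ is finite; had any supporting eigenvalue been zero, the constraint $P_i(0)=1$ would force the minimax quantity to stay bounded below and the geometric rate would collapse (mirroring the earlier remark that $\lambda_{\min}(B)=0$ makes $\kappa$ undefined). I would therefore spell out the reduction to the positive eigenvalues explicitly, emphasize that the null-space eigenvalues are harmless precisely because they never enter the maximum, and only then apply the Chebyshev bound with $\kappa'$ in place of $\kappa$.
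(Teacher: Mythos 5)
Your proposal is correct and takes essentially the same route as the paper: the paper's own proof is a brief sketch that modifies the classical Chebyshev minimax argument (citing Shewchuk) by observing that the zero eigenvalues of $B$ never enter the maximum, so the relevant interval becomes $[\lambda_{N-k+1},\lambda_N]$ and the effective condition number is $\kappa'$. Your write-up simply fills in the details of that same reduction---the eigenbasis expansion of $\eps_0$, the restriction of the discrete maximum to the supporting eigenvalues, and the shifted Chebyshev bound---more explicitly than the paper does.
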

Following this new convergence rate and connections between conjugate gradient, Lanczos iterations and Gauss quadrature mentioned in Section~\ref{sec:main}, we have the following convergence bounds.
\begin{corollary}[Convergence Rate for Special Case]\label{append:cor:spec_conv}
Under the above assumptions on $A$ and $u$, due to the connection Between Gauss quadrature, Lanczos algorithm and Conjugate Gradient, the relative convergence rates of $g_i$, $\rightgr{g}_i$, $\leftgr{g}_i$ and $\lobatto{g}_i$ are given by
\begin{align*}
{g_k - g_i\over g_k} &\le 2\Bigl({\sqrt{\kappa'} - 1\over \sqrt{\kappa'} + 1}\Bigr)^i\\
{g_k - \rightgr{g}_i\over g_k} &\le 2\Bigl({\sqrt{\kappa'} - 1\over \sqrt{\kappa'} + 1}\Bigr)^i\\
{\leftgr{g}_i - g_k\over g_k} &\le 2\kappa_m'\Bigl({\sqrt{\kappa'} - 1\over \sqrt{\kappa'} + 1}\Bigr)^i\\
{\lobatto{g}_i - g_k\over g_k} &\le 2\kappa_m'\Bigl({\sqrt{\kappa'} - 1\over \sqrt{\kappa'} + 1}\Bigr)^i,
\end{align*}
where $\kappa_m' = \lambda_N/\lambda_{\min}'$ and $0 < \lambda_{\min}' < \lambda_{N-k+1}$ is a lowerbound for nonzero eigenvalues of $B$.
\end{corollary}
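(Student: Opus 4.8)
The plan is to obtain all four bounds by transcribing the positive-definite proofs (\refthm{append:thm:gaussconv}, \refthm{append:thm:rrconv}, \refthm{append:thm:lrconv}, and \refcor{append:cor:loconv}) into the restricted setting, making the substitutions $N \mapsto k$, $g_N \mapsto g_k$, $\kappa \mapsto \kappa'$, and $\kappa^+ \mapsto \kappa_m'$. Three facts already established in this section make these substitutions legitimate: (i) the Krylov space generated by $u$ has dimension $k$, so Lanczos terminates at iteration $k$ with $J_k$ having spectrum exactly $\{\lambda_i\}_{i=N-k+1}^N$; (ii) the Generalized Exactness corollary, which gives $g_k = \rightgr{g}_k = \leftgr{g}_k = u^\top A^\dagger u$, so that $g_k$ now plays the role previously played by $g_N$; and (iii) the modified conjugate gradient rate lemma above, in which the zero eigenvalues of $B$ are excluded and the effective condition number becomes $\kappa' = \lambda_N/\lambda_{N-k+1}$.

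I would first handle Gauss and right Gauss-Radau, which need no new estimates. For $g_i$, set $x_0 = 0$ and $b = u$ in CG; since all residuals lie in the $k$-dimensional subspace $\text{Span}\{q_i\}_{i=N-k+1}^N$, \refthm{append:thm:relation} gives $\|\eps_i\|_A^2 = \|u\|^2([J_k^{-1}]_{1,1} - [J_i^{-1}]_{1,1}) = g_k - g_i$, and feeding this into the modified CG rate reproduces the first displayed bound verbatim with $g_k$ and $\kappa'$ in place of $g_N$ and $\kappa$. For $\rightgr{g}_i$, the interlacing $g_i \le \rightgr{g}_i \le g_{i+1}$ of \refthm{append:thm:lowbtwn} continues to hold (its proof uses only positivity of the preassigned node and the sign pattern of the Lanczos polynomials, both unaffected by the restriction), so $g_k - \rightgr{g}_i \le g_k - g_i$ and the same estimate follows.

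For left Gauss-Radau I would re-run \reflem{append:lem:diffconv}. Its algebraic identity $\leftgr{g}_i - g_i = (g_{i+1}-g_i)(1 - \Delta_{i+1}/\delta_{i+1})^{-1}$ is a pure consequence of the GQL recurrences and is therefore unchanged, and its first factor is bounded by the Gauss rate just proved. The only place positive-definiteness entered was the estimate $1 - \Delta_{i+1}/\delta_{i+1} = 1 - \leftgr{\delta}_{i+1}/\delta_{i+1} \ge 1/\kappa^+$ coming from \reflem{append:lem:comp}; here I would replace $\lambda_{\min}$ by a strictly positive lower bound $\lambda_{\min}'$ on the nonzero eigenvalues of $B$, so that (using $\alpha_i \le \lambda_N$, still a Rayleigh quotient) one gets $\leftgr{\delta}_{i+1}/\delta_{i+1} \le 1 - \lambda_{\min}'/\lambda_N$ and hence the factor $\kappa_m' = \lambda_N/\lambda_{\min}'$. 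Writing $\leftgr{g}_i = g_i + (\leftgr{g}_i - g_i)$ and using $g_i \le g_k$ then yields the third bound, and Gauss-Lobatto follows exactly as in \refcor{append:cor:loconv}, from the relation $\lobatto{g}_{i+1} \le \leftgr{g}_i$ of \refthm{append:thm:upbtwn} combined with the left Gauss-Radau estimate.

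The main obstacle is entirely in item (iii) and the $\lambda_{\min} \mapsto \lambda_{\min}'$ replacement: one must verify that the positive-definite bookkeeping — positivity of $\delta_i$, the sign corollary for the Lanczos polynomials, and the Chebyshev-polynomial argument underlying CG convergence — survives when $B$ is only positive semidefinite on $\reals^N$ but positive definite on the $k$-dimensional relevant subspace. Since every Lanczos iterate, residual, and error vector lives in $\text{Span}\{q_i\}_{i=N-k+1}^N$, on which $B$ acts as a genuine SPD operator with spectrum in $[\lambda_{N-k+1},\lambda_N]$, the restricted operator supplies precisely the positive-definite structure the earlier proofs require; the care lies in confirming that the range of integration can be taken as $[\lambda_{\min}',\lambda_N^+]$ with $0 < \lambda_{\min}' < \lambda_{N-k+1}$, so that the Gauss-Radau and Gauss-Lobatto modifications remain well defined and the Chebyshev minimax estimate is taken only over the nonzero eigenvalues.
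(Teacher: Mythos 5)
Your proposal is correct and takes essentially the same route as the paper, which states this corollary with no explicit proof beyond noting that it follows from the modified CG rate lemma, the generalized exactness result, and the connections between CG, Lanczos, and quadrature --- precisely the three ingredients you assemble, including the key observation that all iterates live in the span of the top $k$ eigenvectors where $B$ acts as an SPD operator, so the earlier sign and positivity bookkeeping survives with $\lambda_{\min}'$ in place of $\lambda_{\min}$. One caveat: your Gauss-Lobatto step via $\lobatto{g}_i \le \leftgr{g}_{i-1}$ yields exponent $i-1$ rather than the stated $i$; this matches the main text's Corollary~\ref{cor:loconv}, and the exponent $i$ in the appendix statement appears to be a typo in the paper, so your derivation establishes exactly what this argument can deliver.
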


\section{Accelerating MCMC for $k$-\dpp}\label{append:sec:kdppalgo}
We present details of a \emph{Retrospective Markov Chain Monte Carlo (MCMC)} in~\refalgo{append:algo:gausskdpp} and~\refalgo{append:algo:kdpp_judge} that samples for efficiently drawing samples from a $k$-\dpp, by accelerating it using our results on Gauss-type quadratures.

\begin{algorithm}
	\caption{Gauss-$k$\dpp($L,k$)}\label{append:algo:gausskdpp}
	\begin{algorithmic} 
	\REQUIRE{$L$ the kernel matrix we want to sample \dpp from, $k$ the size of subset and $\cy = [N]$ the ground set}
	\ENSURE{$Y$ sampled from exact $k$\dpp($L$) where $|Y| = k$}
	\STATE Randomly Initialize $Y\subseteq \cy$ where $|Y| = k$
	\WHILE{not mixed}
		\STATE Pick $v\in Y$ and $u\in\cy\backslash Y$ uniformly randomly
		\STATE Pick $p\in(0,1)$ uniformly randomly	
		\STATE $Y' = Y\backslash\{v\}$	
		\STATE Get lower and upper bounds $\lambda_{\min}$, $\lambda_{\max}$ of the spectrum of $L_{Y'}$
		\IF{$k$-\dpp-JudgeGauss($pL_{v,v} - L_{u,u}, p, L_{Y',u}, L_{Y',v}, \lambda_{\min}, \lambda_{\max}$) = {\it True}}
			\STATE $Y = Y'\cup\{u\}$
		\ENDIF
	\ENDWHILE
\end{algorithmic}
\end{algorithm}

\begin{algorithm}
	\caption{$k$\dpp-JudgeGauss($t, p, u, v, A, \lambda_{\min}, \lambda_{\max}$)}\label{append:algo:kdpp_judge}
	\begin{algorithmic} 
	\REQUIRE{$t$ the target value, $p$ the scaling factor, $u$, $v$ and $A$ the corresponding vectors and matrix, $\lambda_{\min}$ and $\lambda_{\max}$ lower and upper bounds for the spectrum of $A$}
	\ENSURE{Return \textit{True} if $t < p (v^\top A^{-1} v) -  u^\top A^{-1} u$, \textit{False} if otherwise}
	\STATE $u_{-1} = 0$, $u_0 = u / \|u\|$, $i^u = 1$, $\beta_{0}^u = 0$, $d^u = \infty$
	\STATE $v_{-1} = 0$, $v_0 = v / \|v\|$, $i^v = 1$, $\beta_0^v = 0$, $d^v = \infty$
	\WHILE{{\it True}}
		\IF{$d^u > pd^v$}
    		\STATE Run one more iteration of Gauss-Radau on $u^\top A^{-1}u$ to get tighter $(\leftgr{g})^u$ and $(\rightgr{g})^u$
		    \STATE $d^u = (\leftgr{g})^u - (\rightgr{g})^u$
		\ELSE
			\STATE Run one more iteration of Gauss-Radau on $v^\top A^{-1}v$ to get tighter $(\leftgr{g})^v$ and $(\rightgr{g})^v$
    		\STATE $d^v = (\leftgr{g})^v - (\rightgr{g})^v$
		\ENDIF
		\IF{$t < p\|v\|^2(\rightgr{g})^v - \|u\|^2 (\leftgr{g})^u$}
			\STATE Return \textit{True}
		\ELSIF{$t \ge p\|v\|^2(\leftgr{g})^v - \|u\|^2 (\rightgr{g})^u$}
			\STATE Return \textit{False}
		\ENDIF
	\ENDWHILE
\end{algorithmic}
\end{algorithm}

\section{Accelerating Stochastic Double Greedy}\label{append:sec:dgalgo}

We present details of \emph{Retrospective Stochastic Double Greedy} in~\refalgo{append:algo:gaussdg} and~\refalgo{append:algo:dg_judge} that efficiently select a subset $Y\in\cy$ that approximately maximize $\log\det(L_Y)$.

\begin{algorithm}
	\caption{Gauss-\dg($L$)}\label{append:algo:gaussdg}
	\begin{algorithmic} 
	\REQUIRE{$L$ the kernel matrix and $\cy = [N]$ the ground set}
	\ENSURE{$X\in\cy$ that approximately maximize $\log\det(L_Y)$}
	\STATE $X_0 = \emptyset$, $Y_0=\cy$
	\FOR{$i=1,2,\ldots,N$}
		\STATE $Y_i' = Y_{i-1}\backslash\{i\}$
		\STATE Sample $p\in(0,1)$ uniformly randomly
		\STATE Get lower and upper bounds $\lambda_{\min}^-,\lambda_{\max}^-,\lambda_{\min}^+,\lambda_{\max}^+$ of the spectrum of $L_{X_{i-1}}$ and $L_{Y_i'}$ respectively
		\IF{\dg-JudgeGauss($L_{X_{i-1}}, L_{Y_i'}, L_{X_{i-1},i}, L_{Y_i',i}, L_{i,i}, p, \lambda_{\min}^-,\lambda_{\max}^-,\lambda_{\min}^+,\lambda_{\max}^+$) = {\it True}}
			\STATE $X_i = X_{i-1}\cup\{i\}$
		\ELSE
			\STATE $Y_i = Y_i'$
		\ENDIF
	\ENDFOR
\end{algorithmic}
\end{algorithm}

\begin{algorithm}
	\caption{\dg-JudgeGauss($A, B, u, v, t, p, \lambda_{\min}^A,\lambda_{\max}^A,\lambda_{\min}^B,\lambda_{\max}^B$)}\label{append:algo:dg_judge}
	\begin{algorithmic} 
	\REQUIRE{$t$ the target value, $p$ the scaling factor, $u$, $v$, $A$ and $B$ the corresponding vectors and matrix, $\lambda_{\min}^A$, $\lambda_{\max}^A$, $\lambda_{\min}^B$, $\lambda_{\max}^B$ lower and upper bounds for the spectrum of $A$ and $B$}
	\ENSURE{Return \textit{True} if $p |\log(t - u^\top A^{-1} u)|_+ \le (1-p) |-\log(t-v^\top B^{-1} v)|_+$, \textit{False} if otherwise}
	\STATE $d^u = \infty$, $d^v = \infty$
	\WHILE{{\it True}}
		\IF{$pd^u > (1-p)d^v$}
    		\STATE Run one more iteration of Gauss-Radau on $u^\top A^{-1}u$ to get tighter lower and upper bounds $l^u$, $u^u$ for $|\log(t - u^\top A^{-1} u)|_+$
		    \STATE $d^u = u^u - l^u$
		\ELSE
			\STATE Run one more iteration of Gauss-Radau on $v^\top B^{-1}v$ to get tighter lower and upper bounds $l^v$, $u^v$ for $|\log(t - v^\top B^{-1} v)|_+$
    		\STATE $d^v = u^v - l^v$
		\ENDIF
		\IF{$pu^u\le (1-p)l^v$}
			\STATE Return \textit{True}
		\ELSIF{$pl^u > (1-p)u^v$}
			\STATE Return \textit{False}
		\ENDIF
	\ENDWHILE
\end{algorithmic}
\end{algorithm}

\end{appendix}

\end{document}